\def\dist{\text{dist}}
\def\cls{\text{cls}}
\def\E{\mathbb{E}}
\def\Var{\mathbb V}
\def\P{\mathbb P}
\def\R{\mathbb R}
\def\Beta{\text{Beta}}
\def\SA{\text{SA}}
\def\ConfTr{\text{CTr}}
\def\DPSM{\text{DM}}
\def\UA{\text{CUT}}
\def\QR{\text{QR}}
\def\dist{\text{dist}}
\def\dom{\text{dom}}
\def\Beta{{\text{Beta}}}
\def\Sigmoid{{\text{Sigmoid}}}
\def\indicator{\mathbb{1}}
\def\APS{\text{APS}}
\def\HPS{\text{HPS}}
\def\RAPS{\text{RAPS}}
\def\reg{\text{reg}}
\def\calX{\mathcal X}
\def\calY{\mathcal Y}
\def\calU{\mathcal U}
\def\calC{\mathcal C}
\def\calD{\mathcal D}
\def\calP{\mathcal P}
\def\calQ{\mathcal Q}
\def\calB{\mathcal B}
\def\calF{\mathcal F}
\def\calL{\mathcal L}
\def\E{\mathbb E}
\def\P{\mathbb P}
\def\R{\mathbb R}
\def\min{\text{min}}
\def\sup{\text{sup}}
\def\max{\text{max}}
\def\tr{\text{tr}}
\def\cal{\text{cal}}
\def\test{\text{test}}
\theoremstyle{plain}
\newtheorem{theorem}{Theorem}[section]
\newtheorem{proposition}[theorem]{Proposition}
\newtheorem{lemma}[theorem]{Lemma}
\newtheorem{corollary}[theorem]{Corollary}
\theoremstyle{definition}
\newtheorem{definition}[theorem]{Definition}
\newtheorem{assumption}[theorem]{Assumption}
\theoremstyle{remark}
\newtheorem{remark}[theorem]{Remark}
\icmltitlerunning{ Direct Prediction Set Minimization via Bilevel Conformal Classifier Training }
\begin{document}

\twocolumn[
% \icmltitle{Direct Prediction Set Minimization via Conformal Training of Deep Classifiers}
\icmltitle{Direct Prediction Set Minimization via Bilevel Conformal Classifier Training}

\icmlsetsymbol{equal}{*}

\begin{icmlauthorlist}
\icmlauthor{Yuanjie Shi}{equal,yyy}
\icmlauthor{Hooman Shahrokhi}{equal,yyy}
\icmlauthor{Xuesong Jia}{yyy}
\icmlauthor{Xiongzhi Chen}{sch}
\icmlauthor{Janardhan Rao Doppa}{yyy}
\icmlauthor{Yan Yan}{yyy}
%\icmlauthor{}{sch}
%\icmlauthor{}{sch}
\end{icmlauthorlist}

\icmlaffiliation{yyy}{School of Electrical Engineering and Computer Science, Washington State University, Pullman, Washington, USA}
% \icmlaffiliation{comp}{Company Name, Location, Country}
\icmlaffiliation{sch}{Department of Mathematics and Statistics,  Washington State University, Pullman, Washington, USA}

\icmlcorrespondingauthor{Yan Yan}{yan.yan1@wsu.edu}
\icmlcorrespondingauthor{Janardhan Rao Doppa}{jana.doppa@wsu.edu}

% You may provide any keywords that you
% find helpful for describing your paper; these are used to populate
% the "keywords" metadata in the PDF but will not be shown in the document
\icmlkeywords{Machine Learning, ICML}

\vskip 0.3in
]

% this must go after the closing bracket ] following \twocolumn[ ...

% This command actually creates the footnote in the first column
% listing the affiliations and the copyright notice.
% The command takes one argument, which is text to display at the start of the footnote.
% The \icmlEqualContribution command is standard text for equal contribution.
% Remove it (just {}) if you do not need this facility.

%\printAffiliationsAndNotice{}  % leave blank if no need to mention equal contribution
\printAffiliationsAndNotice{\icmlEqualContribution} % otherwise use the standard text.

\begin{abstract}
Conformal prediction (CP) is a promising uncertainty quantification framework which works as a wrapper around a black-box classifier to construct prediction sets (i.e., subset of candidate classes) with provable guarantees. 
However, standard calibration methods for CP tend to produce large prediction sets which makes them less useful in practice. 
This paper considers the problem of integrating conformal principles into the training process of deep classifiers to directly minimize the size of prediction sets. 
We formulate conformal training as a bilevel optimization problem and propose the {\em Direct Prediction Set Minimization (DPSM)} algorithm to solve it. 
The key insight behind DPSM is to minimize a measure of the prediction set size (upper level) that is conditioned on the learned quantile of conformity scores (lower level). 
We analyze that DPSM has a learning bound of $O(1/\sqrt{n})$ (with $n$ training samples),
while prior conformal training methods based on stochastic approximation for the quantile has a bound of $\Omega(1/s)$ (with batch size $s$ and typically $s \ll \sqrt{n}$).
Experiments on various benchmark datasets and deep models show that DPSM significantly outperforms the best prior conformal training baseline with $20.46\%\downarrow$ in the prediction set size and validates our theory. 
\end{abstract}

%\vspace{-2.0ex}
\section{Introduction}
\label{section:introduction}

Deep neural networks have achieved high prediction accuracy and enabled numerous applications in diverse domains including computer vision \cite{he2016deep,he2017mask} and natural language processing \cite{vaswani2017attention,kumar2016ask}. 
However, to safely deploy deep machine learning (ML) classifiers in high-stakes applications (e.g., medical diagnosis \cite{begoli2019need,yang2021uncertainty}) and to build efficient human-ML collaborative systems (e.g., a classifier produces a small set of decisions and a human selects from them \cite{cresswellconformal,straitouriicml2023improving,babbarijcai2022utility}), we need reliable uncertainty quantification (UQ) \cite{abdar2021review}. UQ
could take the form of a prediction set (a subset of classes) for classification tasks. For example, in medical diagnosis \cite{begoli2019need,yang2021uncertainty}, such prediction sets allow a doctor to rule out harmful diagnoses such as stomach cancer, even if the most likely diagnosis is a stomach ulcer.

Conformal prediction (CP) is a general framework that provides finite-sample guarantees of {\em valid} prediction sets to include the correct output, which are agnostic to the ML model and data distribution \cite{vovk2005algorithmic,angelopoulos2021gentle}. 
As a result, CP is increasingly used for UQ in real-world problems \cite{cresswellconformal,lu2022fair,lu2022improving}.
The general principle behind CP is to convert the output of an ML model on a given input example (e.g., softmax scores of a deep classifier) into a prediction set \cite{romano2020classification} that contains the true label with a user-specified probability (e.g., $90\%$ probability) referred to as {\it coverage} via a calibration step. 
One of the main limitations of standard CP is that it tends to produce large prediction sets, which may not be useful in practice \cite{vovk2016criteria,babbarijcai2022utility,straitouriicml2023improving}. 
Recent prior work focused on how to improve the predictive efficiency of CP \cite{fisch2021efficient,fisch2021few,angelopoulos2021uncertainty,huang2023conformal,ding2024class,shi2024conformal}. However, their focus is on the calibration step only by considering the underlying ML model as a black-box.

Recent work has considered integrating CP into classifier training to trade-off between prediction accuracy and {\it conformal alignment} (e.g., minimizing the size of prediction sets) in an end-to-end fashion \cite{stutz2021learning,einbinder2022training,yan2024provably}. 
For instance, ConfTr \cite{stutz2021learning} incorporates a stochastic approximation (SA) of the average prediction set size into the training objective function based on empirical quantiles over mini-batches. 
To achieve accurate conditional coverage, conformal uncertainty-aware training (CUT) \cite{einbinder2022training} employs a similar SA approach to acquire the empirical quantiles of non-conformity scores on mini-batches and penalizes their deviation from the uniformity. 
However, a primary challenge for these SA-based conformal training methods is the large error between the empirical batch-level quantiles and the true quantiles, 
resulting in a learning bound of up to $O(1/\sqrt{s})$ with batch size $s$ \cite{einbinder2022training} .
This bound is significantly larger than the standard learning bound of $O(1/\sqrt{n})$ with $n$ training samples in empirical risk minimization (ERM) \cite{mohri2018foundations,shalev2014understanding}.
Consequently, it notably degenerates the overall trade-off between the prediction accuracy and conformal alignment of the trained classifier. 
This motivates the main question of this paper: 
{\it can we develop a conformal classifier training method to calibrate the conformal uncertainty with a standard $O(1/\sqrt{n})$ learning bound?}

This paper gives an affirmative answer by developing a novel conformal classifier training algorithm referred to as {\em \textbf{D}irect \textbf{P}rediction \textbf{S}et \textbf{M}inimization (DPSM)}. 
The key idea behind DPSM is to formulate conformal training as a bilevel optimization problem \cite{ghadimi2018approximation,zhang2024introduction} that estimates the empirical quantile of non-conformity scores by learning a quantile regression (QR) in the lower-level subproblem.
Conditioned on this learned quantile, the upper-level subproblem measures and minimizes the prediction set size of CP.
This bilevel formulation implicitly defines a conformal alignment objective and 
we analyze that minimizing the resulting function leads to a learning bound of $O(1/\sqrt{n})$.
In contrast, we show that the learning bound of the prior SA-based conformal training methods is lower bounded by $\Omega(1/s)$ with batch size $s$ (typically $s \ll \sqrt{n}$).
% \yanred{
Moreover, to optimize the DPSM objective, we develop a simple and practical stochastic first-order algorithm.
% }
%}
Our experiments on diverse real-world datasets with different conformity scores and deep models demonstrate that DPSM achieves significant reduction ($\downarrow 20.46\%$) in prediction set sizes compared to the best baseline.

\noindent {\bf Contributions.} The key contributions of this paper include:
\begin{itemize}
\item We develop a new conformal training algorithm DPSM as a bilevel formulation, which directly minimizes the prediction set size (upper level) conditioned on the learned quantile of the conformity scores (lower level).

\item We analyze the learning bound of DPSM and find that it is bounded above by $O(1/\sqrt{n})$, which significantly improves over the existing conformal training methods whose lower bound is at least $\Omega(1/s)$.

\item 
We develop a simple stochastic first-order algorithm to optimize the DPSM training objective.

\item Experiments on multiple benchmark datasets to demonstrate significant improvements of DPSM over the best prior conformal training baseline ($20.46\% \downarrow$ prediction set size) and to validate our theoretical results. 
The DPSM code is available at \url{https://github.com/YuanjieSh/DPSM_code}.
\end{itemize}

\section{Related Work}
\label{section:related_work}

\noindent {\bf Conformal Prediction} has been extensively studied \citep{angelopoulos2021gentle} recently. 
Key contributions include laying the foundational principles in CP \cite{vovk2005algorithmic,vovk1999machine,shafer2008tutorial}.  
There is some focus on cross-validation methods \citep{vovk2015cross} and the jackknife+ approach \citep{barber2021predictive}, 
but split CP is the most common approach\citep{vovk2005algorithmic,romano2020classification,oliveira2024split}, where calibration of a given pre-trained model is performed on a held-out dataset.  
CP is applied in many tasks including classification \cite{romano2020classification,angelopoulos2021uncertainty,cauchois2021knowing}, 
regression \cite{romano2019conformalized,gibbs2023conformal,gibbs2021adaptive,sesia2021conformal},
computer vision \cite{angelopoulos2024conformal,angelopoulos2022image,bates2021distribution}, 
adversarial attacks \cite{liu2024pitfalls,ghosh2023probabilistically},  
online learning \cite{angelopoulos2023conformal,bhatnagar2023improved},
and generative tasks \cite{GPS}.
% where pinball loss is always integral to CP in the regression task as it facilitates precise quantile estimation.
Valid coverage and predictive efficiency are two critical and often competing evaluation measures in CP \citep{angelopoulos2021uncertainty}. 
Recent CP advances include the development of new conformity scores \citep{angelopoulos2021uncertainty,huang2023conformal}
and calibration procedures \cite{fisch2021few,fisch2021efficient,guan2023localized,NCP,ding2024class,kiyani2024length,shi2024conformal,zhu2025predicate}
to improve predictive efficiency with valid coverage.
These methods follow the general CP workflow by relying on pre-trained models. Consequently, their predictive efficiency critically depends on the underlying ML model. However, since the ML models are not specifically trained for conformal alignment, CP may produce large prediction sets \citep{bellotti2021optimized}.

\noindent {\bf Conformal Training} aims at integrating CP into the classifier training process to promote conformal alignment. 
A typical approach is to introduce a regularization function that integrates conformity property into the training process\cite{bellotti2021optimized,stutz2021learning,einbinder2022training,yan2024provably}.
For example,
ConfTr \cite{stutz2021learning} integrates CP into the training of deep classifiers through a differentiable approximation for the average prediction set size, 
allowing for smaller and valid prediction sets using the CP-aware classifier.
Conformalized uncertainty-aware training (CUT) \citep{einbinder2022training} combines the conformity scores with a uncertainty-aware loss function for accurate conditional coverage. 
It penalizes the deviation of the conformity scores from the uniformity during the training process. 
However, both ConfTr and CUT employ a stochastic quantile of non-conformity scores from a mini-batch of $s$ samples,
resulting in a large $O(1/\sqrt{s})$ error with respect to the true quantile depending on the batch size $s$.

{\bf Bilevel Optimization}
has been extensively studied due to its critical role in many ML use-cases \cite{liu2021investigating}, 
including 
% image processing/analysis \cite{crockett2022bilevel,liu2021bi}, 
reinforcement learning \cite{cheng2022adversarially,zhou2024bi,shenprincipled},
% meta-learning \cite{shaban2019truncated,rajeswaran2019meta},
federated learning \cite{huang2023achieving,yang2023simfbo},
% hyper-parameter optimization \cite{maclaurin2015gradient,franceschi2018bilevel} 
and
continual learning \cite{borsos2020coresets,zhou2022probabilistic,hao2024bilevel,borsos2024data}.
Bilevel optimization methods can be roughly categorized into three groups:
(i)
implicit gradient methods \cite{chen2021closing,ji2021bilevel,dagreou2022framework,ghadimi2018approximation,pedregosa2016hyperparameter,domke2012generic},
(ii)
iterative differentiation methods \cite{bolte2022automatic,bolte2021nonsmooth},
and
(iii)
penalty-based methods \cite{lu2024first,chen2024penaltybased,chen2024finding,shenicml2023on,shenmapr2025on}.
Compared to penalty-based methods, implicit gradient methods typically require more restrictive assumptions, such as twice differentiable and strongly-convex lower-level function \cite{shenmapr2025on}.
On the other hand, iterative differentiation methods typically require an iterative subroutine for the lower-level problem with more computational cost \cite{shenmapr2025on}.
Instead, penalty-based methods build a single level problem and only use first-order information \cite{shenicml2023on}.
To summarize, bilevel optimization algorithms either require restrictive assumptions or have complex update steps.
It is non-trivial to find a practical algorithm that is particularly tailored for our proposed conformal training problem.

Our goal is to develop a bilevel conformal training algorithm with
(i) a standard $O(1/\sqrt{n})$ learning bound for $n$ training samples to improve over $O(1/\sqrt{s})$ for batch size $s$ from prior conformal training methods, and 
% \yanred{
(ii) a practical stochastic first-order bilevel optimization algorithm.
% }

% \vspace{-1.0ex}
% \section{Notations and Background}
\section{ Background and Motivation }
\label{section:notation_and_bg}

\noindent {\bf Notations.}
Suppose $X \in \calX$ is an input from $\mathcal{X}$, and $Y \in \calY = \{ 1,2,\cdots, K\}$ is the ground-truth label, where $K$ is the number of candidate classes. 
Let $(X, Y)$ be a data sample drawn from an underlying distribution $\calP$ defined on $\calX \times \calY$. 
Let $f(X): \calX \rightarrow \Delta_+^K$ denote the confidence prediction by soft classifier $f \in \calF$ (e.g., Softmax scores),
where $\Delta_+^K$ is the (K-1)-dimensional probability simplex,
$f(X)_y$ is the confidence score of class $y$ and 
$\calF$ denotes a hypothesis class.
Define $\calD_\tr = \{ (X_i, Y_i) \}_{i=1}^n$ as the training set of $n$ samples for training classifier $f$.
We denote $\calD_\cal$ and $\calD_\test$ as two separate calibration and testing datasets, respectively.
We assume that there are $m$ calibration samples, i.e., $|\calD_\cal| = m$.
Define $\indicator[\cdot]$ as an indicator function.
Denote $\calB = \{ (X_i, Y_i) | i \in I_s\}$ as a randomly sampled batch of training data of size $s$, such that the batch index set $I_s \subset \{1, 2, \cdots, n \}$ with $| I_s | =s$.
Given a function $h(x, y)$, we denote $\nabla_x h(x,y)$ as the directional derivative of $h$ in $x$ and $\widehat \nabla_x h(x, y)$ as its stochastic estimation.

\noindent
{\bf Conformal Prediction.} 
As mentioned above, CP calibrates predictions from a given model based on a {\it non-conformity} scoring function.
Let $S: \calX \times \calY \rightarrow \R$ denote a non-conformity scoring function to measure the difference between a new data and existing ones \cite{vovk2005algorithmic}. 
For simplicity of notation, we denote the non-conformity score of the $i$-th example as 
% $S_{f,i}$ = $S_f(X_i, Y_i)$ for $(X_i, Y_i) \in \calD_\tr$ 
% and of the $j$-th smallest value in $\{S_{f,i}\}^n_{i=1}$ as $S_{f,(j)}$.
$S_{i}$ = $S_f(X_i, Y_i)$ for $(X_i, Y_i) \in \calD_\tr$ 
and of the $j$-th smallest value in $\{S_{i}\}^n_{i=1}$ as $S_{(j)}$.
Many non-conformity scoring functions are proposed in prior work
\cite{huang2023conformal,angelopoulos2021uncertainty,shafer2008tutorial}.
In this paper, we consider Homogeneous Prediction Sets (HPS) \cite{sadinle2019least}, Adaptive Prediction Sets (APS) \cite{romano2020classification}, and Regularized Adaptive Prediction Sets (RAPS) \cite{angelopoulos2021uncertainty}
(details are in Appendix \ref{appendix:section:additional_details}).
We assume that there is no tie in non-conformity scores \cite{romano2020classification}.

Given the mis-coverage parameter $\alpha$ and an underlying model $f$, on a testing data $(X_\test, Y_\test)$, CP aims to achieve a coverage of the true label with a prediction set $\calC_f: \calX \rightarrow 2^{|\calY|}$ with probability at least $1-\alpha$, i.e., 
\begin{align}
\label{eq:cp_coverage}
\P_{(X_\test, Y_\test)\backsim \calP} \{ Y_{\test} \in \calC_f(X_{\test}) \} 
\geq 
1- \alpha
.
\end{align}
CP typically constructs $\calC_f(X_{\test})$ via the empirical quantile $\widehat Q_f(\alpha)$ on the calibration set $\calD_\cal$:
\begin{align}
\label{eq:cp_construct_prediction_set}
&
\calC_f(X_\test)
=
\{ y \in \calY : S_f(X_{\test}, y) 
\leq 
\widehat Q_f(\alpha) \}
,
\end{align}
where $\widehat Q_f(\alpha)$ is computed as the $\lceil 
(1-\alpha)(1+m) \rceil$th smallest value in $\{ S_{f,i} \}_{i=1}^m$ and is an estimation to the population quantile $Q_f(\alpha) = \min\{ \tau: \P_{X, Y}[ S_f(X, Y) \leq \tau ] 
\geq 
1 - \alpha
\}
% \frac{\lceil(1 - \alpha)(m+1) \rceil}{m}\}
$.
Throughout this paper, unless otherwise specified, we omit the target mis-coverage $\alpha$ for notation simplicity, e.g., $Q_f$.

\noindent
{\bf Quantile-based Conformal Training.} The goal is to align the classification model $f$ with favorable conformal principles \cite{bellotti2021optimized}. 
A general framework \cite{stutz2021learning,einbinder2022training} is to establish the trade-off between prediction accuracy and conformal alignment on the training set $\calD_\tr$ with an objective structured as follows:
\begin{align}\label{eq:conformal_training_general_form}
\min_f ~ \calL_\cls(f) + \lambda \cdot \calL_c(f)
,
\end{align}
where $\calL_\cls(f)$ is the classification loss (e.g., cross entropy), 
$\calL_c(f)$ is the conformal alignment loss based on the quantile $Q_f$,
and $\lambda$ is the regularization hyper-parameter. 
% Two definitions for $\calL_c(f)$ have been proposed in prior studies (a detailed review is in Appendix \ref{appendix:section:additional_details}).
Following \cite{stutz2021learning}, this paper focuses on a concrete definition of $\calL_c(f)$ as the expected differentiable size of prediction sets:
\begin{align}
\label{eq:conformal_loss_population}
&
\calL_c(f) 
=
\ell(f, Q_f),
\text{ such that }
\\
&
\ell(f, q)
\triangleq
\E_X 
\underbrace{ 
\Bigg[ 
\sum_{y \in \calY} \tilde \indicator \big[ S_f(X, y) \leq q \big] 
\Bigg]
}_{ {\text{differentiable }} \E_X[|\calC_f(X)|] }
,
\nonumber
\end{align}
where $\tilde \indicator[\cdot]$ is a smoothed estimator for the indicator function $\indicator[\cdot]$ and defined by a Sigmoid function \cite{stutz2021learning}, 
i.e.,
$\tilde \indicator[ S \leq q ] = 1/(1+\exp(-(q-S)/\tau_\Sigmoid))$ with a tunable temperature parameter $\tau_\Sigmoid$.
Here, we denote $\ell(f,q)$ as a general conformal loss function that takes any $f$ and $q$ as inputs,
and 
$\calL_c(f)$ requires the true quantile $Q_f$ as the input to $\ell$.

However, it is challenging to accurately compute $\calL_c(f, Q_f)$ when training $f$,
as {\em $Q_f$ is an implicit function of $f$}.
On one hand, updating $f$ iteratively also changes $Q_f$ accordingly.
On the other hand, after getting an updated $f$, re-computing an empirical quantile $\widehat Q_f$ on training data $\calD_\tr$ requires sorting all $n$ non-conformity scores of training data, with a computational complexity of $O(n \log(n))$.

To address the above-mentioned challenge, prior conformal training methods \cite{stutz2021learning,einbinder2022training} employ a stochastic approximation (SA) method to estimate $\calL_c(f)$ in batches.
Specifically, at each iteration during training, they use the empirical batch-level quantile from a mini-batch $\calB$ sampled from $\calD_\tr$, denoted by $\widehat q_f$, as the input to $\ell$, i.e., $\ell(f, \widehat q_f)$.
Since $\widehat q_f$ depends on the randomly sampled batches, we regard $\widehat q_f$ as a random variable drawn from its underlying distribution $\calQ_f$, 
i.e., $\widehat q_f \sim \calQ_f$
(we defer our analysis for $\calQ_f$ to Proposition \ref{proposition:quantile_distribution} stated below).
Then the in-effect SA-based conformal loss $\widehat \calL_c^\SA(f)$ is given by 
\begin{align}
\label{eq:conformal_training_conftr_SA}
\widehat \calL_c^\SA(f) 
= &
\E_{\widehat q_f \sim \calQ_f} [ \widehat \ell(f, \widehat q_f) ]
,
\text{ such that }
\\
\widehat \ell(f, q)
\triangleq &
\frac{1}{n} \sum_{i=1}^n \Bigg[ \sum_{y \in \calY } \tilde \indicator \big [ S_f(X_i, y) \leq q \big ] \Bigg]
,
\nonumber
\end{align}
\noindent
where $\widehat \ell(f,q)$ is an empirical version of $\ell(f,q)$ on $\calD_\tr$.

A key limitation of the SA method is the large error gap between the true and in-effect conformal alignment loss functions, i.e., $| \widehat \calL_c^\SA(f) - \calL_c(f) |$.
Prior work \cite{einbinder2022training} showed that the error of SA-based conformal alignment loss is upper bounded by $O(1/\sqrt{s})$ with batch size $s$.
We further analyze the learning error for the SA-based conformal alignment loss in this paper, including its lower bound.
We start with the following assumptions.
\begin{assumption}
\label{assumption:bi_lipschitz}
(Bi-Lipschitz continuity of score $S_{(j)}$) 
$S_{(j)}$ is bi-Lipschitz continuous for normalized order $j/n$ such that $ L_1 |\frac{j_1}{n} - \frac{j_2}{n} |
\leq 
| S_{(j_1)} - S_{(j_2)}| \leq L_2 |\frac{j_1}{n} - \frac{j_2}{n} |$ for $L_1, L_2 > 0$.
\end{assumption}

\begin{assumption}
\label{assumption:straongly_concave}
($\mu$-strongly concavity of $\widehat \ell(f, q)$) 
$\widehat \ell(f, q)$ is $\mu$-strongly concave locally around $\E[\widehat q_f]$ for fixed $f$.
\end{assumption}

\begin{remark}
We conduct experiments to empirically verify the Assumptions \ref{assumption:bi_lipschitz} and \ref{assumption:straongly_concave} (See Fig \ref{fig:sodt_set_size_comparisons_main}).
These assumptions allow us to analyze how the SA-based conformal training method approaches to the true conformal alignment loss.
\end{remark}

The following proposition reveals the specification of the distribution $\calQ_f$ aforementioned in $\widehat \calL_c^\SA$ (\ref{eq:conformal_training_conftr_SA}) that considers $\widehat q_f$ as a random variable drawn from the training dataset $\calD_\tr$.

\begin{proposition}
\label{proposition:quantile_distribution}
(Distribution of mini-batch quantiles)
Define an event $Z(j)$ as ``the $j$-th smallest score $S_{(j)}$ from $\{S_i\}_{i=1}^n$ is randomly selected as $\widehat q^s(\alpha)$ on a mini-batch $\calB$''. 
Then the probability of $Z(j)$ is:
\begin{align*}
\P(Z(j)) = \frac{\tbinom{j}{\lceil(1-\alpha)(s+1)\rceil } \tbinom{n-j-1}{s-\lceil(1-\alpha)(s+1)\rceil-1}} {\tbinom{n}{s}}.
\end{align*}
Furthermore, we have the following asymptotic result
\begin{align*}
\lim_{n \to \infty} \P(Z(j)) = 
\frac{e}{n}\P_\Beta\Big(
\frac{j}{n}; &
\lceil(1-\alpha)(s+1)\rceil+1; 
\\
&
s- \lceil(1-\alpha)(s+1)\rceil \Big)
,
\end{align*}
where $\P_\Beta(x;a;b)$ is the PDF of Beta distribution with two shape parameters $a, b$ and $e$ is the Euler's number.
\end{proposition}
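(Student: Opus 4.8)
The plan is to recognize that the event $Z(j)$ is an order-statistics counting problem, and then pass to a Beta-distribution limit via Stirling's approximation. Concretely, the mini-batch $\calB$ of size $s$ is drawn without replacement from the $n$ training scores, and $\widehat q^s(\alpha)$ is defined as the $r$-th smallest score in the batch, where I abbreviate $r \triangleq \lceil(1-\alpha)(s+1)\rceil$. So the event $Z(j)$ says precisely that the global order statistic $S_{(j)}$ lands in position $r$ within the sampled batch. Since there are no ties, this is equivalent to a purely combinatorial statement about which indices are chosen: $S_{(j)}$ must be selected, exactly $r-1$ of the chosen indices must come from the $j$ scores strictly smaller than $S_{(j)}$ (i.e.\ $S_{(1)},\dots,S_{(j)}$, but excluding $S_{(j)}$ itself, so from $j$ candidates), and the remaining $s-r$ chosen indices must come from the $n-j-1$ scores strictly larger.

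First I would count favorable selections against the total $\binom{n}{s}$. The number of ways to choose the $r-1$ smaller indices is $\binom{j}{r-1}$, the number of ways to choose the $s-r$ larger indices is $\binom{n-j-1}{s-r}$, and $S_{(j)}$ itself is forced, contributing a factor of $1$. Dividing by $\binom{n}{s}$ yields
\begin{align*}
\P(Z(j)) = \frac{\tbinom{j}{r-1}\tbinom{n-j-1}{s-r}}{\tbinom{n}{s}},
\end{align*}
which matches the stated finite-$n$ formula once $r$ is substituted back. I would double-check the index bookkeeping here, since an off-by-one in ``strictly smaller'' versus ``at most'' is the easiest place to slip: the upper entries $j$ and $n-j-1$ together with the selected element account for all $n$ scores, so $j + (n-j-1) + 1 = n$ is a useful sanity check, and the lower entries satisfy $(r-1)+(s-r)+1 = s$, confirming both sides are consistent.

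The second half is the asymptotic claim. The plan is to hold $s$, $\alpha$ (hence $r$) fixed and let $n\to\infty$ while treating $x \triangleq j/n \in (0,1)$ as a continuous variable. I would expand each binomial coefficient with factorials and apply Stirling's formula $m! \sim \sqrt{2\pi m}\, (m/e)^m$. The coefficients $\binom{j}{r-1}$ and $\binom{n-j-1}{s-r}$ grow polynomially in $j$ and $n-j$ respectively (degree $r-1$ and $s-r$), behaving like $\frac{j^{r-1}}{(r-1)!}$ and $\frac{(n-j)^{s-r}}{(s-r)!}$ for large $n$, while $\binom{n}{s} \sim \frac{n^s}{s!}$. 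Substituting $j = xn$ and collecting powers of $n$, the leading behavior becomes $\frac{1}{n}\cdot\frac{s!}{(r-1)!\,(s-r)!}\, x^{r-1}(1-x)^{s-r}$, and I would then reconcile the combinatorial constant with the Beta PDF $\P_\Beta(x;a;b) = \frac{x^{a-1}(1-x)^{b-1}}{B(a,b)}$ at shape parameters $a = r+1$ and $b = s-r$, as stated.

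The main obstacle I anticipate is matching the stated shape parameters and the curious leading factor of $e$ exactly. With $a = r+1$ and $b = s-r$, the Beta density carries $x^{a-1}(1-x)^{b-1} = x^{r}(1-x)^{s-r-1}$ and normalizer $1/B(r+1, s-r)$, which does not obviously coincide with the $x^{r-1}(1-x)^{s-r}$ that a naive Stirling expansion produces; the discrepancy in the exponents (a shift of one in both $x$ and $1-x$) together with the explicit $e$ strongly suggests that the intended derivation retains a sharper, non-leading Stirling term — for instance keeping the $\sqrt{2\pi m}$ prefactors or a next-order correction in $\binom{j}{r-1}$ near $j \approx r$ — rather than only the crudest polynomial asymptotics. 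I would therefore carry out the Stirling expansion to one more order than the leading term, track the $\sqrt{2\pi}$ factors and the $e^{\pm}$ contributions carefully, and verify that the residual constant collapses exactly to $e$ and that the exponents realign to the claimed shape parameters $(r+1, s-r)$; this careful bookkeeping, rather than any conceptual difficulty, is where the real work lies.
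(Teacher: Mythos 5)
Your first half contains a genuine counting gap, and your claim that your formula ``matches the stated finite-$n$ formula'' is false. Writing $r=\lceil(1-\alpha)(s+1)\rceil$, you derive $\P(Z(j)) = \tbinom{j}{r-1}\tbinom{n-j-1}{s-r}/\tbinom{n}{s}$, whereas the proposition asserts $\tbinom{j}{r}\tbinom{n-j-1}{s-r-1}/\tbinom{n}{s}$; these are different, and no substitution reconciles them. Your counting narrative is also internally inconsistent: excluding $S_{(j)}$ from $\{S_{(1)},\dots,S_{(j)}\}$ leaves $j-1$ candidates, not $j$, and the scores strictly larger than the $j$-th smallest of $n$ number $n-j$, not $n-j-1$; under this (1-indexed) reading the classical count is $\tbinom{j-1}{r-1}\tbinom{n-j}{s-r}/\tbinom{n}{s}$, which matches neither your formula nor the paper's. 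Your sanity checks cannot detect any of this, since all three shifted variants pass them. Note also that the paper's own proof never derives the finite-$n$ formula at all: it takes it as the starting point (deferring to Proposition 1 of Kawaguchi and Lu, 2020) and consists entirely of the Stirling computation for the limit, so the combinatorial step you attempted is exactly the step the paper does not supply.

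For the second half you stop precisely where the work lies, and the route you sketch for finishing cannot succeed. Stirling corrections beyond leading order are $1+O(1/n)$ multiplicative factors: as $n\to\infty$ they can neither shift a polynomial exponent nor produce a constant factor of $e$, so ``carrying the expansion one more order'' will never turn your $\frac{1}{n}\frac{s!}{(r-1)!\,(s-r)!}x^{r-1}(1-x)^{s-r}$ into the claimed $\frac{e}{n}\P_\Beta(x;r+1;s-r)$. The two discrepancies you noticed have separate causes. The exponent mismatch is inherited from your (different) finite-$n$ formula: starting instead from the proposition's own formula, the elementary asymptotics $\tbinom{j}{r}\sim j^{r}/r!$, $\tbinom{n-j-1}{s-r-1}\sim(n-j)^{s-r-1}/(s-r-1)!$, $\tbinom{n}{s}\sim n^{s}/s!$ give, with $x=j/n$ held fixed, $\P(Z(j))\sim\frac{1}{n}\frac{s!}{r!\,(s-r-1)!}\,x^{r}(1-x)^{s-r-1}=\frac{1}{n}\P_\Beta(x;r+1;s-r)$, i.e.\ the claimed shape parameters fall out at leading order with no factor of $e$ whatsoever. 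The $e$ in the paper's limit is an artifact of its bookkeeping: the proof collects $e^{-r}\cdot e^{-s+r+1}\cdot e^{s}=e$ from the Stirling exponentials while simultaneously making polynomial replacements such as $j^{j}/(j-r)^{j-r}\mapsto j^{r}$, $(n-j-1)^{n-j-1}/(n-j-s+r)^{n-j-s+r}\mapsto (n-j)^{s-r-1}$ and $(n-s)^{n-s}/n^{n}\mapsto n^{-s}$, which silently discard the compensating factors $e^{r}$, $e^{s-r-1}$ and $e^{-s}$, whose product $e^{-1}$ exactly cancels the retained $e$. So the careful bookkeeping you propose, done correctly, would refute rather than confirm the constant $e$; you should not structure a proof around manufacturing it.
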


With the above proposition that captures the random sampling process of the quantile $\widehat q_f$ in each mini-batch, we are ready to present the following main result analyzing the learning bound of the SA-based conformal training method.

\begin{theorem}
\label{theorem:learning_bounds_SA}
(Learning bounds of SA method)
Suppose that Assumption \ref{assumption:bi_lipschitz} and \ref{assumption:straongly_concave} hold.
Assume $(1-\alpha)(s+1)$ is not an integer.
If $\lceil (1-\alpha)(s+1) \rceil - (1-\alpha) (s+1) \geq \Omega(1/s)$ and $s \leq \sqrt{n}$,
then
the following inequality holds with probability at least $1 - \delta$:
\begin{align*}
\Omega (1/s) 
\leq 
| \widehat \calL_c^\SA(f) - \calL_c(f)  | \leq \tilde O (1/\sqrt{s}).
\end{align*}
\end{theorem}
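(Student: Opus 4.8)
The plan is to telescope the error through the mean batch quantile $\E[\widehat q_f]$ and the empirical loss, writing $\widehat \calL_c^\SA(f) - \calL_c(f) = T_1 + T_2 + T_3$ with the Jensen (variance) gap $T_1 = \E_{\widehat q_f \sim \calQ_f}[\widehat\ell(f,\widehat q_f)] - \widehat\ell(f, \E[\widehat q_f])$, the quantile-bias term $T_2 = \widehat\ell(f, \E[\widehat q_f]) - \widehat\ell(f, Q_f)$, and the empirical-to-population term $T_3 = \widehat\ell(f, Q_f) - \ell(f, Q_f)$. The whole argument rests on the fact that $T_1$ and $T_2$ are both $\Theta(1/s)$, while $T_3$ is a genuinely lower-order concentration term that is negligible because $s \le \sqrt n$.

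For the upper bound I would control each piece by $\tilde O(1/\sqrt s)$. Since $\sum_{y} \tilde\indicator[\cdot] \in [0,K]$, the quantity $\widehat\ell(f, Q_f)$ is an average of $n$ bounded summands, so Hoeffding's inequality gives $|T_3| = O(\sqrt{\log(1/\delta)/n}) \le O(1/\sqrt s)$ with probability $1-\delta$. Because the Sigmoid smoothing makes $q \mapsto \widehat\ell(f,q)$ Lipschitz with constant $K/(4\tau_\Sigmoid)$, both $|T_1|$ and $|T_2|$ are at most this constant times $\E|\widehat q_f - \E[\widehat q_f]|$ and $|\E[\widehat q_f] - Q_f|$ respectively. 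Writing $\widehat q_f = S_{(J)}$ with $J$ distributed as in Proposition \ref{proposition:quantile_distribution} and using the upper bi-Lipschitz bound of Assumption \ref{assumption:bi_lipschitz} to pass from score gaps to normalized-order gaps, the Beta limit of Proposition \ref{proposition:quantile_distribution} yields a mean-squared order error of $\Theta(1/s)$; Cauchy--Schwarz then gives $\E|\widehat q_f - Q_f| = O(1/\sqrt s)$, so $|T_1|, |T_2| \le O(1/\sqrt s)$. The high-probability version of the order-statistic deviation contributes the logarithmic factor absorbed into $\tilde O$.

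For the lower bound the active mechanism is $T_1$. Applying the local $\mu$-strong concavity of $\widehat\ell(f,\cdot)$ around $\E[\widehat q_f]$ (Assumption \ref{assumption:straongly_concave}) together with the quadratic Jensen gap for $\mu$-strongly concave functions gives $-T_1 \ge \tfrac{\mu}{2}\Var(\widehat q_f)$, where the bound is applied on the high-probability event $\{|\widehat q_f - \E[\widehat q_f]| = O(1/\sqrt s)\}$ on which the local assumption is valid, the bounded tail being handled by truncation. I then lower bound the variance through the identity $\Var(\widehat q_f) = \tfrac12 \E_{J,J'}[(S_{(J)} - S_{(J')})^2]$ for independent copies $J,J'$ and the lower bi-Lipschitz bound $|S_{(J)} - S_{(J')}| \ge L_1 |J - J'|/n$ of Assumption \ref{assumption:bi_lipschitz}, obtaining $\Var(\widehat q_f) \ge L_1^2\, \Var(J/n)$; the Beta limit of Proposition \ref{proposition:quantile_distribution} has variance $\Theta(1/s)$, so $|T_1| \ge \Omega(1/s)$. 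The ceiling-gap condition $\lceil(1-\alpha)(s+1)\rceil - (1-\alpha)(s+1) \ge \Omega(1/s)$ simultaneously forces $|\E[\widehat q_f] - Q_f| = \Omega(1/s)$ with a fixed (positive) sign, pinning $|T_2| = \Omega(1/s)$, while $|T_3| = O(\sqrt{\log(1/\delta)/n})$ is lower order under $s \le \sqrt n$; combining these should certify $|T_1 + T_2 + T_3| \ge \Omega(1/s)$.

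The step I expect to be the main obstacle is exactly the no-cancellation claim in the lower bound: the Jensen gap $T_1$ is negative while the positive quantile bias drives $T_2$ in the opposite direction, and both are of the same order $1/s$, so a naive triangle inequality is useless. The argument must track the signs and the precise constants --- the $\tfrac{\mu}{2}L_1^2$ factor from concavity and bi-Lipschitz against the effective slope of the loss in the order level that multiplies the order bias --- to rule out an accidental cancellation. This is precisely why the ceiling-gap lower bound on $\lceil(1-\alpha)(s+1)\rceil - (1-\alpha)(s+1)$ is imposed: it guarantees the bias contribution is bounded below, so that the net of the two opposing $\Theta(1/s)$ effects still retains magnitude $\Omega(1/s)$.
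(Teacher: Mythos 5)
Your upper bound is sound and follows essentially the paper's own route (Lipschitz continuity of $q \mapsto \widehat\ell(f,q)$, concentration of the batch quantile via the Beta law of Proposition \ref{proposition:quantile_distribution}, and Hoeffding for the empirical-to-population term), and your three-term decomposition is exactly the paper's: your $T_1$ is Lemma \ref{lemma:distance_empirical_sa}, your $T_2$ is Lemma \ref{lemma:distance_empirical_expectation}, and your $T_3$ is Lemma \ref{lemma:generalization_error_conformal_loss}. The genuine gap is the one you name yourself and never close: the no-cancellation step. Your ingredient bounds are one-sided and point in opposite directions --- strong concavity forces $T_1 \le -\Omega(1/s)$ while the ceiling-gap condition forces $T_2 \ge +\Omega(1/s)$ --- and from these alone $T_2 - |T_1|$ is completely unconstrained, so $|T_1+T_2+T_3| \ge \Omega(1/s)$ simply does not follow. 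The ceiling-gap hypothesis cannot rescue this by itself: it lower-bounds $T_2$ but says nothing about how large $|T_1|$ is, so the two $\Theta(1/s)$ effects can still cancel to any smaller order. ``Combining these should certify'' is the entire lower bound, and it is missing.

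When you compare with the paper, you should know that the paper does not resolve this step either. Its proof squares the error, expands $(T_1 + (T_2+T_3))^2 = T_1^2 + (T_2+T_3)^2 + 2\,T_1(T_2+T_3)$, and declares the cross term nonnegative by annotating $T_1 \ge \Omega(1/s)$; but this annotation contradicts its own Lemma \ref{lemma:distance_empirical_sa}, whose proof (the Jensen gap of a $\mu$-strongly concave function) establishes $\widehat\ell(f,\E[\widehat q_f]) - \E_{\widehat q_f}[\widehat\ell(f,\widehat q_f)] \ge \Omega(1/s)$, i.e.\ $T_1 \le -\Omega(1/s)$ --- only $|T_1| \ge \Omega(1/s)$ is available. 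With the correct signs the cross term is $\le -\Omega(1/s^2)$ and the expansion is circular, since $A^2+B^2+2AB$ is just $(A+B)^2$ again. Genuinely closing the gap would require a matching \emph{upper} bound on $|T_1|$, e.g.\ $|T_1| \le \tfrac{\beta}{2}\Var(\widehat q_f) \le \tfrac{\beta L_2^2}{2}\Var(J/n)$ where $\beta$ bounds $\partial_q^2\widehat\ell$ (available from the Sigmoid) and $J$ is the random order index of the batch quantile, together with a constant-level comparison showing that the lower-bound constant of $T_2$ strictly dominates it (or vice versa); no such comparison follows from Assumptions \ref{assumption:bi_lipschitz} and \ref{assumption:straongly_concave} as stated. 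So your diagnosis of the obstacle is exactly right, and what is missing from your proposal is, in effect, also missing from the paper's proof.
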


\begin{remark}
% \noindent {\bf Remark 5.} 
Throughout this paper, we use $\tilde O$ to suppress the $\log$ dependency.
The above result shows the upper bound $\tilde O(1/\sqrt{s})$ and lower bound $\Omega(1/s)$ of learning error of the SA-based conformal training method. %, respectively.
It is worth noting that even the lower bound $\Omega(1/s)$ can be significantly larger than the standard result $O(1/\sqrt{n})$ as in empirical risk minimization \cite{mohri2018foundations} under a common setting $s \leq \sqrt{n}$ for training deep models \cite{masters2018revisiting}.
\end{remark}

\section{ Direct Prediction Set Minimization Method}
\label{section:method}

In this section, we first propose {\em \textbf{D}irect \textbf{P}rediction \textbf{S}et \textbf{M}inimization (DPSM)} algorithm by formulating it as a bilevel optimization problem.
Next, we show that the learning bound of DPSM is at most $O(1/\sqrt{n})$ (Theorem \ref{theorem:learning_bound_DPSM}), which improves over the SA-based conformal training methods ($\Omega(1/s)$, Theorem \ref{theorem:learning_bounds_SA}).
Finally, we develop a simple and practical optimization algorithm to solve it.

\subsection{Bilevel Problem Formulation and Learning Bound }
\label{subsection:method}

The key idea to directly minimize the prediction sets is inspired by quantile regression (QR), a well-studied method to learn the quantile of a set of random variables.
A common approach to training QR is to minimize an average pinball loss on a set of data \cite{narayanexpected,gibbs2023conformal,koenker2005quantile,koenker1978regression}.
For the $(1-\alpha)$-quantile, the pinball loss is defined as follows.
\begin{align}
\label{eq:pinball_loss}
\rho_\alpha (q, S) 
= 
\begin{cases}
( 1 - \alpha ) ( S - q ),   & \text{if } S \geq q , \\
\alpha ( q - S ),           & \text{otherwise,}
\end{cases}
\end{align}
where $S$ and $q$ represent a real value and quantile prediction, respectively.
Minimizing the average pinball loss on a set of data $\{S_i\}_{i=1}^n$ gives the $(1-\alpha)$-quantile of $n$ scores $\{S_i\}_{i=1}^n$:
\begin{align}
q^*
\in
\arg\min_q \frac{1}{n} \sum_{i=1}^n \rho_\alpha ( q, S_i )
.
\end{align}

\noindent {\bf Conformal Training via Bilevel Optimization.}
Instead of using a stochastic quantile on each batch of data, we propose to incorporate QR into the conformal training objective as a constraint.
Conditioned on solving this QR-based constraint and acquiring the accurate quantile, we directly minimize the average size of prediction sets.
Specifically, we formulate the following {\it bilevel optimization} problem for our DPSM method for conformal training:
\begin{align}\label{eq:conformal_training_bilevel}
\min_{f, q} ~
&~
\widehat \calL_\cls(f) + \lambda \cdot \widehat \calL_c^\DPSM(f, q)
\\
s.t. ~
&~
q
\in 
\calU(f)
\triangleq
\arg\min_{q' } ~ \widehat \calL^\QR(f, q')
,
\nonumber
\end{align}
\noindent
where $\widehat \calL^\QR(f, q) = \frac{1}{n} \sum_{i =1}^n \rho_\alpha(q, S_f(X_i, Y_i))$ in the lower level is the average pinball loss for the non-conformity scores on $n$ training samples, and
the conformal alignment loss $\widehat \calL_c^\DPSM(f, q)$ is given by 
\begin{align}\label{eq:conformal_loss_DPSM}
\widehat \calL_c^\DPSM(f, q)
=
\frac{1}{n} \sum_{i =1}^n \Bigg[ \sum_{y\in\calY} \tilde \indicator[ S_f(X_i, y) \leq q ] \Bigg]
,
\end{align}
which follows the definition of $\widehat \calL_c^\SA(f)$ in (\ref{eq:conformal_training_conftr_SA}) except that it allows any variable $q$ as the input quantile, instead of a stochastic quantile $\widehat q_f \sim \calQ_f$.
To analyze DPSM, 
following the bilevel optimization literature \cite{chen2024finding,shenicml2023on},
we define the {\it implicit conformal loss}:
% $
\begin{align*}
\bar \calL_c^\DPSM(f) \triangleq \min_{q \in \calU(f)} \widehat \calL_c^\DPSM(f, q)
.
\end{align*}
% $
The key innovation in the proposed bilevel conformal training problem (\ref{eq:conformal_training_bilevel}) compared with the SA-based method is to {\it explicitly parameterize the quantile} with a single score variable $q$.
This approach has been used in the CP literature \cite{gibbs2023conformal,gibbsjmlr2024conformal,kiyani2024conformal,deshpande2024online,podkopaev2024adaptive}, and we employ it for conformal training in our bilevel formulation for the first time.
Conditioned on $q$, regardless of how accurate $q$ is during training relative to the true $Q_f$, the conformal alignment loss $\widehat \calL_c^\DPSM(f, q)$ participates in training.

The major benefit of the quantile parameterization idea is that it {\it decouples the quantile $q$ from the stochastic batches during training $f$}, rather than the immediate dependency as in the SA-based conformal training, i.e., $\widehat q_f \sim \calQ_f$.
Specifically, QR in the lower-level subproblem, due to its well-known property, enables the iterative updates for $q$ until attaining the ($1-\alpha$)-quantile of non-conformity scores as training $f$.
If an optimization algorithm ensures the simultaneous convergence of both variables in Problem (\ref{eq:conformal_training_bilevel}) to their optimal solution sets (discussed in Section \ref{subsection:optimization}),
i.e., 
% $
\begin{align*}
q \rightarrow \calU(f),
\quad
f \rightarrow \arg\min_f \widehat \calL_\cls(f) + \lambda \cdot \bar \calL_c^\DPSM(f)
,
\end{align*}
% $
then we can achieve an improved learning bound for the conformal alignment loss $|\bar \calL_c^\DPSM(f) - \calL_c(f)|$ that is independent from batch size $s$.
This is the reason why we named our method as {\it direct prediction set minimization (DPSM)}.

{\bf Improved Learning Bound of DPSM.}
As in Theorem \ref{theorem:learning_bounds_SA}, we show that the learning error of SA-based method $| \widehat \calL_c^\SA(f) - \calL_c(f) |$ is upper bounded by $O(1/\sqrt{s})$ and lower bounded by $\Omega(1/s)$, respectively.
We then show that the learning error of DPSM $| \bar \calL_c^\DPSM(f) - \calL_c(f) |$ can be bounded above by $O(1/\sqrt{n})$ in the following result.
\begin{theorem}
\label{theorem:learning_bound_DPSM}
(Learning bound of DPSM)
Suppose Assumption \ref{assumption:bi_lipschitz} holds.
For any classifier model $f \in\calF$, with probability at least $1 - \delta$, we have:
\begin{align*}
| \bar \calL_c^\DPSM ( f ) - \calL_c(f) | 
\leq 
\tilde O(1/\sqrt{n}).
\end{align*}
\end{theorem}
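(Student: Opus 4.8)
The plan is to first collapse the bilevel structure by solving the lower-level problem in closed form. Because the average pinball loss $\widehat\calL^\QR(f,q)=\frac1n\sum_i\rho_\alpha(q,S_f(X_i,Y_i))$ is minimized exactly at the empirical $(1-\alpha)$-quantile of the training scores, the solution set $\calU(f)$ reduces to the order statistic $\widehat Q_f=S_{(j^*)}$ with $j^*=\lceil(1-\alpha)n\rceil$ (any off-by-one in the rank convention shifts the score by at most $L_2/n=O(1/n)$ under Assumption \ref{assumption:bi_lipschitz}, since adjacent order statistics satisfy $|S_{(j+1)}-S_{(j)}|\le L_2/n$, and is absorbed into the final bound). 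Consequently $\bar\calL_c^\DPSM(f)=\widehat\ell(f,\widehat Q_f)$, and the target reduces to controlling $|\widehat\ell(f,\widehat Q_f)-\ell(f,Q_f)|$. I would split this by the triangle inequality,
\begin{align*}
| \widehat\ell(f,\widehat Q_f)-\ell(f,Q_f) |
\leq
\big| \widehat\ell(f,\widehat Q_f)-\ell(f,\widehat Q_f) \big|
+
\big| \ell(f,\widehat Q_f)-\ell(f,Q_f) \big|,
\end{align*}
into a generalization gap (A) at the fixed but data-dependent quantile and a quantile-estimation error (B).

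For term (A), since $\widehat Q_f$ is itself random, I would prove the uniform deviation bound $\sup_q|\widehat\ell(f,q)-\ell(f,q)|\le\tilde O(1/\sqrt n)$ and then instantiate it at $q=\widehat Q_f$. Here the simple structure of the summand is the key: each $g_q(X)=\sum_{y\in\calY}\tilde\indicator[S_f(X,y)\le q]$ is bounded in $[0,K]$ and, because $\tilde\indicator$ is the Sigmoid $1/(1+\exp(-(q-S)/\tau_\Sigmoid))$, it is monotone and $K/(4\tau_\Sigmoid)$-Lipschitz in the scalar $q$. A $(1/\sqrt n)$-net over the bounded range of $q$, Hoeffding's inequality at each net point, and a union bound (equivalently, a Rademacher/contraction argument for the one-parameter threshold class) yield $\tilde O(1/\sqrt n)$, with the logarithmic factor coming from the net cardinality.

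For term (B), I would combine two ingredients. First, the same Sigmoid smoothing makes $\ell(f,\cdot)$ globally $K/(4\tau_\Sigmoid)$-Lipschitz in $q$, so $|\ell(f,\widehat Q_f)-\ell(f,Q_f)|\le\frac{K}{4\tau_\Sigmoid}\,|\widehat Q_f-Q_f|$. Second, I would establish the quantile convergence $|\widehat Q_f-Q_f|\le\tilde O(1/\sqrt n)$: the DKW inequality gives $\sup_t|\widehat F(t)-F(t)|\le\tilde O(1/\sqrt n)$ with probability $1-\delta$, so the empirical rank $n\widehat F(Q_f)$ of the population quantile differs from $j^*\approx(1-\alpha)n$ by only $\tilde O(\sqrt n)$ in absolute rank, i.e. $\tilde O(1/\sqrt n)$ in normalized rank. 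Assumption \ref{assumption:bi_lipschitz} (specifically the upper Lipschitz constant $L_2$) then converts this normalized-rank gap into the score gap $|\widehat Q_f-Q_f|\le L_2\cdot\tilde O(1/\sqrt n)$. Multiplying by the Lipschitz constant of $\ell$ gives (B) $=\tilde O(1/\sqrt n)$, and adding (A) and (B) proves the claim.

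The main obstacle is term (B), and within it the passage from the distribution/rank error to the score error. DKW controls discrepancy in probability mass, but the quantity of interest, $|\widehat Q_f-Q_f|$, lives on the score axis, and without a regularity condition the two can be arbitrarily different (a flat region of $F$ produces a large score gap for a tiny mass gap). Assumption \ref{assumption:bi_lipschitz} is precisely the lever that rules this out, and I would handle the fact that $Q_f$ need not be an order statistic by sandwiching $S_{(j_0)}\le Q_f<S_{(j_0+1)}$, whose separation is $O(1/n)$ by the upper Lipschitz bound. Everything else is routine concentration; the crux is that, unlike the SA method, the rate is governed by the full sample size $n$ rather than the batch size $s$, because the lower-level QR uses all $n$ training scores to locate the quantile.
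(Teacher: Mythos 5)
Your proof is correct, and it reaches the same $\tilde O(1/\sqrt n)$ rate with the same three ingredients the paper uses (quantile concentration plus Assumption~\ref{assumption:bi_lipschitz}, Lipschitzness of the Sigmoid-smoothed set-size loss in $q$, and a concentration bound for the loss itself), but the triangle-inequality split is mirrored, and this changes which lemmas are needed. The paper splits at $\widehat \calL_c^\DPSM(f, Q_f)$: it first compares the empirical loss at the learned quantile $q^*_f \in \calU(f)$ with the empirical loss at the \emph{population} quantile $Q_f$ (Lipschitzness in $q$ times $|q^*_f - Q_f| \le \tilde O(1/\sqrt n)$, proved by a pointwise Chernoff bound on $\indicator[S_{f,i}\le Q_f]$ plus a sandwiching argument and Assumption~\ref{assumption:bi_lipschitz}), and then compares empirical to population loss at the \emph{fixed, deterministic} point $Q_f$, so plain Hoeffding (Lemma~\ref{lemma:generalization_error_conformal_loss}) suffices and no uniformity over $q$ is ever required. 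You instead split at $\ell(f,\widehat Q_f)$, which puts the generalization gap at the data-dependent point $\widehat Q_f$; you correctly recognize that pointwise Hoeffding is then inadmissible and compensate with a uniform deviation bound $\sup_q |\widehat \ell(f,q) - \ell(f,q)| \le \tilde O(1/\sqrt n)$ via an $\varepsilon$-net (or Rademacher/contraction) argument over the one-parameter, bounded, monotone, Lipschitz family — this is valid and only costs a log factor absorbed into $\tilde O$. Your quantile step also swaps the paper's pointwise Chernoff bound for DKW, which is stronger than needed but gives the same rate. What each route buys: the paper's split yields a shorter proof by keeping the random quantity confined to the Lipschitz term; your split is somewhat more robust, since the uniform bound would also license evaluating the generalization gap at \emph{any} data-dependent $q$ (e.g., an inexact solution of the lower-level problem returned by Algorithm~\ref{alg:DPSM}), which the paper's pointwise argument cannot do.
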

\begin{remark}
The above theorem directly answers the central question that we asked in the introduction, 
i.e., bilevel conformal training calibrates the conformal uncertainty with a standard $O(1/\sqrt{n})$ learning bound as in ERM \cite{mohri2018foundations,shalev2014understanding}.
In comparison with the learning error bounds of the SA-based conformal training method derived in Theorem \ref{theorem:learning_bounds_SA},
the DPSM bound improves by dropping the dependency on the batch size $s$.
\end{remark}

% \jana{
The improved learning bound of DPSM over SA-based methods is mainly due to  smaller error in estimating the true quantile, i.e., $| q - Q_f |$. 
Indeed, results shown in Fig \ref{fig:results_main} (c) and (d) empirically demonstrates this hypothesis. 
The practical significance of this result is that the smaller estimation error of DPSM results in improved conformal alignment of the classifier leading to smaller prediction set sizes.

\subsection{ Stochastic Optimization for DPSM }
\label{subsection:optimization}

{\bf Challenges for DPSM.}
There is a large literature on methods for bilevel optimization \cite{liu2021investigating,beck2023survey}.
However, 
existing methods typically require restrictive assumptions to guarantee convergence.
Some examples of such assumptions include requiring Lipscthiz Hessians \cite{chen2024finding,chen2021closing,dagreou2022framework,yang2021provably}; 
twice continuously differentiable \cite{kwoniclr2024on};
requiring solving a subproblem \cite{liu2020generic};
strongly convex lower-level subproblem \cite{gongicml2024nearly,kwon2023fully,ji2021bilevel,gong2024an,chen2024optimal,hongsiamtwo,khandurineurips2021a}; 
convexity in both upper and lower levels \cite{sabachsiam2017first}; 
unique lower-level solution \cite{liuneurips2022bome};
and 
Lipschitz smoothness \cite{shenicml2023on} (less restrictive).

In the DPSM problem (\ref{eq:conformal_training_bilevel}), the standard classification loss $\widehat \calL_\cls(f)$ and the conformal alignment loss $\widehat \calL_c^\DPSM(f,q)$ are not necessarily convex in $f$ and $q$.
Meanwhile, the QR loss $\widehat \calL^\QR(f, q)$ in the lower level is not continuously differentiable, not strongly convex, and typically has non-unique solution set even for the case without any tie in conformity scores. 
To the best of our knowledge, there is no prior stochastic optimization algorithm that solves DPSM problem with the assumptions of bilevel optimization literature satisfied and convergence guaranteed.
Consequently, it is {\it non-trivial and an open challenge to develop a simple stochastic gradient optimization algorithm to solve a problem that shares similar conditions with DPSM}.
We leave this challenge for future work.

\begin{algorithm}[t]
\caption{Direct Prediction Set Minimization (DPSM)}
\label{alg:DPSM}
\begin{algorithmic}[1]

\STATE \textbf{Input}: Training dataset $\calD_\tr$, regularization parameter $\lambda$, learning-rate $\eta, \gamma >0$, mis-coverage level $\alpha$

\STATE Randomly initialize the deep neural network $f_0$ and quantile regression model parameter $q_0 \in \R$

\STATE Randomly split $\calD_\tr$ into two disjoint subsets $\calD_1$ and $\calD_2$ with the same size such that $| \calD_1 | = | \calD_2 | = \frac{n}{2}$
\label{alg:line:split}

    \FOR{$t \leftarrow 0 : T-1$}

        \STATE Randomly sample two batches $\calB_t^1 \subset \calD_1, \calB_t^2 \subset \calD_2$
        
        \STATE Compute 
        $\widehat \nabla_{f,t}
        \leftarrow 
        \widehat \nabla_f\widehat \calL_\cls(f_{t-1})$ on batch $\calB_t^1$
        \label{alg:line:ce_gradient_f}

        \STATE Compute 
        $\widehat \nabla_{q,t}^\QR
        \leftarrow 
        \widehat \nabla_q \widehat \calL^\QR(f_{t-1}, q_{t-1})$ on batch $\calB_t^1$
        \label{alg:line:qr_loss_gradient_q}
        
        \STATE Compute $\widehat \nabla_{f,t}^\DPSM 
        \leftarrow 
        \widehat \nabla_f \widehat \calL_c^\DPSM(f_{t-1}, q_{t-1})$ on batch $\calB_t^2$
        \label{alg:line:conformal_loss_gradient_f}

        \STATE $f_{t+1} 
        \leftarrow 
        f_t - \eta \big( \widehat \nabla_{f,t} + \lambda \widehat \nabla_{f,t}^\DPSM \big)$
        \label{alg:line:update_f}
          
        \STATE $q_{t+1} 
        \leftarrow 
        q_t - \gamma \widehat \nabla_{q,t}^\QR$ 
        \label{alg:line:update_q}
        
    \ENDFOR
    \label{alg:line:reinitialize}
\STATE \textbf{Output}: the trained classification model $f_T$
\end{algorithmic}
\end{algorithm}

{\bf Penalty-based Reformulation.} 
Given that the lower-level function in DPSM is not strongly convex, implicit gradient methods,e.g., \cite{ghadimi2018approximation}, cannot handle it.
Penalty-based methods is another large family in the bilevel optimization literature.
A recent paper \cite{chen2024penaltybased} studied a penalty-based method when the lower-level problem satisfies H\"olderian error bound, and meanwhile both upper and lower level objectives are non-smooth, which is one of the most relevant setting to the DPSM problem at hand.
Hence, we reformulate DPSM as a penalty-based problem and analyze the global solution of the penalty-based DPSM problem with a penalty parameter $\sigma$.
\begin{align}
\label{eq:penalty_baesd_DPSM}
\min_{f,q}
&
\widehat \calL_\cls(f) + \lambda \cdot \widehat \calL_c^\DPSM(f,q) 
\\
&
+ \sigma ( \widehat \calL^\QR(f,q) - \widehat \calL^\QR(f,q^*) )
,
\text{ where }
q^* \in \calU(f).
\nonumber
\end{align}
% where $q^* \in \calU(f)$.

\begin{definition}
\label{definition:HEB}
(H\"olderian error bound)
A function $h(x)$, where its domain $\dom(h)$ is a closed convex set, satisfies H\"olderian error bound if there exists $\nu \geq 1$ and $c > 0$ s.t.
\begin{align*}
\dist(x, X^*)^\nu 
\leq 
c ( h(x) - \min_{x' \in X^*} h(x') ),
\forall x \in \dom(h),
\end{align*}
where $X^* = \arg\min_{x \in \dom(h)} h(x)$ is the optimal solution set for minimizing $h(x)$ and 
$\dist(x, X^*) = \min_{x'} \| x - x'\|$ denotes the Euclidean distance between $x$ and $X^*$.
\end{definition}

\begin{remark}
H\"olderian error bound (HEB) is a well-studied condition in optimization \cite{pang1997error,bolte2017error}.
It captures the local sharpness of the objective function that helps accelerate the optimization convergence \cite{roulet2017sharpness,yangjmlr2018rsg}.
The following result shows that $\widehat \calL_c^\DPSM(f, q)$ satisfies HEB w.r.t. $q$.
\end{remark}

\begin{lemma}
\label{lemma:HEB_QR_loss}
(HEB for QR loss)
Suppose there is no tie in conformity scores $\{S_i\}_{i=1}^n$.
Then for a fixed $f$, $\widehat \calL^\QR(f, q)$ satisfies HEB w.r.t. $q$ for the exponent $\nu=1$ and $c > 0$.
\end{lemma}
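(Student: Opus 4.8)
\emph{Proof plan.} The plan is to exploit that, for a fixed classifier $f$, the lower-level objective $q \mapsto \widehat\calL^\QR(f,q)$ is a one-dimensional convex piecewise-linear function, for which the Hölderian error bound with exponent $\nu=1$ reduces to the elementary statement that the slope of $h$ is bounded away from zero outside the set of minimizers. Writing $S_i \triangleq S_f(X_i,Y_i)$ and $h(q) \triangleq \widehat\calL^\QR(f,q) = \frac1n\sum_{i=1}^n \rho_\alpha(q,S_i)$, each summand $\rho_\alpha(\cdot,S_i)$ is convex and piecewise linear with a single kink at $S_i$ (left slope $-(1-\alpha)$, right slope $\alpha$), so $h$ is convex and piecewise linear on $\dom(h)=\R$, which is closed and convex. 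Hence $X^*=\arg\min_q h(q)$ is a nonempty closed interval, and the sharpness inequality I am after, $\dist(q,X^*)\le c\,(h(q)-\min_{q'}h(q'))$, is exactly HEB with $\nu=1$.

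Second, I would compute the (sub)derivative to make the slope explicit. For $q\notin\{S_i\}_{i=1}^n$, termwise differentiation gives $h'(q)=\frac1n\big[\alpha\,N_<(q)-(1-\alpha)N_>(q)\big]$ with $N_<(q)=|\{i:S_i<q\}|$ and $N_>(q)=|\{i:S_i>q\}|$; the no-tie assumption yields $N_<(q)+N_>(q)=n$ and therefore the clean form $h'(q)=\tfrac1n N_<(q)-(1-\alpha)$. On each interval $(S_{(j)},S_{(j+1)})$ this equals $\tfrac{j}{n}-(1-\alpha)$, a nondecreasing step function of $q$ (reconfirming convexity), which is negative while $j<(1-\alpha)n$ and positive while $j>(1-\alpha)n$. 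This also pins down $X^*$: a single kink when $(1-\alpha)n\notin\mathbb Z$, and one inter-kink interval when $(1-\alpha)n\in\mathbb Z$.

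Third, I would establish the sharpness constant and integrate. Set $\kappa \triangleq \min\{\,|\tfrac{j}{n}-(1-\alpha)| : 0\le j\le n,\ \tfrac{j}{n}\ne 1-\alpha\,\}>0$. Since $h'$ is nondecreasing and changes sign across $X^*$, every subgradient of $h$ strictly to the right of $X^*$ is at least $\kappa$, and strictly to the left is at most $-\kappa$. Integrating the slope from the nearest endpoint of $X^*$ out to an arbitrary $q$ then gives $h(q)-\min_{q'}h(q') \ge \kappa\,\dist(q,X^*)$, i.e. HEB holds with $\nu=1$ and $c=1/\kappa$. I expect the main obstacle to be bookkeeping rather than conceptual: carefully handling the nonsmooth kinks via subdifferentials, and separating the two cases for $(1-\alpha)n$ so that the location of $X^*$ and the sign change of $h'$ are correct and $\kappa>0$ is verified in each. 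The no-tie hypothesis is precisely what keeps every slope jump equal to $1/n$ and guarantees the clean formula for $h'$; with it, $\kappa$ is just the smallest nonzero distance from $1-\alpha$ to the grid $\{j/n\}$ and is manifestly positive.
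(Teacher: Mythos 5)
Your proof is correct, and it takes a genuinely different route from the paper's. The paper's own argument (the appendix proof of Lemma \ref{lemma:HEB_QR_loss}) works entirely with function values: for a point $q_1$ and its nearest minimizer $q_2$, it expands $n\big(\widehat\calL^\QR(f,q_1)-\widehat\calL^\QR(f,q_2)\big)$ by splitting the scores into the groups $\{S_i\le q_1\}$, $\{q_1<S_i<q_2\}$, $\{S_i\ge q_2\}$, cancels the two outer groups using the quantile property of $q_2$, and lower-bounds the surviving term $\sum_i \indicator[q_1<S_i<q_2](S_i-q_1)$ by $\tfrac{c'\tilde n}{n}(q_2-q_1)$ via a weighted-average argument, where $\tilde n$ is the number of scores strictly between $q_1$ and $q_2$ and $c'\in(0,1)$ is chosen per point (plus a symmetric case on the other side of $\calU(f)$). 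You instead work with the (sub)derivative: the closed form $h'(q)=N_<(q)/n-(1-\alpha)$ off the kinks, the observation that every slope attained outside $X^*$ lies on the grid $\{j/n-(1-\alpha)\}$ and hence has absolute value at least $\kappa=\min\{|j/n-(1-\alpha)|:\ j/n\neq 1-\alpha\}>0$, and an integration outward from the nearest endpoint of $X^*$. What your route buys is substantive: the constant $c=1/\kappa$ is explicit and \emph{uniform} in $q$, which is exactly what Definition \ref{definition:HEB} requires, whereas the paper's constant $c=n/(\tilde n c')$ depends on the query point and degenerates (division by $\tilde n=0$) precisely when $q_1$ lies in the linear piece adjacent to the optimal set — in that case the paper's intermediate identity (\ref{eq:leb_proof_intermediate2}) would even force $g(q_1)=g(q_2)$, an off-by-one artifact of its cancellation step, while your slope argument handles this case with no effort. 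The paper's computation, for its part, stays elementary (no subdifferential calculus, only pinball-loss algebra), but at the cost of this looseness. One minor overstatement on your side: the no-tie hypothesis is not what makes $\kappa>0$ — with ties the slopes still lie on the grid $\{j/n-(1-\alpha)\}$, only with jumps larger than $1/n$, so your argument goes through verbatim; the assumption is harmless here since the lemma imposes it anyway.
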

\begin{corollary}
\label{corollary:global_local_solution}
(Global solution of penalized problem)
Suppose $\widehat \calL_\cls(f)$ and $\widehat \calL_c^\DPSM(f,q)$ is $L_\cls$- and $L_\DPSM$-Lipschitz continuous, respectively.
For any given $\epsilon > 0$,
let 
$l = ( L_\cls + L_\DPSM)$
and
$\sigma = c l$ in (\ref{eq:penalty_baesd_DPSM}).
Then the $\epsilon$-optimal solution of the penalized problem is an $(\epsilon, \epsilon/l)$-optimal solution of the original problem.
\end{corollary}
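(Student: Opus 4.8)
The plan is to recast the penalized problem (\ref{eq:penalty_baesd_DPSM}) in value-function form and establish an exact-penalty property, then convert an $\epsilon$-optimal point into the two required error bounds. Write $F(f,q) \triangleq \widehat \calL_\cls(f) + \lambda\,\widehat \calL_c^\DPSM(f,q)$ for the upper objective, $g(f,q)\triangleq\widehat \calL^\QR(f,q)$ for the lower objective, $g^*(f)\triangleq\min_{q'} g(f,q')$ for the lower value function (attained on $\calU(f)$), and $P_\sigma(f,q)\triangleq F(f,q)+\sigma\,(g(f,q)-g^*(f))$ for the penalized objective. Let $v^* \triangleq \min_{f,\, q\in\calU(f)} F(f,q)$ denote the optimal value of the original bilevel problem (\ref{eq:conformal_training_bilevel}). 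By the stated hypotheses $F$ is $l$-Lipschitz with $l=L_\cls+L_\DPSM$, and Lemma \ref{lemma:HEB_QR_loss} supplies the sharp ($\nu=1$) H\"olderian error bound $\dist(q,\calU(f))\le c\,(g(f,q)-g^*(f))$ for every fixed $f$.

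First I would prove the exact-penalty lower bound $P_\sigma(f,q)\ge v^* + (\sigma-cl)\,(g(f,q)-g^*(f))$ for all $(f,q)$. The key step is a projection argument: let $\Pi(q)$ be the projection of $q$ onto the closed set $\calU(f)$, so that $\|q-\Pi(q)\|=\dist(q,\calU(f))$ and $(f,\Pi(q))$ is feasible, whence $F(f,\Pi(q))\ge v^*$. Lipschitzness of $F$ in $q$ gives $F(f,q)\ge F(f,\Pi(q))-l\,\dist(q,\calU(f))\ge v^*-l\,\dist(q,\calU(f))$, and substituting the error bound of Lemma \ref{lemma:HEB_QR_loss} converts $-l\,\dist$ into $-cl\,(g-g^*)$, which yields the claimed bound. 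Since feasible points incur zero penalty, $\min_{f,q}P_\sigma\le v^*$; combined with the lower bound this shows $\min_{f,q}P_\sigma=v^*$ whenever $\sigma\ge cl$, i.e. the exact-penalty (value-match) property.

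Next I would translate an $\epsilon$-optimal solution $(\bar f,\bar q)$ of the penalized problem, i.e. $P_\sigma(\bar f,\bar q)\le \min_{f,q}P_\sigma+\epsilon=v^*+\epsilon$. For the upper-level component, nonnegativity of the penalty gives $F(\bar f,\bar q)\le P_\sigma(\bar f,\bar q)\le v^*+\epsilon$. For the feasibility component, I would feed the penalized suboptimality back into the exact-penalty lower bound: $v^*+\epsilon\ge P_\sigma(\bar f,\bar q)\ge v^*+(\sigma-cl)\,(g(\bar f,\bar q)-g^*(\bar f))$, so the lower-level gap obeys $g(\bar f,\bar q)-g^*(\bar f)\le \epsilon/(\sigma-cl)$; applying Lemma \ref{lemma:HEB_QR_loss} once more then gives $\dist(\bar q,\calU(\bar f))\le c\,\epsilon/(\sigma-cl)$, which equals $\epsilon/l$ for the stated scaling of $\sigma$ relative to $cl$. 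Together these two estimates certify that $(\bar f,\bar q)$ is $(\epsilon,\epsilon/l)$-optimal for (\ref{eq:conformal_training_bilevel}).

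I expect the delicate point to be the feasibility component, because it is exactly here that the penalty weight interacts with the sharpness of the lower level. The value-match only needs $\sigma\ge cl$, but converting the penalized suboptimality into a distance bound $\epsilon/l$ requires a strictly positive slack $\sigma-cl$ on the order of $cl$, so I would be careful that the chosen $\sigma$ leaves enough slack to absorb the Lipschitz constant $l$ through the error bound. A secondary obstacle is that the lower-level QR/pinball loss is non-smooth and not strongly convex while $F$ is non-convex in $f$, so none of the argument may rely on gradients or on a unique lower-level minimizer; everything must pass through the sharp $\nu=1$ error bound of Lemma \ref{lemma:HEB_QR_loss} and the projection onto $\calU(f)$, applied pointwise in $f$, following the penalty-based bilevel framework of \cite{chen2024penaltybased}.
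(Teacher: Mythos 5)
Your proposal is not a rederivation of an argument the paper actually gives: the paper's entire ``proof'' of Corollary~\ref{corollary:global_local_solution} is the single sentence that it follows immediately from Theorem~2.7 of \cite{chen2024penaltybased} combined with Lemma~\ref{lemma:HEB_QR_loss}. What you wrote is a self-contained reconstruction of the exact-penalty mechanism that the paper outsources to that citation: the value-function form $P_\sigma = F + \sigma(g-g^*)$, the projection-plus-Lipschitz lower bound $F(f,q)\ge v^* - l\,\dist(q,\calU(f))$ (note only Lipschitzness in $q$ is used here, which holds since $\calU(f)$ is a closed interval of $\R$ for the pinball loss), substitution of the $\nu=1$ error bound from Lemma~\ref{lemma:HEB_QR_loss}, and the two readouts --- upper-level gap from nonnegativity of the penalty, feasibility from the slack $\sigma - cl$. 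Each of these steps is correct, and this is the right architecture for the claim; in that sense your write-up is strictly more informative than the paper's one-line proof.

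However, there is a genuine mismatch between what your argument proves and what the corollary states, and you flagged it without resolving it. Your feasibility step gives $g(\bar f,\bar q)-g^*(\bar f)\le \epsilon/(\sigma-cl)$ and hence $\dist(\bar q,\calU(\bar f))\le c\epsilon/(\sigma-cl)$, which equals the claimed $\epsilon/l$ precisely when $\sigma - cl = cl$, i.e.\ $\sigma = 2cl$. The corollary, and the penalized problem (\ref{eq:penalty_baesd_DPSM}) as instantiated, set $\sigma = cl$, at which your bound degenerates to $\epsilon/0$. This cannot be repaired by a cleverer argument, because at $\sigma = cl$ the conclusion is false in general: fix $f$, let the lower objective behave like $g(q)-g^* = |q-q^\ast|/c$ on a neighborhood of its minimizer $q^\ast$ (which realizes the HEB constant $c$), and let the upper objective decrease at exactly rate $l$ as $q$ moves to one side of $q^\ast$; then $F + cl\,(g-g^*)$ is constant on that side, so there exist \emph{exactly} optimal penalized solutions whose distance to $\calU(f)$ is bounded away from $0$ independently of $\epsilon$. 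The correct reading is therefore that the penalty parameter must carry strictly positive slack --- with $\sigma = 2cl$ your constants come out to exactly $(\epsilon,\epsilon/l)$ --- and the corollary's $\sigma = cl$ drops that factor. Your proof establishes the corollary in that corrected form; what your write-up still needs is to state this explicitly as a correction of the hypothesis, rather than leaving the division by $\sigma - cl$ implicit while nominally working under $\sigma = cl$.
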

This is an immediate result from Theorem 2.7 in \cite{chen2024penaltybased} 
and 
connects the solution of the penalized DPSM problem (\ref{eq:penalty_baesd_DPSM}) to that of the original DPSM problem (\ref{eq:conformal_training_bilevel}).

{\bf Simple Stochastic Gradient-based Algorithm.}
We further present a simple stochastic gradient algorithm to optimize Problem (\ref{eq:conformal_training_bilevel}).
We summarize this approach in Algorithm \ref{alg:DPSM}.
It mainly follows the common schemes in standard stochastic gradient methods to iteratively update the two variables $f$ and $q$ simultaneously.
First, it randomly splits the training set $\calD_\tr$ into two disjoint subsets $\calD_1$ and $\calD_2$ with the same size (Line \ref{alg:line:split}).
Next, given the fixed model $f_{t-1}$ and quantile variable $q_{t-1}$, we compute three stochastic gradients over mini-batches in each training iteration:
(i) $\widehat \nabla_f \widehat \calL_\cls(f_{t-1})$ on mini-batches of $\calD_1$ (Line \ref{alg:line:ce_gradient_f}),
(ii) $\widehat \nabla_f \widehat \calL_c^\DPSM(f_{t-1}, q_{t-1})$ on mini-batches of $\calD_2$ (Line \ref{alg:line:conformal_loss_gradient_f}), and
(iii) $\widehat \nabla_q \widehat \calL^\QR(f_{t-1}, q_{t-1})$ on mini-batches of $\calD_1$ (Line \ref{alg:line:qr_loss_gradient_q}).
Next, we update the model $f_t$ (Line \ref{alg:line:update_f}) and quantile $q_t$ (Line \ref{alg:line:update_q}) accordingly with their learning rates $\eta_t$ and $\gamma_t$.

\begin{table*}[!t]
\centering
\caption{
\textbf{HPS $\shortrightarrow$ Training, HPS and APS $\shortrightarrow$ Calibration/Testing} (details in Appendix \ref{subsec:train_test_strategies}): 
The average prediction set size (APSS) on three datasets with two deep models trained with HPS and calibrated/tested with HPS and APS when $\alpha = 0.1$. 
$\downarrow$ indicates the percentage improvement in predictive efficiency compared to the best existing method, whereas $\uparrow$ denotes a percentage decrease in predictive efficiency. 
All results are averaged over 10 different runs, with the mean and standard deviation reported.
DPSM significantly outperforms almost all the best baselines with $20.46\%$ 
reduction in prediction set size on average across all three datasets and two scores. 
}
\label{tab:cvg_set_hps_train_main}
\resizebox{\textwidth}{!}{
\begin{NiceTabular}{@{}c|cccc|cccc @{}}
\toprule
\multirow{2}{*}{Model} & \multicolumn{4}{c|}{HPS} & \multicolumn{4}{c}{APS} \\ 
\cmidrule(lr){2-5} \cmidrule(lr){6-9}
& CE & CUT & ConfTr & DPSM & CE & CUT & ConfTr & DPSM  \\ 
\midrule
\Block{1-*}{CalTech-101}
\\
\midrule
DenseNet 
& 3.50 $\pm$ 0.10  & 1.62 $\pm$ 0.030 & 4.10 $\pm$ 0.19  & \textbf{0.90 $\pm$ 0.003} ($\downarrow$ 44.44\%) 
& 8.44  $\pm$ 0.15 & 3.87 $\pm$ 0.11  & 8.64 $\pm$ 0.21  & \textbf{1.58 $\pm$ 0.022} ($\downarrow$ 59.17\%) 
\\ 
ResNet   
& 1.57 $\pm$ 0.018 & 1.64 $\pm$ 0.049 & 1.52 $\pm$ 0.040 & \textbf{0.91 $\pm$ 0.005} ($\downarrow$ 44.51\%) 
& 4.50 $\pm$ 0.059 & 4.59 $\pm$ 0.072 & 3.61 $\pm$ 0.08 & \textbf{1.74 $\pm$ 0.031} ($\downarrow$ 51.80\%) 
\\ 
\midrule
\Block{1-*}{CIFAR-100}
\\
\midrule
DenseNet 
& 2.59 $\pm$ 0.053 & 2.27 $\pm$ 0.09  & 2.28 $\pm$ 0.07  & \textbf{2.17 $\pm$ 0.086}  ($\downarrow$ 4.82\%) 
& 3.38 $\pm$ 0.12 & \textbf{2.41 $\pm$ 0.11} & 3.08 $\pm$ 0.11   & 2.64 $\pm$ 0.086 ($\uparrow$ 8.71\%)
\\ 
ResNet   
& 3.39 $\pm$ 0.10  & 3.01 $\pm$ 0.11  & 3.77 $\pm$ 0.14  & \textbf{2.94 $\pm$ 0.08}  ($\downarrow$ 2.32\%) 
& 3.98 $\pm$0.13 & 3.81 $\pm$ 0.08 & 4.90 $\pm$ 0.18  & \textbf{3.53 $\pm$ 0.11} ($\downarrow$ 7.35\%)
\\ 
\midrule
\Block{1-*}{iNaturalist}
\\
\midrule
DenseNet 
& 94.58 $\pm$ 3.45 & 77.13 $\pm$ 3.72 & 79.93 $\pm$ 3.70 & \textbf{61.22 $\pm$ 2.49} ($\downarrow$ 20.63\%) 
& 101.97 $\pm$ 3.16 & 88.93 $\pm$ 3.06 & 90.79 $\pm$ 3.17 & \textbf{75.98 $\pm$ 2.99} ($\downarrow$ 14.56\%)
\\ 
ResNet   
& 99.48 $\pm$ 8.95 & 73.09 $\pm$ 2.00 & 76.73 $\pm$ 3.87 & \textbf{70.04 $\pm$ 1.99} ($\downarrow$ 4.17\%) 
& 95.81 $\pm$ 3.80 & \textbf{79.00 $\pm$ 2.21} & 88.70 $\pm$ 3.88   & 79.43 $\pm$ 2.39 ($\uparrow$  0.54\%) 
\\ 
\bottomrule
\end{NiceTabular}
}
\end{table*}

It is worth noting that the proposed DPSM optimization algorithm is particularly tailored for our bilevel conformal training problem (\ref{eq:conformal_training_bilevel}). 
{\bf 1)} We employ $\calD_1$ to compute $\widehat \nabla_f \calL_\cls(f)$ and $\widehat \nabla_q \widehat \calL^\QR(f, q)$, while evaluate $\widehat \nabla_f \widehat \calL_c^\DPSM(f, q)$ on $\calD_2$.
This has been used in conformal training \cite{einbinder2022training} and helps to prevent over-fitting.
{\bf 2)} We rely on simple first-order information to update, rather than hyper-gradient or penalization from the existing bilevel optimization methods.
This design keeps the iterative update simple especially for practitioners. Our empirical results in Figure \ref{fig:results_main} demonstrate stable convergence of both upper and lower level problems of DPSM by employing standard practices for learning rates $\eta$ and $\gamma$.

\section{Experiments and Results}
\label{section:experiment}
This section describes our experimental evaluation of the proposed DPSM algorithm and baselines on real datasets.
\subsection{Experimental Setup}
\label{subsection:experiment_setup}

\begin{figure*}[!t]
    \centering
    \begin{minipage}[t]{0.32\linewidth}
    \centering
    \textbf{(a)} Caltech-101
    \includegraphics[width = \linewidth]{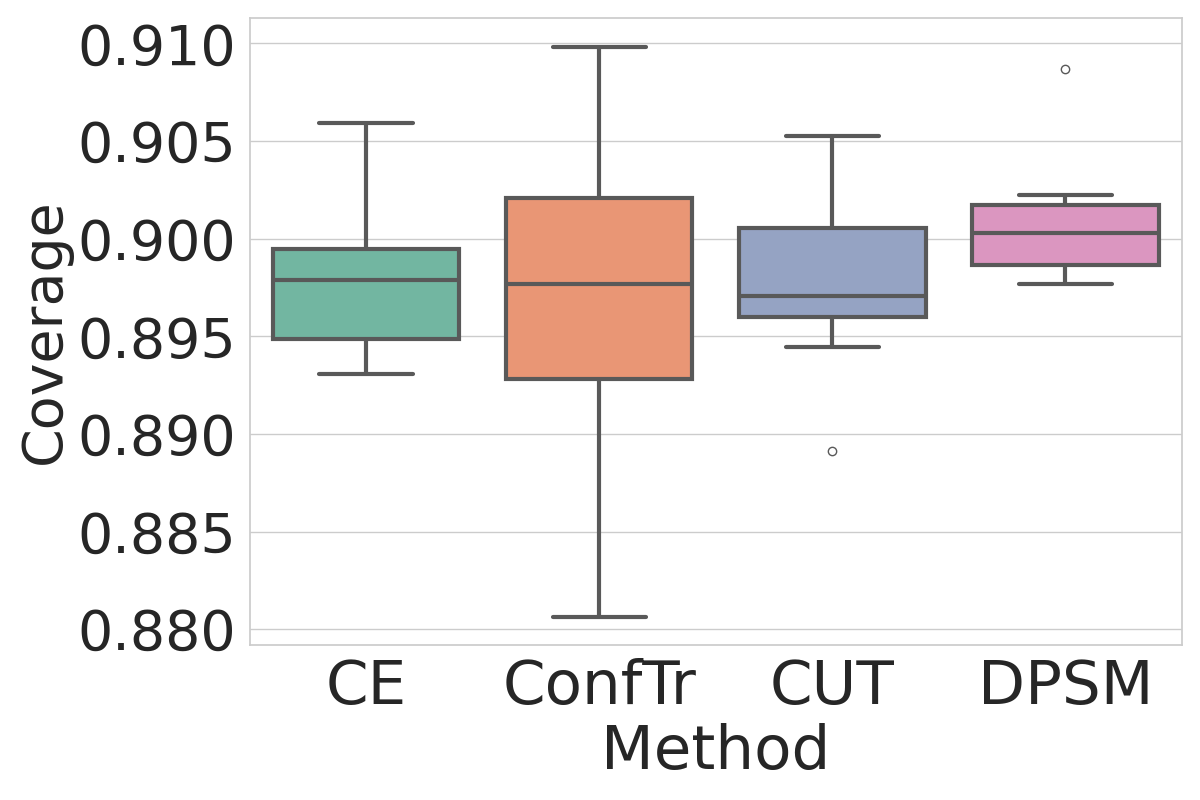}
    \\
    \includegraphics[width = \linewidth]{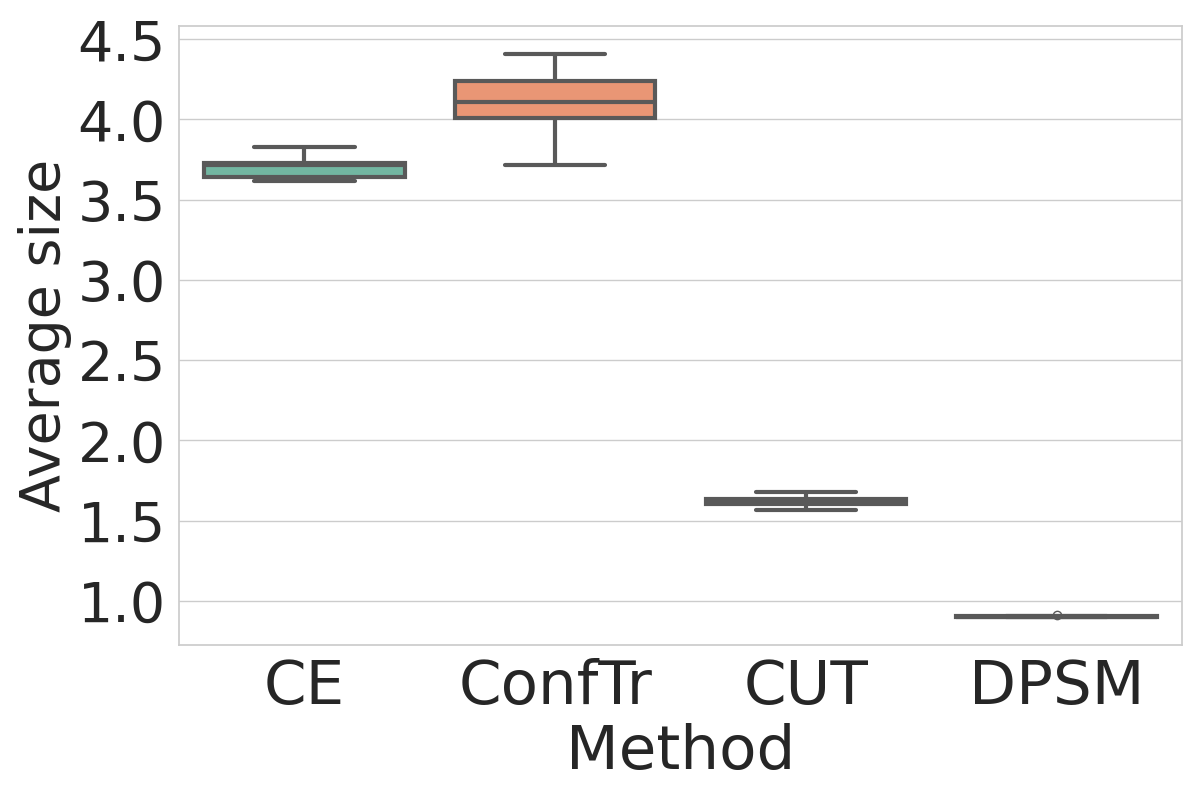}
    \end{minipage} 
    \begin{minipage}[t]{0.32\linewidth}
    \centering
    \textbf{(b)} CIFAR-100
    \includegraphics[width=\linewidth]{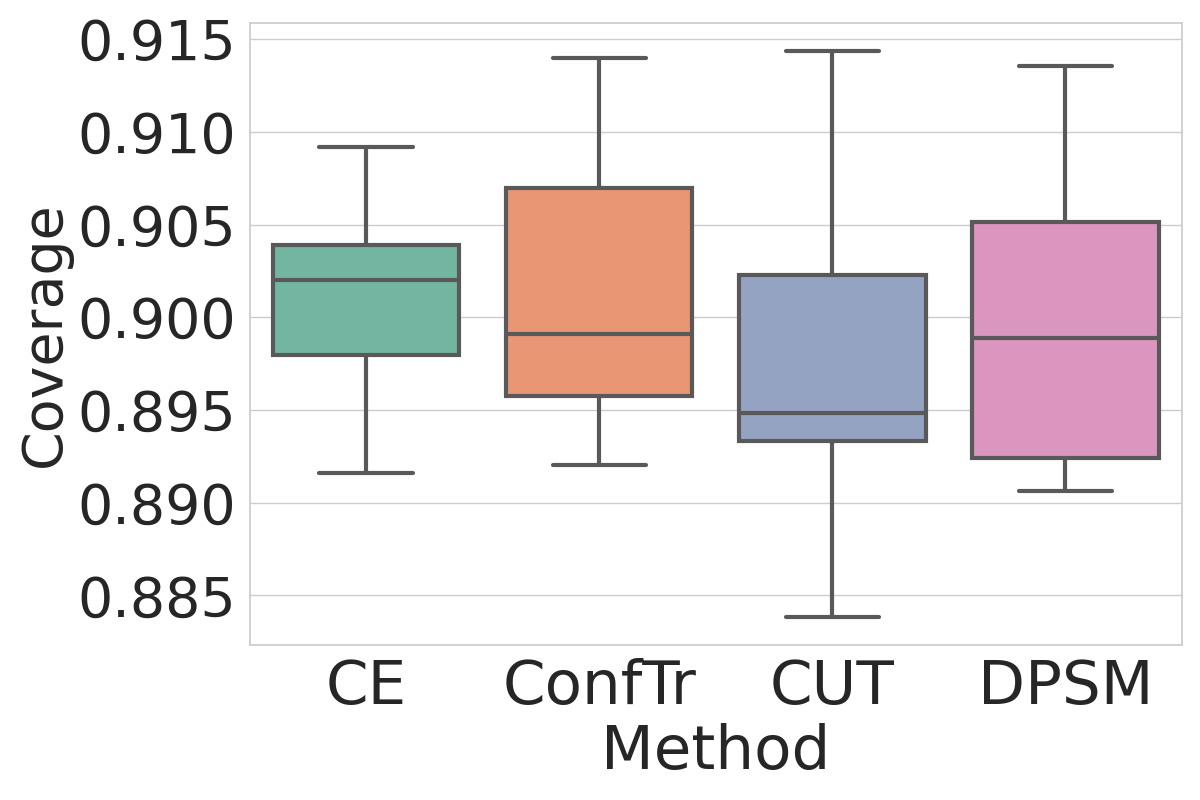}
    \\
    \includegraphics[width=\linewidth]{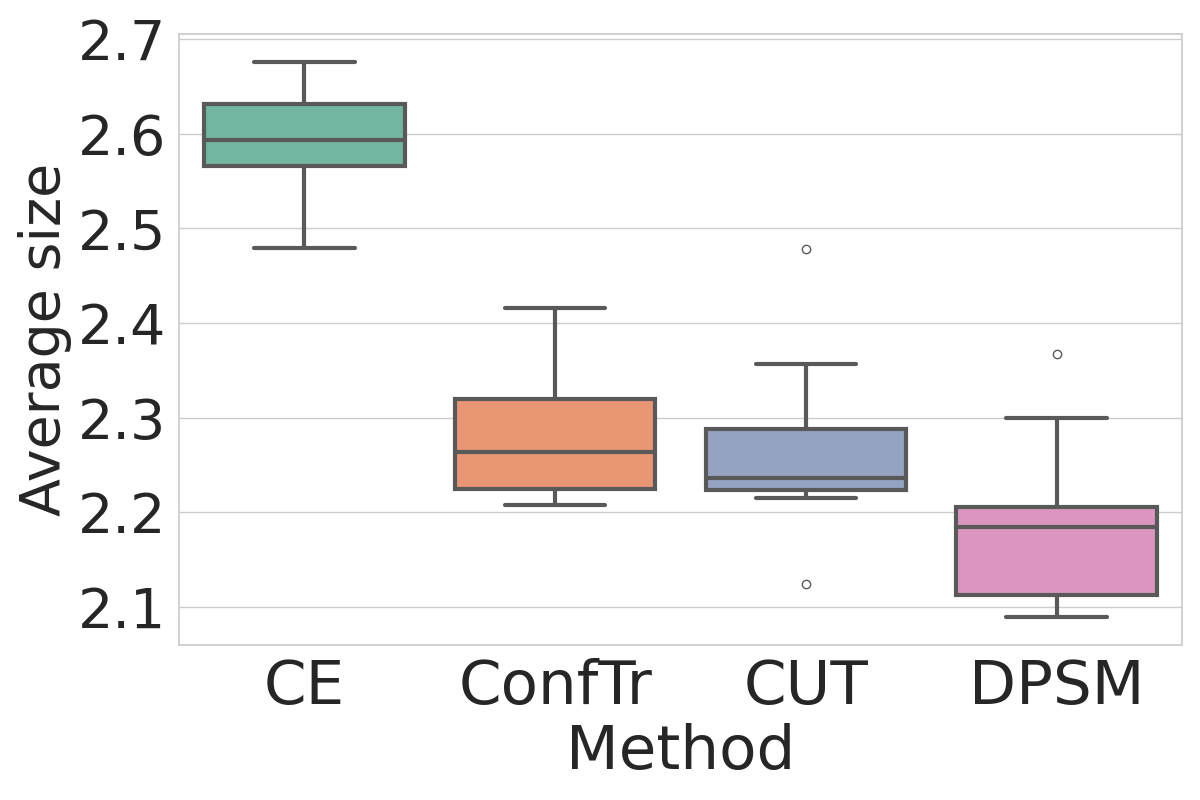}
    \end{minipage} 
    \begin{minipage}[t]{0.32\linewidth}
    \centering
    \textbf{(c)} iNaturalist
    \includegraphics[width=\linewidth]{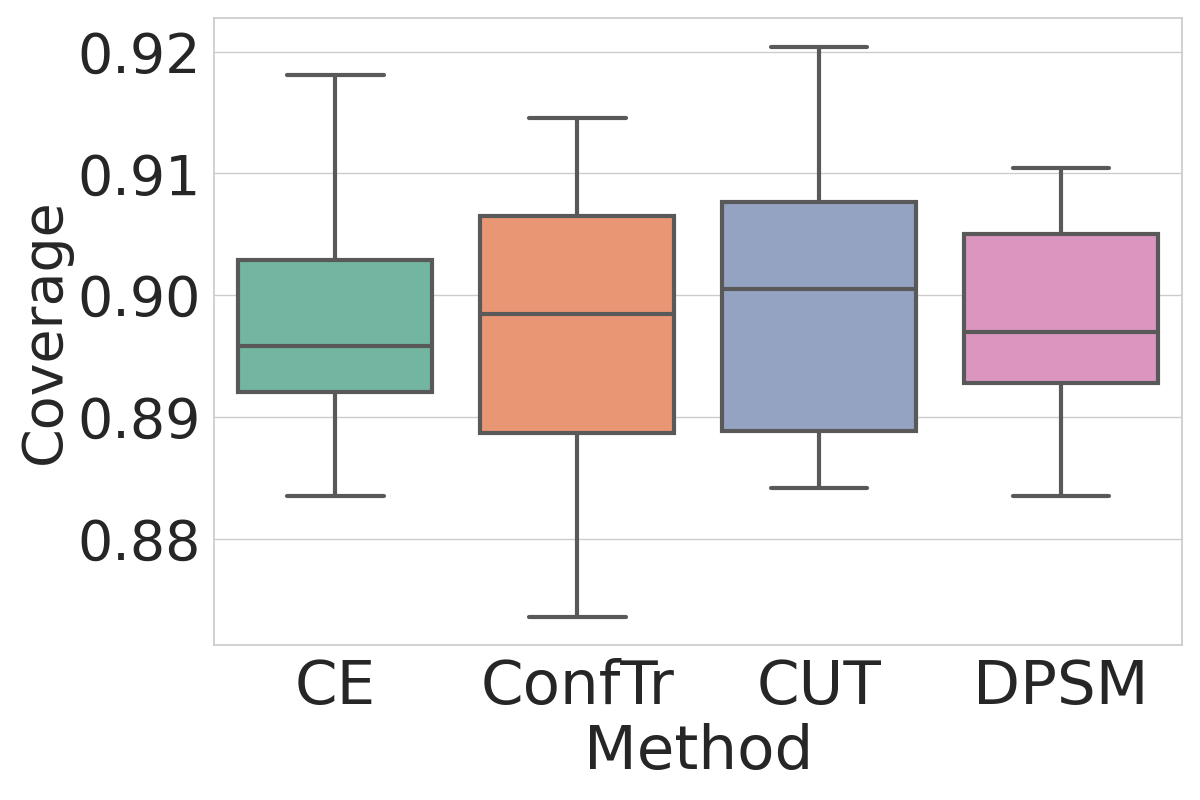}
    \\
    \includegraphics[width=\linewidth]{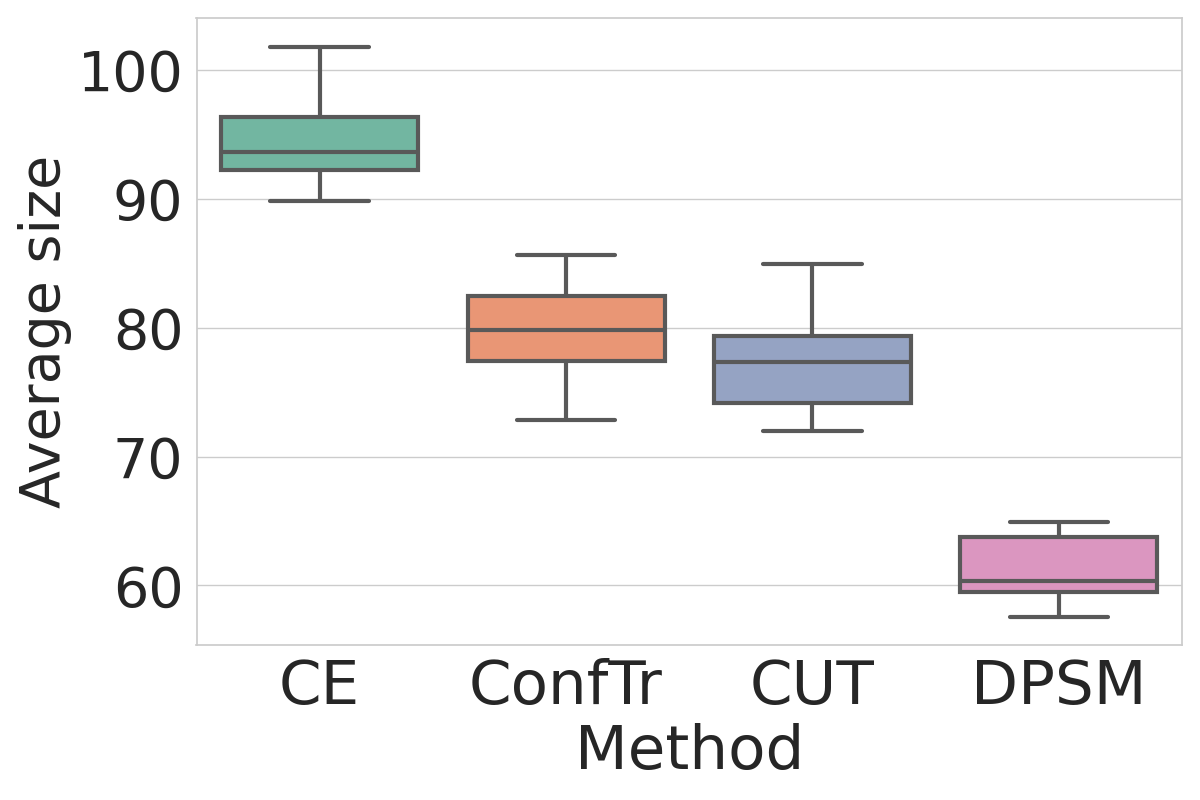}
    \end{minipage} 
    \vspace{-0.15in}
    \caption{
    \textbf{Box plots of coverage (Top row) and APSS (Bottom row)} of all methods using DenseNet and HPS score.
    DPSM achieves significantly smaller prediction set size while maintaining the valid coverage.
    }
    \label{fig:results_overall}
    \vspace{-2.0ex}
\end{figure*}

\noindent {\bf Datasets.}
We utilize the benchmark datasets CIFAR-100 \cite{krizhevsky2009learning}, Caltech-101 \cite{FeiFei2004LearningGV}, 
and iNaturalist \cite{van2018inaturalist}, where all details are summarized in Table \ref{tab:Data_stat} of Appendix \ref{subsec:train_test_strategies}.

\noindent {\bf Deep Models.}
We train two widely used neural network architectures with HPS scoring function: ResNet \cite{he2016deep} and DenseNet \cite{huang2017densely}. The training hyperparameters are provided in Table \ref{tab:finetune_hyper_params} of Appendix.

\noindent {\bf Conformity scoring functions.}
We consider three non-conformity scoring functions:
HPS \cite{vovk2005algorithmic, lei2013distribution} APS \cite{romano2020classification}, and RAPS \cite{angelopoulos2021uncertainty}. 
The detailed review of these scoring functions is in Appendix \ref{appendix:section:additional_details}.

\noindent {\bf Baseline methods.}
We select three different training methods as baselines:
(i) CE, training with standard cross-entropy loss only; 
(ii) CUT \cite{einbinder2022training}, mitigating the overconfidence of the deep neural network by penalizing the gap between the CDF of the non-conformity scores and the uniform distribution; 
and (iii) ConfTr \cite{stutz2021learning}, aiming to decrease prediction set size by inducing conformal loss as the average differentiable prediction set size. 
The details of CUT and ConfTr are in Appendix \ref{appendix:section:additional_details}.

\noindent {\bf Evaluation metrics.}
Our first evaluation metric is the marginal coverage (Marg-Cov), defined as $\text{MargCov} = \frac{1}{|\calD_\test|}\sum_{i \in \calD_\test} \indicator[Y_{i} \in C_f({X}_{i})]$.
Our second evaluation metric is the prediction set size (Avg-Set-Size), defined as $\text{Avg-Set-Size} = \frac{1}{|\calD_\test|}\sum_{i \in \calD_\test} |C_f({X}_{i})|$.
To make the conformal alignment loss smooth, DPSM and ConfTr use the Sigmoid function to approximate the average prediction set size (APSS) during training, 
so we use the third evaluation metric as the Soft Set Size (Avg-Soft-Size), defined as $\text{Avg-Softset-Size} = \frac{1}{|\calD_\test|}\sum_{i \in \calD_\test} \sum_{y \in \calY} \tilde \indicator \big [ S_f(X, y) \leq \widehat q_f \big ]$, recall that
$\tilde \indicator[ S \leq q ] = 1/(1+\exp(-(q-S)/\tau_\Sigmoid))$ with a hyperparameter $\tau_\Sigmoid$.

\subsection{Results and Discussion}

\begin{figure}[!ht]
    \centering
    \begin{minipage}[t]{0.48\linewidth}
    \centering
    \textbf{(a)} Upper loss
    \end{minipage} 
    \begin{minipage}[t]{0.48\linewidth}
    \centering
    \textbf{(b)} Lower loss
    \end{minipage} 
    % \begin{minipage}[t]{0.24\linewidth}
    % \centering
    % \textbf{Estimation error of quantiles}
    % \end{minipage} 
    % \begin{minipage}[t]{0.24\linewidth}
    % \centering
    % \textbf{Average soft set size}
    % \end{minipage} 
    \hfill
    \begin{minipage}[t]{0.48\linewidth}  
    \centering 
        \includegraphics[width=\linewidth]{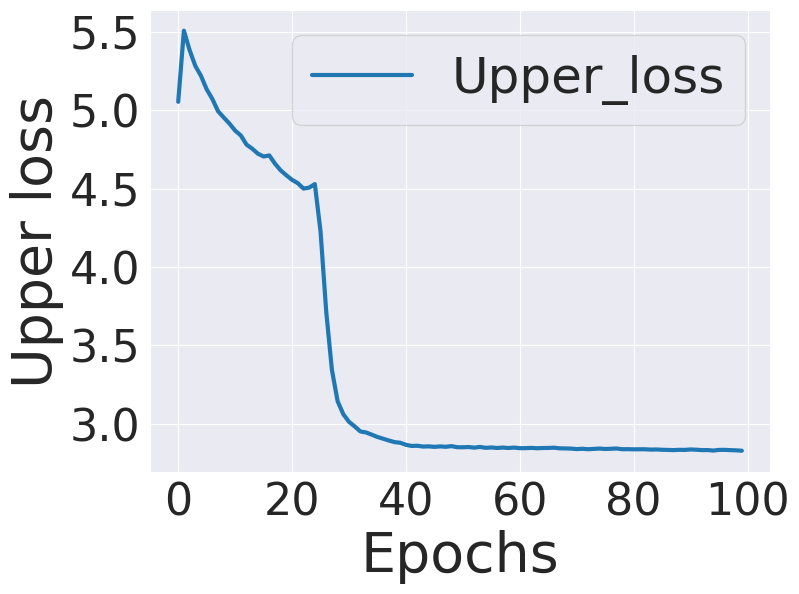}
    \end{minipage}
    % \vspace{0.1in}
    \begin{minipage}[t]{0.48\linewidth}
        \centering 
        \includegraphics[width=\linewidth]{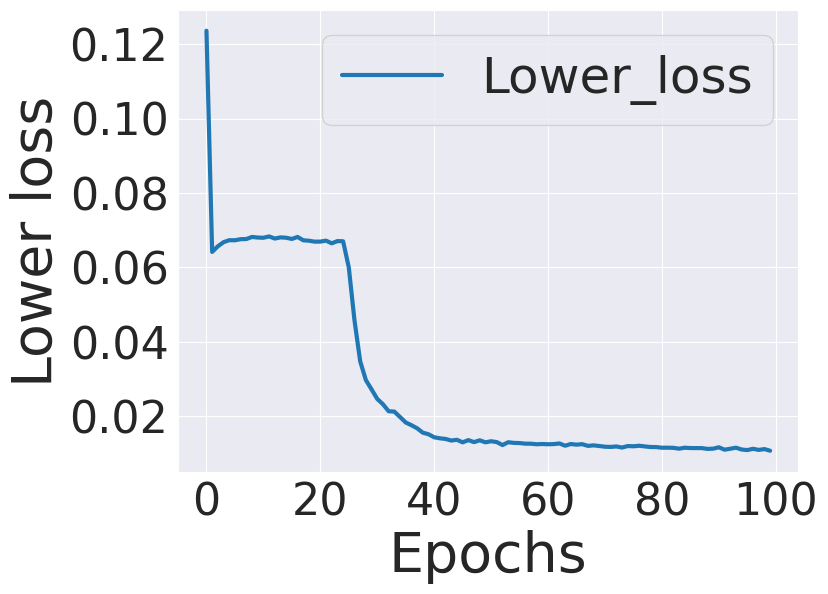}
    \end{minipage}
        \hfill
    \begin{minipage}[t]{0.48\linewidth}
    \centering
    \textbf{(c)} Conformal loss optimization gap
    \end{minipage} 
    \begin{minipage}[t]{0.48\linewidth}
    \centering
    \textbf{(d)} QR loss optimization gap
    \end{minipage} 
    \hfill
    \begin{minipage}[t]{0.48\linewidth}
     \centering   
     \includegraphics[width = \linewidth]{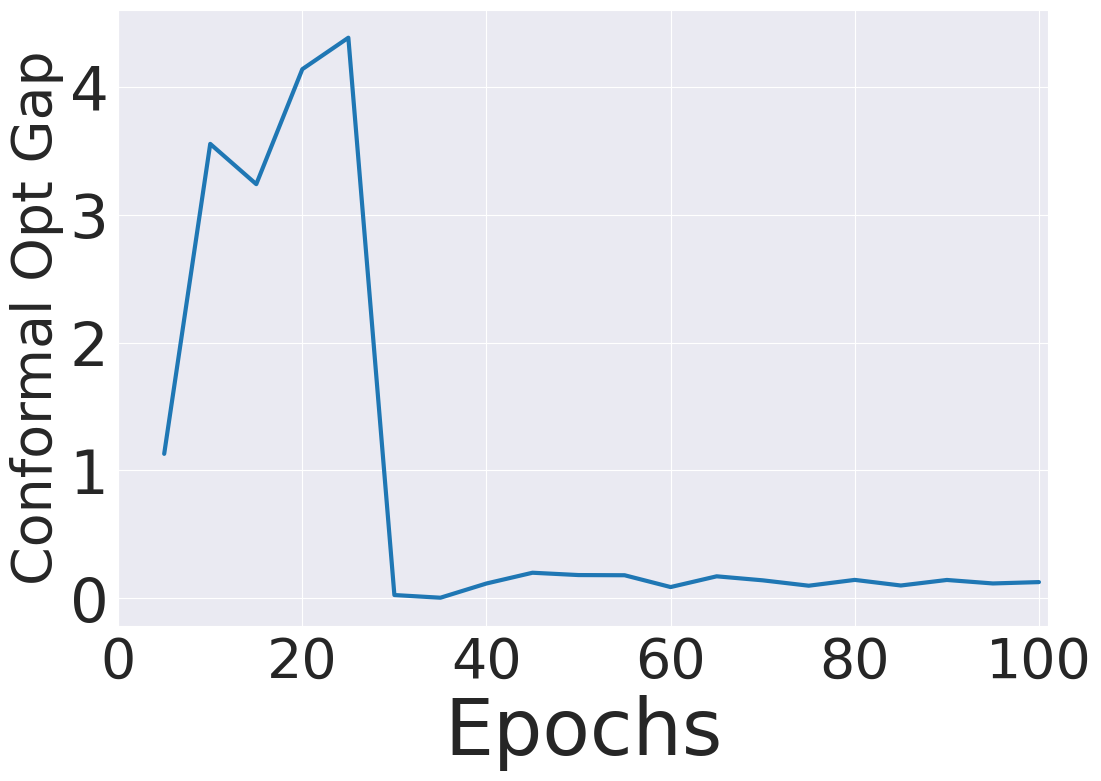}
     \end{minipage}
    \begin{minipage}[t]{0.48\linewidth}
    \centering
    \includegraphics[width=\linewidth]{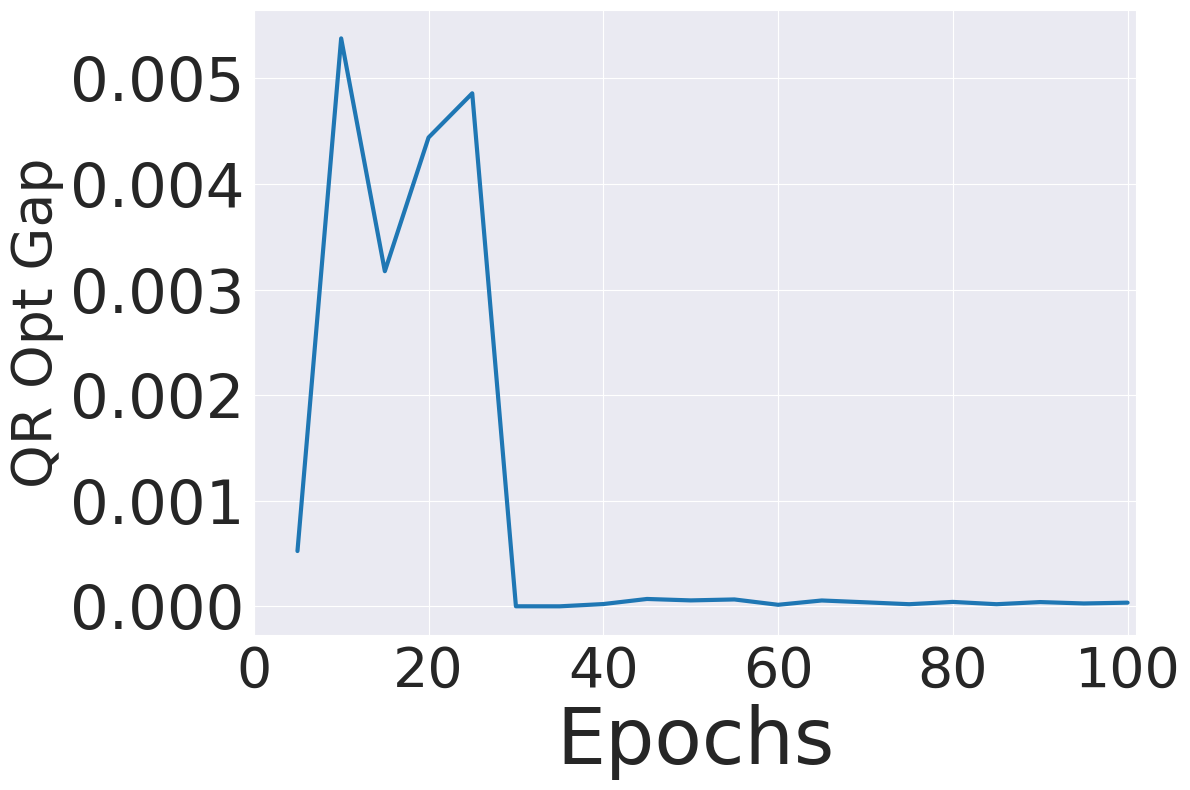}
    \end{minipage}%% 
    \vspace{-0.15in}
    \caption{
    \textbf{Justification experiments for the convergence of DPSM }on CIFAR-100 using DenseNet and HPS score. 
   \textbf{(a)} Upper level loss (i.e., a combination of classification loss and conformal alignment loss);
    \textbf{(b)} Lower level loss (i.e., QR loss);
    \textbf{(c)} Optimization gap of conformal loss, defined as the difference between conformal losses using learned batch-level quantiles and dataset-level quantiles on the training set;
    \textbf{(d)} Optimization gap of the lower-level QR loss, defined similarly as the loss difference between learned batch-level quantiles and dataset-level quantiles.
    }
    \label{fig:results_convergence}
    \vspace{-4.0ex}
\end{figure}

\begin{figure*}[!t]
    \centering
    \begin{minipage}[t]{0.33\linewidth}
    \centering
    \textbf{(a)} Estimation error of quantiles
    \includegraphics[width = \linewidth]{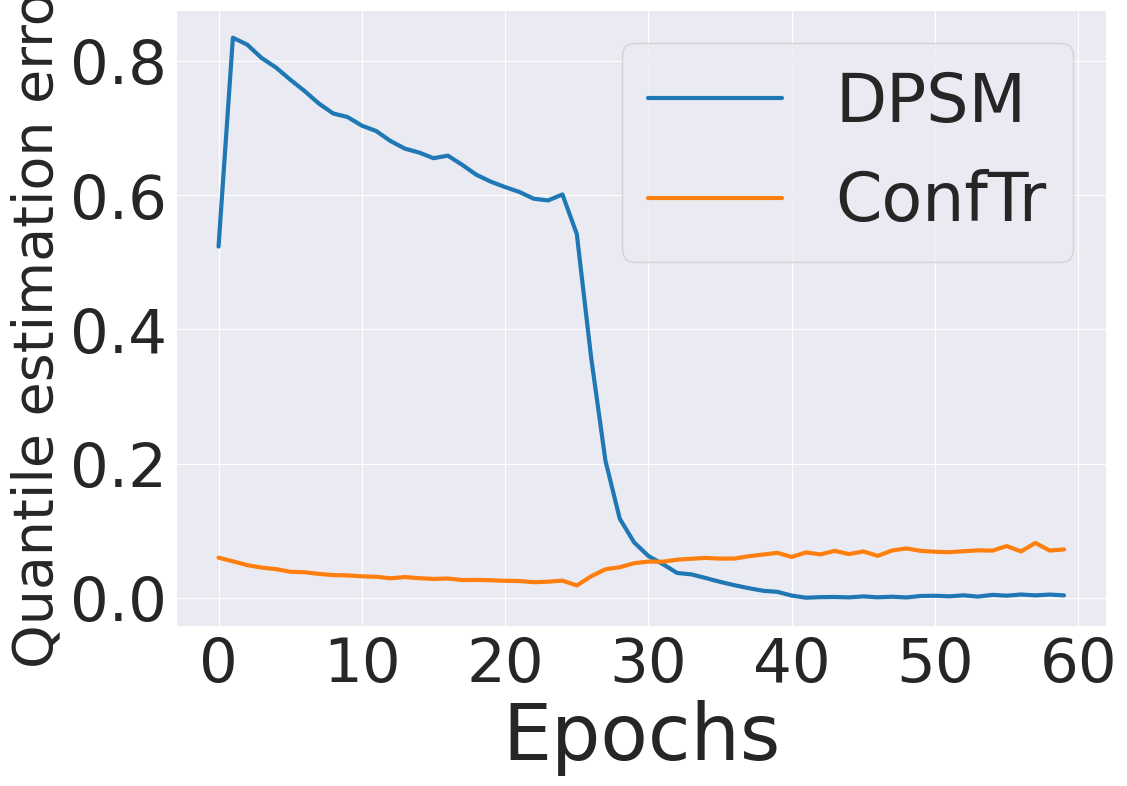}
    \end{minipage} 
    \begin{minipage}[t]{0.33\linewidth}
    \centering
    \textbf{(b)} Average soft set size
    \includegraphics[width=\linewidth]{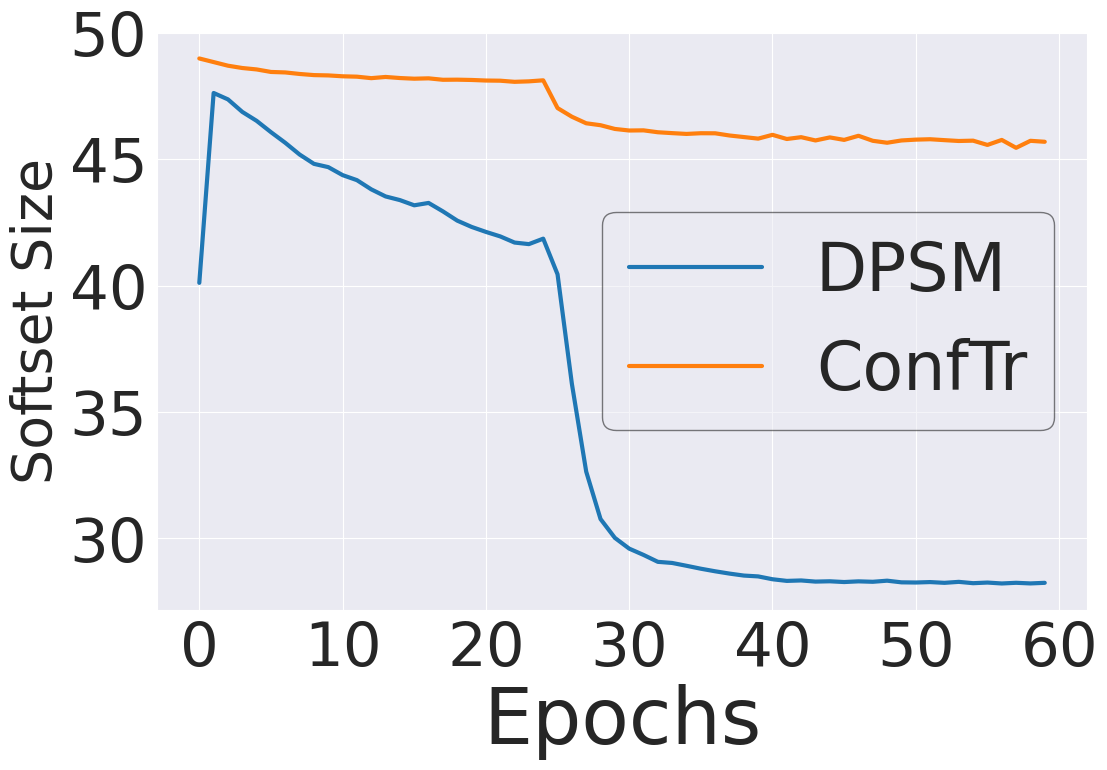}
    \end{minipage} 
    \begin{minipage}[t]{0.33\linewidth}
    \centering
    \textbf{(c)} Learning bound approximation
    \includegraphics[width=\linewidth]{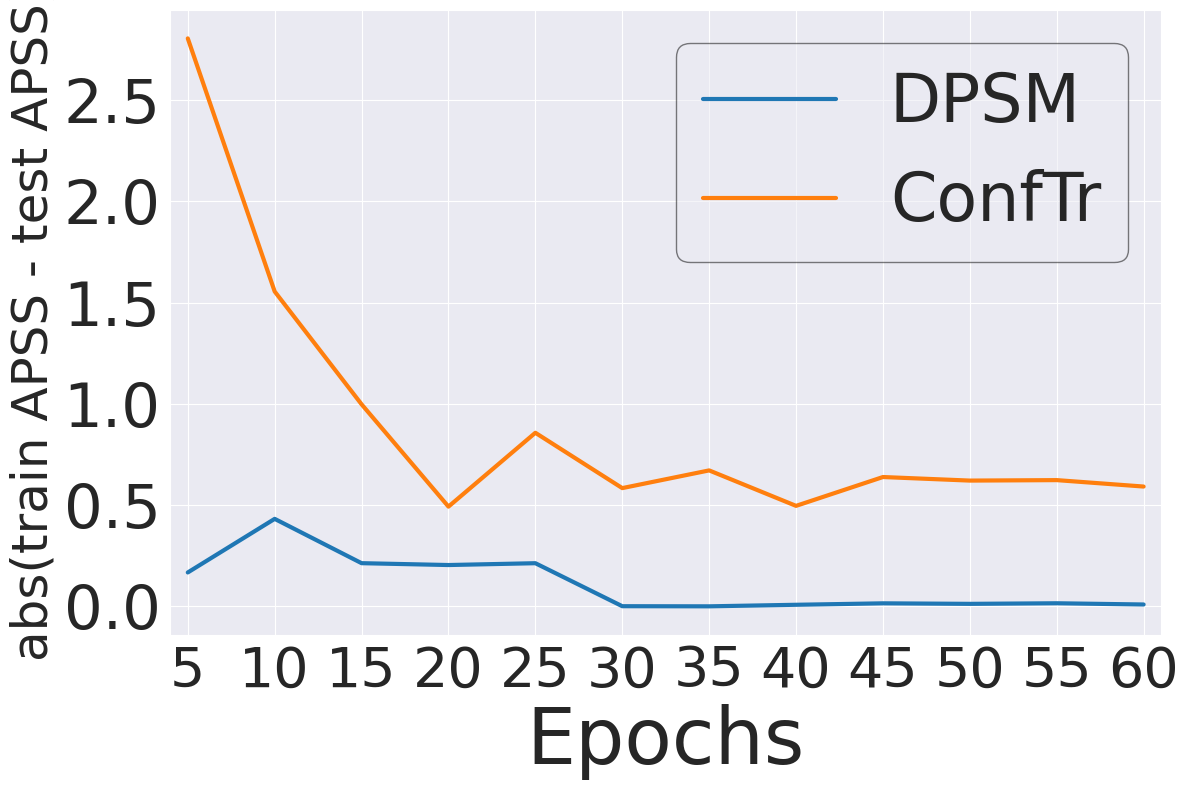}
    \end{minipage} 
    \vspace{-0.1in}
    \caption{
    \textbf{Justification experiments for the learning bound of DPSM }on CIFAR-100 using DenseNet and HPS score. 
    \textbf{(a)} Estimation error between the $\widehat Q^n_f$ (i.e., the dataset-level quantiles on training data) and $\widehat q_f$ (i.e., either the batch-level quantiles evaluated in ConfTr or the learned ones in DPSM);
    \textbf{(b)} Average soft set size (approximated using Sigmoid function) of DPSM and ConfTr;
    \textbf{(c)} Approximated learning error comparison between DPSM and ConfTr, measured by their gaps between the training and testing APSS.
    }
    \label{fig:results_bound}
    \vspace{-2.0ex}
\end{figure*}

%\vspace{0.75ex}

We discuss the results comparing DPSM with baseline methods, fine-grained analysis for DPSM to demonstrate its effectiveness and validate our theoretical results.

\noindent \textbf{DPSM generates smaller prediction sets.}
Table \ref{tab:cvg_set_hps_train_main} presents the prediction set sizes for different methods using HPS score for training, HPS and APS scores for calibration and testing phases at $\alpha = 0.1$, as detailed in Appendix \ref{subsec:train_test_strategies}. On average, DPSM reduces the prediction set size by $20.46\%$ across the three datasets relative to the best baseline.
Specifically, DPSM provides the best predictive efficiency compared to all other existing methods when calibrated with HPS. 
It also outperforms the best baselines when calibrated with APS in all cases except $\uparrow$ 8.71\% increase on CIFAR-100 with DenseNet and $\uparrow$ 0.54\% increase on iNaturalist with ResNet in terms of the prediction set size.The coverage results for overall comparison and additional results for calibrating with RAPS scoring function are in Appendix \ref{appendix:subsec:additional_exps_marginal}.
We also visualize the coverage rate and APSS with confidence intervals of all methods using DenseNet and HPS score in Figure \ref{fig:results_overall}.
It clearly confirms that DPSM achieves significantly smaller prediction set size while maintaining the valid coverage.

\noindent \textbf{DPSM converges to stable error for bilevel optimization.} To further explore how DPSM effectively generates smaller prediction sets, we analyze the convergence of DPSM by plotting the loss of the upper level function (i.e., a combination of classification loss and conformal alignment loss) and the lower  level function (i.e., QR loss) over training epochs.
Figure \ref{fig:results_convergence} (a) and (b) show the upper-level loss and lower-level loss over 100 epochs, respectively. 
We also report the results of $40$-epoch training regime in Appendix \ref{appendix:subsec:additional_exps_marginal} for reference.
As shown, the upper-level loss of DPSM exhibits an initial increase during the first $5$ epochs, reaching the peak, then steadily decreases before stabilizing around epoch $35$.
In contrast, the lower-level loss decreases sharply within the first $5$ epochs, followed by a gradual reduction until convergence near the end of training. 
These results empirically demonstrate that DPSM effectively converges in terms of both upper and lower level training errors, validating its bilevel optimization approach.
To investigate how the learned quantiles influence the optimization error, we compute both conformal and QR losses using the learned quantiles and the optimal (dataset-level) quantiles. 
The corresponding optimization errors—defined as the loss differences between learned and optimal quantiles—are visualized in Figure \ref{fig:results_convergence} (c) and (d). 
Both errors converge to nearly $0$, indicating that the learned quantiles effectively approximate the optimal quantiles over training epochs.

\noindent \textbf{DPSM estimates empirical quantiles with small error.} 
To compare the precision of empirical quantiles estimation in ConfTr and DPSM,
we plot the estimation error between $\widehat Q^n_f$ (quantiles evaluated on the whole training dataset) and  $\widehat q_f$ (quantiles evaluated in ConfTr or learned in DPSM on mini-batches). 
Figure \ref{fig:results_bound} (a) plots this estimation error over training epochs. 
For the first $25$ epochs, the estimation errors for DPSM are significantly larger compared to ConfTr. 
However, as the training progresses, the estimation errors for DPSM decrease rapidly, converging close to $0$ after epoch $35$. 
This result verifies the theoretical result for smaller estimation error in learning bound analysis from Theorem \ref{theorem:learning_bound_DPSM}.
The rapid reduction in estimation error also reflects the convergence of the lower loss (i.e., QR loss), highlighting the effectiveness of DPSM in accurately estimating quantiles.

\noindent \textbf{Learning bound of DPSM is much tighter than ConfTr.}
To approximately compare the learning bounds of DPSM and ConfTr,
we compare the conformal alignment losses of ConfTr and DPSM during training in terms of the average soft set size, as shown in Figure \ref{fig:results_bound} (b).
The soft set size of DPSM is consistently smaller than that of ConfTr during training. 
Combining the empirical results of the smaller estimation error of quantiles from Figure \ref{fig:results_bound} (a), we can conclude that the learning bound of DPSM is much tighter than the learning bound of ConfTr, providing the empirical verification of Theorem \ref{theorem:learning_bound_DPSM} and \ref{theorem:learning_bounds_SA}.
Although the learning bound cannot be empirically computed, we approximate it using a common strategy in ML literature \cite{yuan2019stagewise,yang2021exact}, which estimates the generalization error by the absolute gap between training and test errors. 
For CP, we use APSS evaluated on the train and test sets to approximate the learning errors. 
Specifically, for DPSM, at each iteration, we:
(i) compute APSS on the training set using the learned quantiles as thresholds. 
It includes optimization error since the learned quantiles are not optimal (true dataset-level quantiles);
(ii) compute the APSS on the testing set using the dataset-level quantiles as thresholds.
The gap between these two APSS values is employed as an approximation of the learning bound.
We apply the same strategy to the SA-based ConfTr, 
where the training APSS is computed using the quantiles evaluated on mini-batches from the training data, 
and the test APSS is computed using the dataset-level quantiles from the test data.
This comparison is shown in Figure~\ref{fig:results_bound} (c), which demonstrates that the approximated learning error is improved by DPSM.

\noindent \textbf{Assumption \ref{assumption:bi_lipschitz} is empirically valid.} 
Figure \ref{fig:sodt_set_size_comparisons_main} (a) illustrates the conformity scores against their corresponding normalized order. 
The x-axis represents the normalized order, while the y-axis represents conformity scores. 
It is clear that the curve does not remain near the x-axis or y-axis, indicating that the gradient of conformity scores with respect to normalized index is both upper and lower bounded. 
This observation empirically validates Assumption \ref{assumption:bi_lipschitz}.

\noindent \textbf{Assumption \ref{assumption:straongly_concave} is empirically valid.}
Figure \ref{fig:sodt_set_size_comparisons_main} (b) visualizes the soft set size of ConfTr, with input as coverage rate $\in [0.02, 0.98]$ with range $0.02$.
When coverage rate approaches $0.9$, the curves of all methods exhibit a concave shape (zoom-in version also shown), providing empirical verification for Assumption \ref{assumption:straongly_concave}.

\begin{figure}[!ht]
    \centering
    \begin{minipage}[t]{0.48\linewidth}
    \centering
    \textbf{(a)} Bi-Lipschitz continuity of conformity score 
    \end{minipage} 
    \begin{minipage}[t]{0.48\linewidth}
    \centering
    \textbf{(b)} Strongly concavity of conformal loss
    \end{minipage} 
    \hfill
    \begin{minipage}[t]{0.48\linewidth}  
    \centering
        \includegraphics[width=\linewidth]{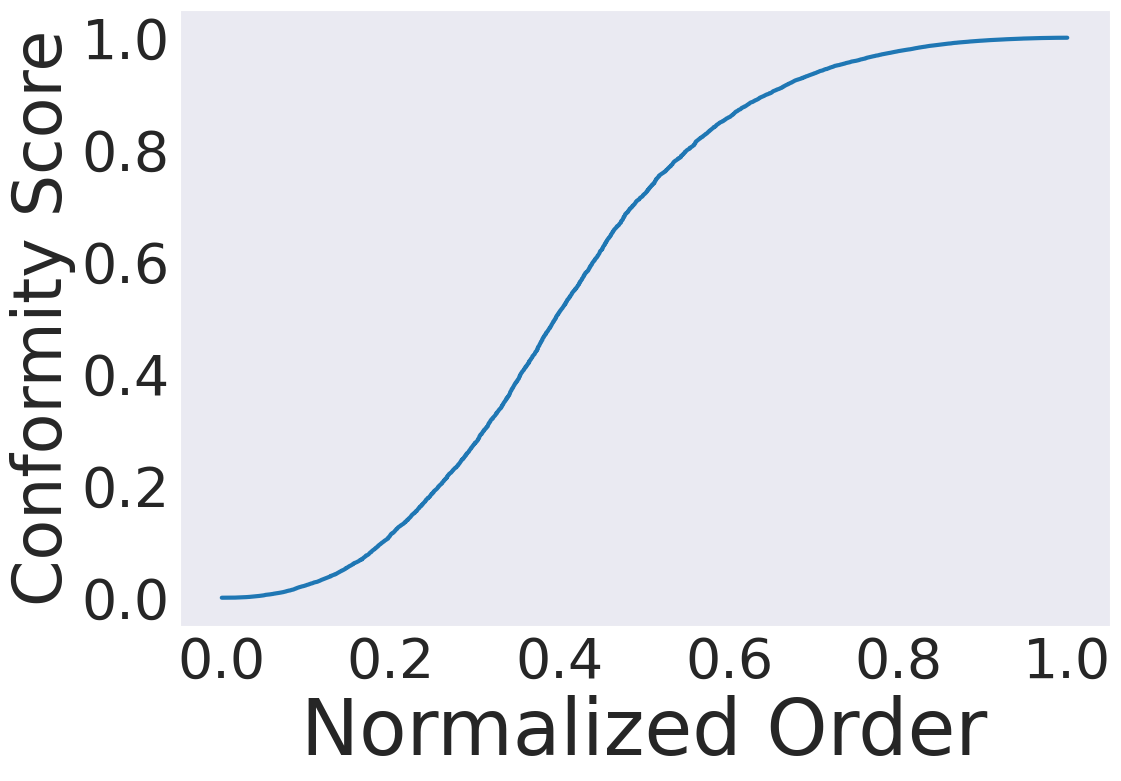}
    \end{minipage}
    \begin{minipage}[t]{0.49\linewidth}  
    \centering
    \includegraphics[width=\linewidth]{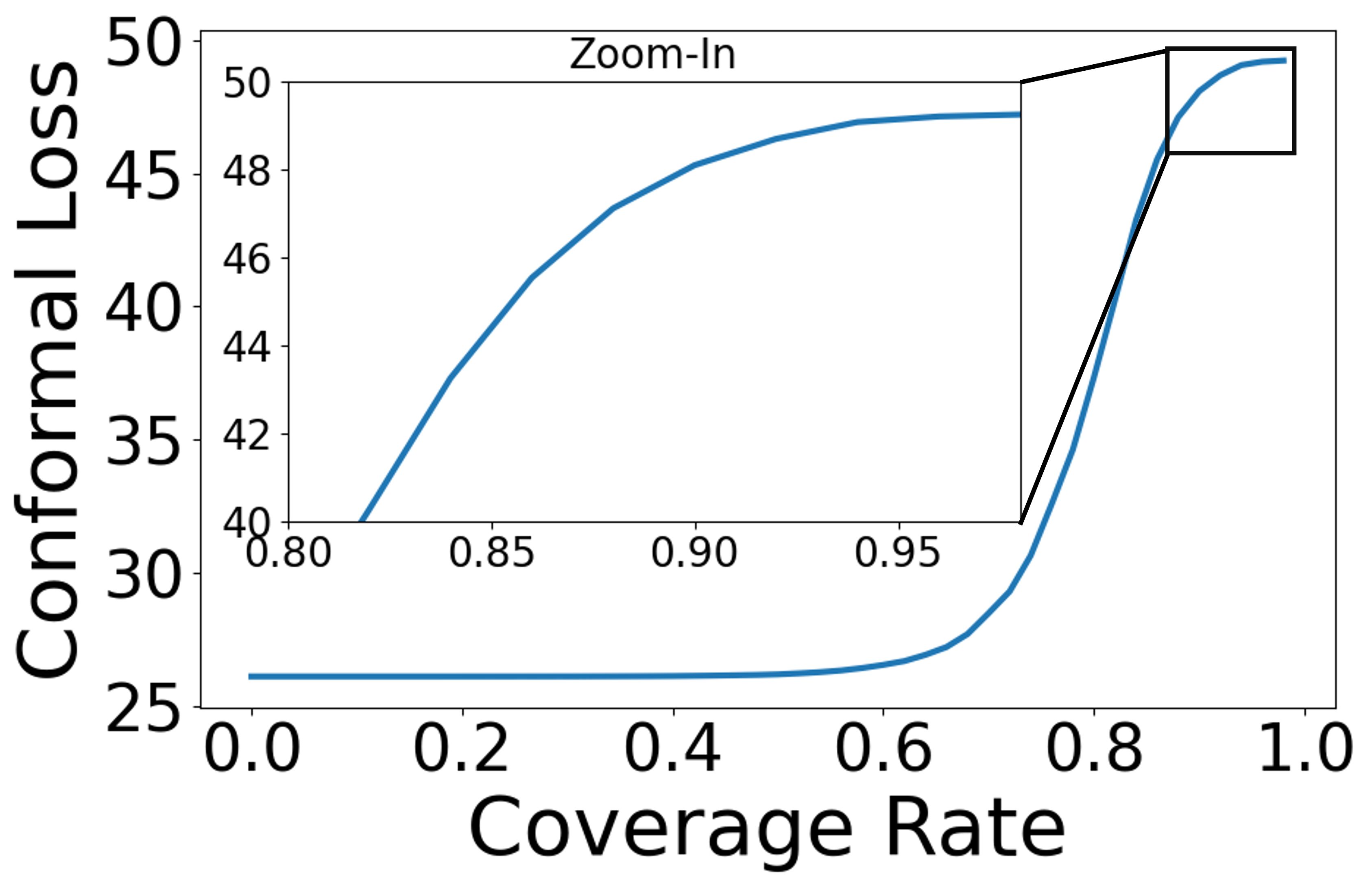}
    \end{minipage}
    % \vspace{-0.2in}
    \caption{\textbf{Assumption verification} on CIFAR-100 using DenseNet and HPS score on the calibration dataset.
    \textbf{(a)} HPS scores over corresponding normalized index produced by ConfTr. 
    The x-axis is the normalized order, the y-axis is the corresponding conformity score; 
    \textbf{(b)} The soft set size measure of ConfTr. The input coverage rate is from $[0.02, 0.98]$ with $0.02$ range with its zoom-in version where coverage rate is close to the target coverage $0.9$;
    When coverage rate is close to $0.9$, the curve for the soft set size exhibits a concave shape.
    }
\label{fig:sodt_set_size_comparisons_main}
\end{figure}

\section{Conclusion}

This paper developed Direct Prediction Set Minimization (DPSM), a novel conformal training algorithm formulated as a bilevel optimization problem. 
By leveraging quantile regression in the lower-level subproblem, DPSM precisely learns the empirical quantile of non-conformity scores, leading to  smaller prediction sets.
%We developed a stochastic optimization algorithm for DPSM, achieving a convergence \jana{rate of $O(1/T^{1/3})$} under less restrictive assumptions. 
Theoretical analysis showed that DPSM attains a tight learning bound of $O(1/\sqrt{n})$ with $n$ training samples. %, surpassing the lower bounds of prior methods constrained by $\Omega(1/s)$ with batch size $s$.
Empirical evaluation on real-world datasets confirmed that DPSM significantly outperforms existing conformal training methods, validating both its theoretical advantages and practical effectiveness. 

\section*{Acknowledgements}

The authors gratefully acknowledge
the in part support by USDA-NIFA funded AgAID Institute
award 2021-67021-35344, and the NSF grants CNS-2312125, IIS-2443828. The views expressed are those of
the authors and do not reflect the official policy or position
of the USDA-NIFA and NSF.

\section*{Impact Statement}
This paper focuses on integrating conformal principles into the training procedure of deep neural network classifiers to produce small prediction sets. The insights in this paper could improve the safe deployment of deep classifiers in may high-risk applications. There is not any negative consequences to be specifically highlighted here.

\nocite{langley00}

\bibliography{ref}
\bibliographystyle{icml2025}

%%%%%%%%%%%%%%%%%%%%%%%%%%%%%%%%%%%%%%%%%%%%%%%%%%%%%%%%%%%%%%%%%%%%%%%%%%%%%%%
%%%%%%%%%%%%%%%%%%%%%%%%%%%%%%%%%%%%%%%%%%%%%%%%%%%%%%%%%%%%%%%%%%%%%%%%%%%%%%%
% APPENDIX
%%%%%%%%%%%%%%%%%%%%%%%%%%%%%%%%%%%%%%%%%%%%%%%%%%%%%%%%%%%%%%%%%%%%%%%%%%%%%%%
%%%%%%%%%%%%%%%%%%%%%%%%%%%%%%%%%%%%%%%%%%%%%%%%%%%%%%%%%%%%%%%%%%%%%%%%%%%%%%%
\newpage
\appendix
\onecolumn
\section{Additional background details}
\label{appendix:section:additional_details}

\noindent \textbf{Non-conformity scoring functions.} The homogeneous prediction sets (HPS) \cite{sadinle2019least} scoring function is
defined as follows:
\begin{align}
\label{eq:HPS}
S_f^{\HPS}(X, Y) = 1 - f_{\theta}(X)_Y.
\end{align}

\cite{romano2020classification} has proposed another conformity scoring function, Adaptive Prediction Sets (APS). 
APS scoring function is based on sorted probabilities.
For a given input $X$, we sort the softmax probabilities for all classes \{$1,\cdots,K$\} such that $1 \ge f_{\theta}(x)_{( 1 )} \geq \cdots \ge f_{\theta}(x)_{( K )}\ge 0$, and compute the cumulative confidence as follows:

\begin{align}
\label{eq:APS}
S_f^{\APS}(X, Y) = \sum^{r_f(X, Y)-1}_{l=1} f_{\theta}(X)_{( l )} + U \cdot f_{\theta}(X)_{( r_f(X, Y) )},
\end{align}
where $U \in [0,1]$ is a uniform random variable to break ties.

To reduce the probability of including unnecessary labels (i.e., labels with high ranks) and thus improve the predictive efficiency, \cite{angelopoulos2021uncertainty} has proposed Regularized Adaptive Prediction Sets (RAPS) scoring function. 
The RAPS score is computed as follows:
\begin{align}
\label{eq:RAPS}
S_f^{\RAPS}(X, Y) = 
\sum^{r_f(X, Y)-1}_{l=1} f_{\theta}(X)_{( l )} + U \cdot f_{\theta}(X)_{( r_f(X, Y) )} + \lambda_{\RAPS} (r_f(X, Y) - k_{\reg})^+,
\end{align}
where $\lambda_{\RAPS}$ and $ k_{\reg}$ are two hyper-parameters.

\noindent \textbf{Objective function for conformal training methods.} ConfTr \cite{stutz2021learning} estimates a soft measure of the prediction set size, defined as follows:
\begin{align}
\label{eq:ps_size_approximate_conftr}
\widehat \calL_{c, \ConfTr}^\SA(f) 
= &
\E_{\widehat q_f \sim \calQ_f} \Bigg [ \frac{1}{n} \sum_{i=1}^n \sum_{y \in \calY } \tilde \indicator \big [ S_f(X_i, y) \leq \widehat q_f \big ] \Bigg ]
,
\end{align}
where $\tilde \indicator[\cdot]$ is a smoothed estimator for the indicator function $\indicator[\cdot]$ and defined by a Sigmoid function \cite{stutz2021learning}, 
i.e.,
$\tilde \indicator[ S \leq q ] = 1/(1+\exp(-(q-S)/\tau_\Sigmoid))$ with a tunable temperature parameter $\tau_\Sigmoid$.

CUT \cite{einbinder2022training} measures the maximum deviation from the uniformity of conformity scores, defined as follows:
\begin{align}
\label{eq:ps_size_approximate_cut}
\widehat \calL_{c, \UA}^\SA(f) 
= &
\E_{\widehat q_f \sim \calQ_f} \Big [ \sup_{\alpha \in [0,1]} \big |  (1-\alpha) - \widehat q_f(\alpha) \big | \Big ]
,
\end{align}
where $\widehat q_f(\alpha)$ is the empirical batch-level quantile in $\calB$ of input $\alpha \in [0,1]$.

\section{ Proof for Section \ref{section:notation_and_bg} }
\label{section:appendix:proof_bg_section}

In this section, we prove Proposition \ref{proposition:quantile_distribution} and Theorem \ref{theorem:learning_bounds_SA} from Section \ref{section:notation_and_bg}.

\subsection{ Proof for Proposition \ref{proposition:quantile_distribution} }
\label{section:proof:proposition:quantile_distribution}

\begin{proposition}
\label{proposition:appendix:quantile_distribution}
(Proposition \ref{proposition:quantile_distribution} restated, distribution of mini-batch quantiles)
Define an event $Z(j)$ as 
``the $j$-th smallest score $S_{(j)}$ from $\{S_i\}_{i=1}^n$ is selected as $\widehat q_f(\alpha)$ on a mini-batch $\calB$''. 
Then the probability of $Z(j)$ is:
\begin{align*}
\P(Z(j)) = \frac{\tbinom{j}{\lceil(1-\alpha)(s+1)\rceil } \tbinom{n-j-1}{s-\lceil(1-\alpha)(s+1)\rceil-1}} {\tbinom{n}{s}}.
\end{align*}
Furthermore, we have the following asymptotic result
\begin{align*}
\lim_{n \to \infty} \P(Z(j)) = \frac{e}{n}\P_\Beta\Big(
\frac{j}{n}; 
\lceil(1-\alpha)(s+1)\rceil+1; 
s- \lceil(1-\alpha)(s+1)\rceil \Big)
,
\end{align*}
where $\P_\Beta(x;a;b)$ is the PDF of Beta distribution with two shape parameters $a, b$.
\end{proposition}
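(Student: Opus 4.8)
The plan is to compute $\P(Z(j))$ by a direct combinatorial counting argument, and then to derive the asymptotic Beta limit via Stirling's approximation. Let me set $r \triangleq \lceil(1-\alpha)(s+1)\rceil$, so that the batch-level quantile $\widehat q_f(\alpha)$ is the $r$-th smallest score among the $s$ scores in the mini-batch $\calB$. The event $Z(j)$ says that this $r$-th order statistic of the batch equals the global $j$-th order statistic $S_{(j)}$. For this to happen, the batch must contain $S_{(j)}$ itself, must contain exactly $r-1$ of the $j$ scores strictly smaller than $S_{(j)}$ (namely from $\{S_{(1)},\dots,S_{(j-1)}\}$, which has $j-1$... but note the binomial $\tbinom{j}{r}$ in the statement suggests the indexing counts $S_{(j)}$ among the lower group), and the remaining $s-r$ scores must come from the $n-j$ scores strictly larger than $S_{(j)}$. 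First I would carefully pin down the indexing convention so that the counts match $\tbinom{j}{r}\tbinom{n-j-1}{s-r-1}$: choosing the positions for the ``lower-or-equal'' group and the ``strictly upper'' group, divided by the total number $\tbinom{n}{s}$ of ways to select the batch. Since sampling a batch of size $s$ from $n$ items uniformly without replacement makes every size-$s$ subset equally likely, the probability is simply the ratio of favorable subsets to total subsets, giving the stated hypergeometric-type expression.

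Next I would establish the asymptotic claim. The idea is to write $j = xn$ with $x = j/n \in (0,1)$ fixed, and expand each binomial coefficient using Stirling's formula $m! \sim \sqrt{2\pi m}\,(m/e)^m$. The coefficient $\tbinom{n}{s}$ in the denominator and the two coefficients in the numerator all involve factorials of quantities that grow linearly in $n$ (namely $n$, $n-s$, $n-j-1$, $n-j-s+r$), while $s$ and $r$ stay fixed. The plan is to collect the polynomial-in-$n$ prefactors and the exponential factors separately. After cancellation, the leading behavior should factor as $\frac{1}{n}$ times a term of the form $x^{r}(1-x)^{s-r-1}$ up to the normalizing Beta constant; matching this against the Beta PDF $\P_\Beta(x;a;b) = \frac{1}{B(a,b)}x^{a-1}(1-x)^{b-1}$ with $a = r+1$ and $b = s-r$ then identifies the shape parameters as claimed. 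The stray factor of Euler's number $e$ arises from the Stirling expansion of a factorial whose argument differs from another by an $O(1)$ (rather than $\Theta(n)$) amount — concretely from terms like $(n-j-1)$ versus $(n-j)$, or from the fixed shifts in $s$ and $r$ that produce a ratio $(1+O(1/n))^{\Theta(n)} \to e^{c}$.

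The main obstacle I anticipate is bookkeeping the exponential factors carefully enough to see exactly how the factor $e/n$ emerges rather than, say, $1/n$ or $e^2/n$. The subtlety is that several factorials have arguments that are close but not equal (differing by constants like the $-1$ in $n-j-1$ and the $-1$ in $s-r-1$), and Stirling's formula must be applied to each with its own $\sqrt{2\pi(\cdot)}$ prefactor and $(\cdot/e)^{(\cdot)}$ exponential part. A clean way to manage this is to repeatedly use the asymptotic ratio $\frac{(N+c)!}{N!} \sim N^{c}$ and $\left(1 + \frac{c}{N}\right)^{N} \to e^{c}$ for fixed $c$, tracking every residual constant-shift contribution to the exponential. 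I would organize the computation by writing $\P(Z(j))$ as a product of three Stirling-expanded blocks, separating the $n$-linear exponential parts (which must cancel to leave the $x^{r}(1-x)^{s-r-1}$ shape) from the finite correction factors (which combine to the single $e$). The remaining step is then routine: verify that the polynomial prefactor contributes the $1/n$ scaling, confirm the Beta normalization $1/B(r+1,s-r)$ matches the fixed combinatorial constants $s!/(r!\,(s-r-1)!)$ up to the factor of $e$, and conclude.
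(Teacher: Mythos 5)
Your overall strategy coincides with the paper's: the paper also treats the hypergeometric formula as a counting fact (in fact it does not prove it at all, deferring to Proposition 1 of \cite{kawaguchi2020ordered}), and it also derives the limit via Stirling's formula. On the counting step, the indexing tension you flagged is real and cannot be dissolved by "pinning down a convention": with $r=\lceil(1-\alpha)(s+1)\rceil$, the event as stated requires $S_{(j)}\in\calB$, exactly $r-1$ batch elements among the $j-1$ strictly smaller scores, and $s-r$ among the $n-j$ strictly larger ones, giving $\P(Z(j))=\binom{j-1}{r-1}\binom{n-j}{s-r}\big/\binom{n}{s}$. The displayed formula $\binom{j}{r}\binom{n-j-1}{s-r-1}\big/\binom{n}{s}$ is an index-shifted variant (it is the probability that $S_{(j+1)}$ is the $(r+1)$-th batch order statistic), so your proof would have to either correct the display or state the shifted convention explicitly; the shift is immaterial for the asymptotics.

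The genuine gap is in the step you defer as your "main obstacle": the bookkeeping that is supposed to make the factor $e$ emerge cannot be completed, because the factor $e$ is not there. Since $s$ and $r$ stay fixed as $n\to\infty$, no Stirling expansion is needed at all; the elementary ratios you yourself propose, $\frac{(N+c)!}{N!}\sim N^{c}$, give $\binom{j}{r}\sim j^{r}/r!$, $\binom{n-j-1}{s-r-1}\sim (n-j)^{s-r-1}/(s-r-1)!$, and $\binom{n}{s}\sim n^{s}/s!$, hence with $x=j/n$ held fixed
\begin{align*}
n\,\P(Z(j)) \;\longrightarrow\; \frac{s!}{r!\,(s-r-1)!}\,x^{r}(1-x)^{s-r-1} \;=\; \P_\Beta\big(x;\, r+1;\, s-r\big),
\end{align*}
i.e., the constant is exactly $1$, not $e$ (this is also the only well-posed form of the claim, since $\P(Z(j))$ itself tends to $0$). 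The $e$ in the statement is an artifact of the paper's own derivation: it keeps the explicit factors $e^{-a}\cdot e^{a-s+1}\cdot e^{s}=e$ produced by writing each factorial as $(\cdot/e)^{(\cdot)}$, but then silently replaces $(n-s)^{n-s}$ by $n^{n-s}$, $(n-j-1)^{n-j-1}$ by $(n-j)^{n-j-1}$, $(n-j-s+a)^{n-j-s+a}$ by $(n-j)^{n-j-s+a}$, and $j^{j}/(j-a)^{j-a}$ by $j^{a}\,n^{j-a}(\tfrac{j-a}{n})^{j-a}$, discarding compensating factors that converge to $e^{-s}$, $e^{-1}$, $e^{s-a}$, and $e^{a}$ respectively, whose product is exactly $e^{-1}$. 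So a faithful execution of your plan would not prove the displayed limit; it would prove the corrected statement with $e$ replaced by $1$. Your proposal, as written, assumes the target constant is attainable and defers precisely the computation that shows it is not.
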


\begin{proof}
(Proof of Proposition \ref{proposition:quantile_distribution})
This proof follows the proof for Proposition 1 in \cite{kawaguchi2020ordered}.

Recall the Stirling's approximation that is used to prove this proposition:
\begin{align*}
n! \backsim \sqrt{2 \pi n} (n/e)^n
.
\end{align*}
% where the ratio tends to $1$ as $n \to \infty$.

To simplify the representation, we denote $a = \lceil(1-\alpha)(s+1)\rceil$ in this proof. 
Then, the $\P(Z(j))$ could be rewritten as:
\begin{align}
\label{eq:proof_proposiionn_1}
\P(Z(j))
&
= \frac{\tbinom{j}{ a} \tbinom{n-j-1}{s-a-1}} {\tbinom{n}{s}}
=\frac{ \frac{j!}{a!(j-a)!} \cdot \frac{(n-j-1)!}{(s-a-1)!(n-j-s+a)!} } {\frac{n!}{s!(n-s)!}}
= \frac{s!}{a! (s-a-1)!} \cdot \frac{j!}{(j-a)!} \cdot \frac{(n-j-1)!}{(n-j-s+a)!} \cdot \frac{(n-s)!}{n!}
.
\end{align}

Applying the Stirling's approximation when $n \rightarrow \infty$, we have that:
\begin{align}
\label{eq:proof_proposiionn_2}
&
% \lim_{n \to \infty}
\frac{j!}{(j-a)!} \cdot \frac{(n-j-1)!}{(n-j-s+a)!} \cdot \frac{(n-s)!}{n!}
\nonumber \\
&
= \frac{\sqrt{2 \pi j} (\frac{j}{e})^j}{\sqrt{2 \pi (j-a)} (\frac{j-a}{e})^{j-a} } \cdot \frac{\sqrt{2 \pi (n-j-1)} (\frac{n-j-1}{e})^{n-j-1} }{\sqrt{2 \pi (n-j-s+a)} (\frac{n-j-s+a}{e})^{n-j-s+a} } \cdot \frac{\sqrt{2 \pi (n-s)} (\frac{n-s}{e})^{n-s}}{\sqrt{2 \pi n} (\frac{n}{e})^{n}}
\nonumber \\
&
% \overset{n \to \infty}{=}
=
% \xlongequal{n \to \infty}
\frac{j^j}{ (j-a)^{j-a} } e^{-a} \cdot \frac{ (n-j-1)^{n-j-1} }{ (n-j-s+a)^{n-j-s+a} } e^{-s+a+1} \cdot \frac{ (n-s)^{n-s} }{ (n)^{n} } e^{s}
\nonumber \\
&
% \overset{n \to \infty}{=}
=
e \frac{j^j}{ (j-a)^{j-a} } \cdot \frac{ (n-j)^{n-j-1} }{ (n-j)^{n-j-s+a} } \cdot \frac{ (n)^{n-s} }{ (n)^{n} }
\nonumber \\
&
=
e \frac{(n)^j (\frac{j}{n})^j}{ (n)^{j-a} (\frac{j-a}{n})^{j-a} } \cdot (n-j)^{s-a-1} \cdot (n)^{-s} 
\nonumber \\
&
=
e \frac{(\frac{j}{n})^j}{ (\frac{j-a}{n})^{j-a} } (n)^a \cdot (\frac{n-j}{n})^{s-a-1} (n)^{s-a-1} \cdot (n)^{-s} 
\nonumber \\
&
% \overset{n \to \infty}{=}
=
\frac{e}{n} \cdot (\frac{j}{n})^a \cdot (1 - \frac{j}{n})^{s-a-1} 
\end{align}

Combining the Equation (\ref{eq:proof_proposiionn_1}) and  (\ref{eq:proof_proposiionn_2}), we have that:
\begin{align*}
&
\lim_{n \to \infty} \P(Z(j))
=
\frac{e}{n} \cdot \frac{s!}{a! (s-a-1)!} \cdot (\frac{j}{n})^a \cdot (1 - \frac{j}{n})^{s-a-1} 
=
\frac{e}{n} \frac{ \Gamma(a + 1 + s - a ) }{ \Gamma(a+1) \Gamma(s-a) }  \cdot (\frac{j}{n})^a \cdot (1 - \frac{j}{n})^{s-a-1} 
\\
= & 
\frac{e}{n} \P_\Beta\Big(\frac{j}{n}; \lceil(1-\alpha)(s+1)\rceil+1; 
s- \lceil(1-\alpha)(s+1)\rceil \Big)
,
\end{align*}
where $\Gamma(n) = (n-1)!$ is gamma function and $\P_\Beta(x;a,b) = \frac{\Gamma(a+b)}{\Gamma(a) \Gamma(b)} x^{a-1} (1-x)^{b-1}$ is the probability density function of Beta distribution with shape parameters $a$ and $b$ at $x$.
\end{proof}

\subsection{ Proof for Theorem \ref{theorem:learning_bounds_SA} }
\label{section:proof_theorem_learning_bounds_SA}

\begin{theorem}
\label{theorem:learning_bounds_SA_appendix}
(Theorem \ref{theorem:learning_bounds_SA} restated, learning bounds of SA method)
Suppose that Assumption \ref{assumption:bi_lipschitz} and \ref{assumption:straongly_concave} hold.
Assume $(1-\alpha))(s+1)$ is not an integer.
If $\lceil (1-\alpha))(s+1) \rceil - (1-\alpha))(s+1) \geq \Omega(1/s)$ and $s \leq \sqrt{n}$,
then the following inequality holds with probability at least $1 - \delta$:
\begin{align*}
\Omega (1/s) 
\leq 
| \widehat \calL_c^\SA(f) - \calL_c(f)  | \leq 
\tilde O (1/\sqrt{s})
.
\end{align*}
\end{theorem}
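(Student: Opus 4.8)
The plan is to control the signed error $\widehat \calL_c^\SA(f) - \calL_c(f) = \E_{\widehat q_f \sim \calQ_f}[\widehat \ell(f,\widehat q_f)] - \ell(f,Q_f)$ through a three-way decomposition around the expected batch quantile $\bar q_f \triangleq \E_{\widehat q_f \sim \calQ_f}[\widehat q_f]$ and the dataset-level empirical quantile $\widehat Q_f^n$:
\begin{align*}
\widehat \calL_c^\SA(f) - \calL_c(f)
= \underbrace{ \E[\widehat \ell(f,\widehat q_f)] - \widehat \ell(f,\bar q_f) }_{\text{(I) concavity/variance}}
+ \underbrace{ \widehat \ell(f,\bar q_f) - \widehat \ell(f,\widehat Q_f^n) }_{\text{(II) quantile bias}}
+ \underbrace{ \widehat \ell(f,\widehat Q_f^n) - \ell(f,Q_f) }_{\text{(III) generalization}} .
\end{align*}
Proposition \ref{proposition:quantile_distribution} is the workhorse: it identifies $\widehat q_f = S_{(J)}$ with $J/n$ approximately $\Beta$-distributed with shape parameters $a \approx (1-\alpha)s$ and $b \approx \alpha s$. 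This characterization is what lets me compute both the mean $\bar q_f$ (which governs term (II)) and the variance $\Var(\widehat q_f)$ (which governs term (I)) in closed form up to constants.

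For the upper bound $\tilde O(1/\sqrt s)$, I would bound each term separately. Term (I) is at most the Lipschitz constant of $\widehat \ell(f,\cdot)$ in $q$ times $\E|\widehat q_f - \bar q_f| \le \sqrt{\Var(\widehat q_f)}$; since the $\Beta$ positional variance is $\Theta(1/s)$, Assumption \ref{assumption:bi_lipschitz} (upper bi-Lipschitz bound $L_2$) gives $\Var(\widehat q_f) = O(1/s)$ and hence term (I) $= O(1/\sqrt s)$. Term (II) is controlled by the positional gap between $\bar q_f$ and $\widehat Q_f^n$: the $\Beta$ mean $\tfrac{a+1}{s+1}$ sits $\Theta(1/s)$ above $1-\alpha$, so bi-Lipschitz continuity yields term (II) $= O(1/s)$. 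Term (III) is a standard empirical-process/DKW concentration term, $\tilde O(1/\sqrt n)$, and is the source of the high-probability ($1-\delta$) and the suppressed logarithmic factor. Under $s \le \sqrt n$ the last two terms are dominated by $1/\sqrt s$, giving the claimed $\tilde O(1/\sqrt s)$.

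For the lower bound $\Omega(1/s)$, the sharpening comes from Assumption \ref{assumption:straongly_concave}: by $\mu$-strong concavity of $\widehat \ell(f,\cdot)$ around $\bar q_f$ and Jensen's inequality, term (I) satisfies $\E[\widehat\ell(f,\widehat q_f)] - \widehat\ell(f,\bar q_f) \le -\tfrac{\mu}{2}\Var(\widehat q_f)$, and the \emph{lower} bi-Lipschitz bound $L_1$ forces $\Var(\widehat q_f) \ge L_1^2\,\Var(J/n) = \Omega(1/s)$, so term (I) $= -\Omega(1/s)$. It then remains to show that the other two terms cannot erase this: term (III) is $\tilde O(1/\sqrt n)$ and is subordinate to $1/s$ precisely because $s \le \sqrt n$, while the hypotheses that $(1-\alpha)(s+1)$ is non-integer and that the ceiling gap is $\ge \Omega(1/s)$ pin down the magnitude and sign of the positional bias in term (II), so that (I) and (II) do not exactly annihilate and $|\widehat \calL_c^\SA(f) - \calL_c(f)| = \Omega(1/s)$ survives.

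The hard part is the lower bound. Terms (I) and (II) are each $\Theta(1/s)$ but carry opposite signs (the concavity term pulls $\E[\widehat\ell]$ down, the positive positional bias pushes $\widehat\ell(f,\bar q_f)$ up), so the central difficulty is ruling out their cancellation. This is where I would lean hardest on Assumption \ref{assumption:bi_lipschitz}, using $L_1$ and $L_2$ to trap the variance and the bias two-sidedly, and on the ceiling-gap hypothesis to fix the bias contribution. Concretely, the main technical obstacle is upgrading the purely asymptotic $\Beta$ statement of Proposition \ref{proposition:quantile_distribution} into finite-$s$, two-sided estimates of $\bar q_f$ and $\Var(\widehat q_f)$ with explicit constants, so that the residual left after subtracting the magnitudes of (II) and (III) from $\tfrac{\mu}{2}\Var(\widehat q_f)$ is provably a positive multiple of $1/s$.
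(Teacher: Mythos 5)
Your three-term decomposition and most of your ingredients coincide with the paper's proof: your terms (I), (II), (III) are exactly the quantities controlled by the paper's Lemmas \ref{lemma:distance_empirical_sa}, \ref{lemma:distance_empirical_expectation}, and \ref{lemma:generalization_error_conformal_loss}, your strong-concavity/variance estimate for (I) and ceiling-gap estimate for (II) are the same arguments, and your upper bound is the paper's (triangle inequality, Lipschitz constant from Lemma \ref{lemma:lipschitz_conformal_loss}, batch-quantile variance from Lemma \ref{lemma:upper_bound_variance}). The genuine gap is the one you flagged yourself: for the lower bound you never rule out cancellation between $\mathrm{(I)} \le -\tfrac{\mu}{2}\Var(\widehat q_f) = -\Theta(1/s)$ and $\mathrm{(II)} = +\Theta(1/s)$. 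The hypotheses you invoke cannot close this. The ceiling-gap condition and Assumption \ref{assumption:bi_lipschitz} only trap the bias and the variance in intervals $[c_1/s,\, c_2/s]$ with unrelated constants, so nothing forbids the coincidence $\mathrm{(II)} + \mathrm{(III)} = -\mathrm{(I)}$, in which case $|\widehat\calL_c^\SA(f) - \calL_c(f)| = o(1/s)$ and the claimed bound fails. Saying the hypotheses ``pin down the magnitude and sign'' of (II) is not an argument; what is needed is a quantitative separation, e.g. $\mathrm{(II)} - |\mathrm{(I)}| \ge \Omega(1/s)$, which amounts to comparing $\partial_q \widehat\ell(f, \E[\widehat q_f]) \cdot (\E[\widehat q_f] - Q_f)$ against $\tfrac{\mu}{2}\Var(\widehat q_f)$ with explicit constants --- a step your proposal does not attempt.

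You should also know that the paper's own proof does not genuinely resolve this difficulty. It squares the error and expands $(A+B)^2$ with $A = \mathrm{(I)}$ and $B = \mathrm{(II)}+\mathrm{(III)}$, then declares the cross term harmless by annotating $A \ge \Omega(1/s)$, citing Lemma \ref{lemma:distance_empirical_sa}. But that lemma's proof establishes $\widehat\ell(f,\E[\widehat q_f]) - \E[\widehat\ell(f,\widehat q_f)] \ge \tfrac{\mu}{2}\Var(\widehat q_f)$, and indeed Jensen's inequality for a concave function forces $A \le 0$; with the correct sign the cross term is negative of order $1/s^2$ and can cancel $A^2 + B^2$, so the expansion yields nothing beyond $(A+B)^2 \ge 0$. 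In short, the obstacle you identified is real; the reference proof hides it behind a sign inconsistency rather than overcoming it, and a complete proof of the $\Omega(1/s)$ lower bound would require the constant-level separation argument that is missing from both your proposal and the paper.
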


\begin{proof}
(Proof of Theorem \ref{theorem:learning_bounds_SA})

We need the following helpful technical lemmas:

\begin{lemma}
\label{lemma:distance_empirical_sa}
(Lower bound for $\widehat \ell(f, \E_{\widehat q_f} [\widehat q_f] ) - \E_{\widehat q_f} [\widehat \ell(f; \widehat q_f) ]$)
Suppose that Assumptions \ref{assumption:bi_lipschitz} and \ref{assumption:straongly_concave} hold.
Then, the following inequality holds:
\begin{align*}
| \widehat \ell(f, \E_{\widehat q_f} [\widehat q_f ] ) - \E_{\widehat q_f}[\widehat \ell(f, \widehat q_f)] | 
\geq
\Omega(1/s).
\end{align*}
\end{lemma}

\begin{lemma}
\label{lemma:distance_empirical_expectation}
(Lower bound for $\widehat \ell(f, \E[\widehat q_f]) - \widehat \ell(f, Q_f) $)
Suppose that Assumption \ref{assumption:bi_lipschitz} holds.
Assume $(1-\alpha))(s+1)$ is not an integer.
If $\lceil (1-\alpha))(s+1) \rceil - (1-\alpha))(s+1) \geq \Omega(1/s)$ and $s \leq \sqrt{n}$,
hen the following inequality holds:
\begin{align*}
\widehat \ell(f, \E[\widehat q_f]) - \widehat \ell(f, Q_f) 
\geq 
\Omega(1/s).
\end{align*}
\end{lemma}

\begin{lemma}
\label{lemma:upper_bound_variance}
(Distance between $\calL_c(f, \widehat q_f)$ and $\E_{\widehat q_f}[ \calL_c(f; \widehat q_f)]$)
Suppose that Assumption \ref{assumption:bi_lipschitz} holds,
With probability at least $1-\delta$, the following inequality holds:
\begin{align*}
| \calL_c(f, \widehat q_f) - \E_{\widehat q_f}[ \calL_c(f, \widehat q_f)] | 
\leq 
\tilde O(1/\sqrt{s}).
\end{align*}
\end{lemma}

The proofs of Lemma \ref{lemma:distance_empirical_sa}, \ref{lemma:distance_empirical_expectation}, and \ref{lemma:upper_bound_variance} are deferred to the end of this proof.

Recall that 
\begin{align*}
\widehat \calL_c^\SA(f) 
= &
\E_{\widehat q_f \sim \calQ_f} [ \widehat \ell(f, \widehat q_f)] 
= 
\E_{\widehat q_f \sim \calQ_f} \Bigg[ 
\underbrace{
\frac{1}{n}\sum_{i=1}^n \sum_{y\in \calY} \tilde \indicator[ S_f(X_i, y) \leq \widehat q_f] 
}_{ = \widehat \ell(f,\widehat q_f)}
\Bigg],
\\
\calL_c(f)
= &
\ell(f, Q_f)
=
\E_{X} \Bigg[ \sum_{y\in \calY} \tilde \indicator [S_f(X,y) \leq Q_f] \Bigg]
.
\end{align*}

We start with the lower bound in Theorem \ref{theorem:learning_bounds_SA}.
\begin{align*}
( \widehat \calL_c^\SA(f) - \calL_c(f) )^2
= &
( \E_{\widehat q \sim \calQ_f} [ \widehat \ell(f, \widehat q_f) ] - \ell(f, Q_f) )^2
= 
( \E_{\widehat q \sim \calQ_f} [ \widehat \ell(f, \widehat q_f) ]  - \widehat \ell(f, \E[\widehat q_f])
+ \widehat \ell(f, \E[\widehat q_f]) - \ell(f, Q_f) )^2
\\
= & 
\underbrace{ ( \E_{\widehat q \sim \calQ_f} [ \widehat \ell(f, \widehat q_f) ]  - \widehat \ell(f, \E[\widehat q_f]) )^2
}_{ \geq \Omega(1/s^2), \text{ Lemma \ref{lemma:distance_empirical_sa}} }
+ 
\underbrace{ 
( \widehat \ell(f, \E[\widehat q_f]) - \ell(f, Q_f) )^2
}_{ \geq 0 }
\\
& 
+
\underbrace{ 
( ( \E_{\widehat q \sim \calQ_f} [ \widehat \ell(f, \widehat q_f) ]  - \widehat \ell(f, \E[\widehat q_f]) ) 
}_{ \geq \Omega(1/s), \text{ Lemma \ref{lemma:distance_empirical_sa}} } 
( 
\underbrace{
\widehat \ell(f, \E[\widehat q_f]) - \widehat \ell(f, Q_f) )
}_{ \geq \Omega(1/s), \text{ Lemma \ref{lemma:distance_empirical_expectation}} }
+ 
\underbrace{ 
\widehat \ell(f, Q_f) - \ell(f, Q_f) 
}_{ \geq - \tilde O(1/\sqrt{n}), {\text{ Lemma \ref{lemma:generalization_error_conformal_loss}}} }
)
\\
\geq &
\Omega(1/s^2)
,
\end{align*}
where the last inequality is due to $s \ll \sqrt{n}$.

The above inequality thus indicates the result:
\begin{align*}
| \widehat \calL_c^\SA(f) - \calL_c(f) |
\geq
\Omega(1/s)
.
\end{align*}

Next, we begin to prove the upper bound in Theorem
\ref{theorem:learning_bounds_SA}:
\begin{align*}
&
|\calL_c(f, Q_f) - \E_{\widehat q_f}[ \calL_c(f; \widehat q_f)]| 
\\
\leq &
|\calL_c(f, Q_f) - \calL_c(f; \widehat q_f)| 
+ 
\underbrace{|\calL_c(f; \widehat q_f) - \E_{\widehat q_f}[ \calL_c(f; \widehat q_f)]|}_{\leq O(1/\sqrt{s}), \text{ according to Lemma \ref{lemma:upper_bound_variance} }}
\\
\leq &
L |Q_f - \widehat q_f| + O(1/\sqrt{s})
\\
= &
L \big |S_{f, (\lceil(1-\alpha)(s+1)\rceil)} - S_{f, (\lceil(1-\alpha - O(1/\sqrt{s}))(s+1)\rceil)} \big| + O(1/\sqrt{s})
\\
\leq &
L \cdot L_2 \bigg | \frac{\lceil(1-\alpha)(s+1)\rceil}{s} - \frac{\lceil(1-\alpha - O(1/\sqrt{s}))(s+1)\rceil}{s} \bigg | + O(1/\sqrt{s})
\\
\leq &
L \cdot L_2 \cdot O(1/\sqrt{s}) + O(1/\sqrt{s})
= 
O(1/\sqrt{s}) + O(1/\sqrt{s})
= 
O(1/\sqrt{s}),  
\end{align*}
where the first inequality is due to the triangle inequality, the second inequality is due to the $L$-Lipschitz continuous of $\calL_c(f, Q_f)$ in Lemma \ref{lemma:lipschitz_conformal_loss} and Lemma \ref{lemma:upper_bound_variance},
and the third inequality is due to (\ref{eq:concentration_empirical_population_quantiles}) and Assumption \ref{assumption:bi_lipschitz}.

Combining the lower and upper learning bound, the proof of Theorem \ref{theorem:learning_bounds_SA} is finished.
\end{proof}

\subsection{ Proof for Lemma \ref{lemma:distance_empirical_sa} }
\label{subsection:proof_distance_empirical_sa}

\begin{proof}
(of Lemma \ref{lemma:distance_empirical_sa})

We begin to prove the lower bound of 
$\widehat \ell(f, \E_{\widehat q_f} [\widehat q_f] ) - \E_{\widehat q_f} [\widehat \ell(f; \widehat q_f) ]$.

By the $\mu$-strong-concavity of $\widehat \ell(f, q)$ in $q$ (Assumption \ref{assumption:straongly_concave}), 
we have
\begin{align*}
&
\widehat \ell(f, \E_{\widehat q_f} [\widehat q_f] ) - \E_{\widehat q_f} [\widehat \ell(f; \widehat q_f) ] 
=
\E_{\widehat q_f} [ \widehat \ell(f, \E_{\widehat q_f} [\widehat q_f] ) - \widehat \ell(f; \widehat q_f) ] 
\geq 
\E_{\widehat q_f} [ 
\partial_q \widehat \ell(f, \E_{\widehat q_f} [\widehat q_f]) ( \widehat q_f - \E[\widehat q_f] )
+ \frac{\mu}{2} ( \widehat q_f - \E[ \widehat q_f ])^2
]
\\
= &
\frac{\mu}{2} \E_{\widehat q_f} [ ( \widehat q_f - \E[\widehat q_f] )^2 ]
\geq
L_1^2 \cdot \frac{\mu}{2} \frac{\E_j \Big[ \big( \lceil(1-\alpha)(n+1)\rceil - j \big )^2 \Big]}{n^2}
\geq
L_1^2 \cdot \frac{\mu}{2} \frac{\E \Big[ \big( \E[j] - j \big )^2 \Big]}{n^2}
\\
= &
L_1^2 \cdot \frac{\mu}{2} \cdot \frac{(\lceil(1-\alpha)(s+1)\rceil+1)(s - \lceil(1-\alpha)(s+1)\rceil) }{(s+1)^2(s+2)} 
\geq 
L_1^2 \cdot \frac{\mu}{2} \cdot \Omega(1/s)
=
\Omega(1/s),
\end{align*}
where  
the second inequality is due to Proposition \ref{proposition:quantile_distribution} and Bi-Lipschitz continuity of score $S_{(j)}$ in $j$ (Assumption \ref{assumption:bi_lipschitz}),
the third inequality is due to the property of variance, i.e., $\Var(X) = \E[(\E[X] - X)^2] \leq \E[(a - X)^2]$ for any $a$, 
and
the third equality is due to the variance of Beta distribution ($j$ satisfies Beta distribution, as Proposition \ref{proposition:quantile_distribution}) is $\frac{ab}{(a+b)^2(a+b+a)}$ with the shape parameter $a, b$.
\end{proof}

\subsection{ Proof for Lemma \ref{lemma:distance_empirical_expectation} }
\label{subsection:proof_distance_empirical_expectation}

\begin{proof}
(of Lemma \ref{lemma:distance_empirical_expectation})

To prove Lemma \ref{lemma:distance_empirical_expectation},
we need the following technical lemma (proof is deferred to Section \ref{section:proof_probabilities_lower_bound}).

\begin{lemma}
\label{lemma:probabilities_ub_lb}
Assume $(1-\alpha))(s+1)$ is not an integer.
If $\lceil (1-\alpha))(s+1) \rceil - (1-\alpha))(s+1) \geq \Omega(1/s)$ and $s \leq \sqrt{n}$,
then
$\frac{ \lceil (1-\alpha)(s+1) \rceil }{s+1} - \frac{ \lceil (1-\alpha)(n+1) \rceil }{ n }
\geq 
\Omega(1/s)
.
$
\end{lemma}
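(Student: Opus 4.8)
The plan is to reduce the claim to a comparison of two normalized quantile orders, both centered at $1-\alpha$, and to show that the ceiling (rounding) offset of the batch-level order exceeds that of the dataset-level order by $\Omega(1/s)$. Writing $p = 1-\alpha$ for brevity, the left-hand side is $\frac{\lceil p(s+1)\rceil}{s+1} - \frac{\lceil p(n+1)\rceil}{n}$, and the guiding intuition is that taking a ceiling of the \emph{small} quantity $p(s+1)$ inflates the normalized order by a $\Theta(1/s)$ amount, whereas the same ceiling applied to the \emph{large} quantity $p(n+1)$ perturbs the normalized order only at order $1/n$. The condition $s \leq \sqrt{n}$ is what guarantees the latter is negligible relative to the former.

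First I would decompose the batch term. Since $p(s+1)$ is assumed to be non-integer, the ceiling offset $\delta_s \triangleq \lceil p(s+1)\rceil - p(s+1)$ lies strictly in $(0,1)$, and the hypothesis $\delta_s \geq \Omega(1/s)$ prevents $p(s+1)$ from sitting just below an integer, keeping $\delta_s$ from collapsing. Dividing by $s+1$ gives the exact identity $\frac{\lceil p(s+1)\rceil}{s+1} = p + \frac{\delta_s}{s+1}$, so the batch-level normalized order exceeds $p$ by $\frac{\delta_s}{s+1}$, which the assumptions render $\Omega(1/s)$. Next I would bound the dataset term from above via the elementary inequality $\lceil p(n+1)\rceil \leq p(n+1)+1$, yielding $\frac{\lceil p(n+1)\rceil}{n} \leq p + \frac{p+1}{n}$; hence the dataset-level normalized order exceeds $p$ by at most $O(1/n)$.

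Finally I would subtract the two estimates and invoke $s \leq \sqrt{n}$, i.e.\ $1/n \leq 1/s^2$, so that the dataset-level offset is of strictly lower order:
\[
\frac{\lceil p(s+1)\rceil}{s+1} - \frac{\lceil p(n+1)\rceil}{n} \;\geq\; \frac{\delta_s}{s+1} - \frac{p+1}{n} \;\geq\; \Omega(1/s) - O(1/s^2) \;=\; \Omega(1/s),
\]
which is exactly the desired bound.

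The main obstacle is not any individual calculation but the bookkeeping that isolates the $\Theta(1/s)$ contribution: one must verify that the batch-level rounding offset is genuinely bounded below at order $1/s$ after normalization by $s+1$ (this is precisely where the non-integer assumption together with the lower bound on $\delta_s$ do their work) while simultaneously confirming that the dataset-level offset is absorbed by the regime $s \leq \sqrt{n}$. Some care is also needed regarding the \emph{direction} of the inequality, since the conclusion asserts the batch order is the \emph{larger} of the two; this reflects the conservative upward bias of the small-batch empirical quantile that ultimately drives the $\Omega(1/s)$ lower bound in Theorem~\ref{theorem:learning_bounds_SA}.
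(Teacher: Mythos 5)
Your route is the same as the paper's (expand the batch term around $p=1-\alpha$ via the ceiling offset, upper-bound the dataset term by $p+O(1/n)$, subtract using $s\le\sqrt{n}$), but there is a genuine gap at the pivotal step. You correctly derive the identity $\frac{\lceil p(s+1)\rceil}{s+1} = p + \frac{\delta_s}{s+1}$, but then assert that the assumptions render $\frac{\delta_s}{s+1} = \Omega(1/s)$. They do not: the hypothesis only gives $\delta_s \ge c/s$ for some constant $c>0$, hence
\begin{align*}
\frac{\delta_s}{s+1} \;\ge\; \frac{c}{s(s+1)} \;=\; \Omega(1/s^2),
\end{align*}
which is a factor of $s$ weaker than what you claim. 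Your final chain therefore degrades to $\Omega(1/s^2) - O(1/n)$, and since $s\le\sqrt{n}$ only guarantees $1/n \le 1/s^2$, both terms have the \emph{same} order with uncontrolled relative constants; the difference is neither $\Omega(1/s)$ nor even necessarily positive. Concretely, if $p(s+1)$ sits at distance $\delta_s = 0.1/s$ below an integer (permitted by the hypothesis), $p = 0.9$, and $n = s^2$, then the batch offset is $\frac{0.1}{s(s+1)}$ while the dataset offset satisfies $\frac{\lceil p(n+1)\rceil}{n} - p \ge \frac{p}{n} = \frac{0.9}{s^2}$, so the left-hand side of the lemma is negative. For your argument to deliver the stated conclusion, the rounding offset must be bounded below by a positive \emph{constant}, i.e.\ $\delta_s \ge \Omega(1)$, not merely $\Omega(1/s)$.

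For context, the paper's own proof of Lemma~\ref{lemma:probabilities_ub_lb} does not avoid this issue either; it hides the same error in a different place. The paper writes $\frac{\lceil (1-\alpha)(s+1)\rceil}{s+1} \ge \frac{(1-\alpha)(s+1)}{s+1} + B$ with $B = \lceil(1-\alpha)(s+1)\rceil - (1-\alpha)(s+1)$, silently failing to divide $B$ by $s+1$ (your exact identity shows the correct expansion is $p + \frac{B}{s+1}$, and in fact $p+\frac{B}{s+1} < p+B$, so the paper's inequality runs the wrong way); its conclusion $B - \frac{2-\alpha}{n} \ge \Omega(1/s)$ rests on that slip. So your write-up mirrors the paper's strategy and is actually the more honest of the two, since your algebra exposes exactly where the factor of $s$ is lost — but as a proof of the lemma as stated, the gap remains: under the stated hypothesis $\delta_s \ge \Omega(1/s)$, neither your argument nor the paper's establishes the $\Omega(1/s)$ lower bound.
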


By using the $\mu$-strong-concavity of $\ell(f,q)$ in $q$, 
we have
\begin{align*}
&
\ell(f, \E[\widehat q_f]) - \widehat \ell(f, Q_f) 
\geq 
- \partial_q \ell(f, \E[\widehat q_f]) ( Q_f - \E[\widehat q_f] ) + \frac{\mu}{2} ( Q_f - \E[\widehat q_f] )^2
\\
= &
\underbrace{ 
\partial_q \ell(f, \E[\widehat q_f]) 
}_{ > 0 {\text{ due to no tie}}}
( \E[\widehat q_f] - Q_f )
+ \frac{\mu}{2} \underbrace{ ( Q_f - \E[\widehat q_f] )^2 }_{ \geq 0 }
\geq 
B \cdot  
% \underbrace{ 
( \E[\widehat q_f] - Q_f ) 
% }_{ \geq \Omega(1/s) }
,
\end{align*}
where the second inequality is due to no tie in the distribution of non-conformity scores and 
there exists a value $B > 0$ as in (\ref{eq:gradient_of_conformal_loss_in_q}) such that $\partial_q \ell(f, q) \geq B = 0$.
It suffixes to show the lower bound of $\E[\widehat q_f] - Q_f$.

Due to $j$ satisfies the Beta distribution with the two shape parameters $a = \lceil (1-\alpha)(s+1) \rceil + 1$ and $b = s - \lceil (1-\alpha)(s+1) \rceil$ (Proposition \ref{proposition:quantile_distribution}) 
and 
the mean of Beta random variable is $a/(a+1)$, 
we know that the corresponding probability for $\E[\widehat q_f]$ is $\frac{ \lceil (1-\alpha)(s+1) \rceil }{s+1}$ and it is larger than $1-\alpha$ if $(1-\alpha)(s+1)$ is not an integer.

Recall that $\widehat Q_f$ is the empirical quantile on a set of data.
We regard $\widehat Q_f$ as the ($\frac{ \lceil (1-\alpha)(n+1) \rceil }{ n }$)-quantile on $\calD_\tr$.
Since $Q_f(\alpha)$ is increasing as $\alpha$ decreases, we use the Bi-Lipschitz continuity of $S_{(j)}$ (Assumption \ref{assumption:bi_lipschitz}):
\begin{align*}
\E[\widehat q_f] - Q_f 
= &
\E[\widehat q_f] - \widehat Q_f
+ 
\widehat Q_f - Q_f 
\geq 
L_1 \Bigg( \frac{ \lceil (1-\alpha)(s+1) \rceil }{s+1} - \frac{ \lceil (1-\alpha)(n+1) \rceil }{ n } \Bigg) 
+ 
\Bigg( \widehat Q_f - \widehat Q_f(\alpha - \tilde O(1/\sqrt{n}) \Bigg)
\\
\geq &
L_1 \Omega(1/s)
- L_1 \tilde O(1/\sqrt{n})
\geq 
\Omega(1/s)
,
\end{align*}
where the two inequalities are due to
Lemma \ref{lemma:probabilities_ub_lb},
Bi-Lipschitz continuity of $S_{(j)}$ (Assumption \ref{assumption:bi_lipschitz}) 
and 
the concentration inequality for the empirical and population quantiles in (\ref{eq:concentration_empirical_population_quantiles}).
\end{proof}

\subsection{ Proof for Lemma \ref{lemma:upper_bound_variance} }
\label{subsection:proof_lemma_upper_bound_variance}

\begin{proof}
(of Lemma \ref{lemma:upper_bound_variance})

According to Chebyshev's inequality, we have that: 
\begin{align*}
&
\P \{ | \calL_c(f, \widehat q_f) - \E_{\widehat q_f}[ \calL_c(f, \widehat q_f)] | \leq \epsilon \}
\geq 
1 - \frac{\E_{\widehat q_f}[(\calL_c(f, \widehat q_f) - \E_{\widehat q_f}[ \calL_c(f, \widehat q_f)])^2]}{\epsilon^2}
\\
\geq &
1 - \frac{L^2 \E_{\widehat q_f}[(\widehat q_f - \E_{\widehat q_f}[ \widehat q_f])^2]}{\epsilon^2}
\geq 
1 - \frac{L^2 L_2^2}{\epsilon^2 } \cdot \frac{(\lceil(1-\alpha)(s+1)\rceil+1)(s - \lceil(1-\alpha)(s+1)\rceil)}{(s+1)^2(s+2)}
\geq
1 - \delta
,
\end{align*}
where the first and last inequality hold due to the Chebyshev's inequality, the second inequality is due to the $L$-Lipschitz continuous of $\calL_c(f, Q_f)$ in Lemma \ref{lemma:lipschitz_conformal_loss}.

Rearranging the above inequality, the following inequality holds with probability at least $1 - \delta$:
\begin{align*}
| \calL_c(f, \widehat q_f) - \E_{\widehat q_f}[ \calL_c(f, \widehat q_f)] | 
\leq 
L \cdot L_2 \cdot \sqrt{\frac{(\lceil(1-\alpha)(s+1)\rceil+1)(s - \lceil(1-\alpha)(s+1)\rceil)}{\delta \cdot (s+1)^2(s+2)}} 
\leq
L \cdot L_2 \cdot O(1/\sqrt{s})
=
O(1/\sqrt{s}),
\end{align*}
where the first inequality is due to Assumption \ref{assumption:bi_lipschitz} and Lemma \ref{lemma:lipschitz_conformal_loss}.
\end{proof}

\subsection{ Proof for Lemma \ref{lemma:probabilities_ub_lb} }
\label{section:proof_probabilities_lower_bound}

\begin{proof}
(of Lemma \ref{lemma:probabilities_ub_lb})

Recall that $(1-\alpha)(s+1)$ is not an integer.
Then we begin to compare the upper and lower bound of $\frac{\lceil(1-\alpha)(n+1)\rceil}{n}$ and $\frac{\lceil(1-\alpha))(s+1)\rceil }{s+1 }$, respectively:
\begin{align}
\label{eq:probabilities_ub_lb1}
(1-\alpha)+\frac{1-\alpha}{n} 
\leq &
\frac{\lceil(1-\alpha)(n+1)\rceil}{n}
\leq 
(1-\alpha)+\frac{2-\alpha}{n},
\\
\label{eq:probabilities_ub_lb2}
(1-\alpha) 
< &
\frac{\lceil(1-\alpha))(s+1)\rceil }{s+1 }
\leq
(1-\alpha)+\frac{1}{s+1}, 
\end{align}
where all inequalities are due to $(1-\alpha)(n+1) \leq \lceil(1-\alpha)(n+1)\rceil \leq (1-\alpha)(n+1)+1$.

Due to $s<n$ and $\alpha \in [0,1] $, we have that:
\begin{align*}
\frac{\lceil(1-\alpha))(s+1)\rceil }{s+1 } - \frac{\lceil(1-\alpha)(n+1)\rceil}{n}
&
\leq 
\max \bigg \{ (1-\alpha)+\frac{2-\alpha}{n} - (1-\alpha), (1-\alpha)+\frac{1}{s+1} - (1-\alpha)-\frac{1-\alpha}{n} \bigg \}
\\
&
= 
\max \bigg \{ \frac{2-\alpha}{n}, \frac{1}{s+1} - \frac{1-\alpha}{n} \bigg \}
\\
&
\leq O(1/s).
\end{align*}

On the other hand, 
letting $B = \lceil (1-\alpha))(s+1) \rceil - (1-\alpha))(s+1),$
the lower bound for 
\begin{align*}
\frac{\lceil (1-\alpha))(s+1) \rceil }{s+1 } 
- \frac{\lceil(1-\alpha)(n+1)\rceil}{n}
&
\geq 
\Big( \frac{ (1-\alpha))(s+1) }{s+1} + B \Big)
- \Big( (1-\alpha) + \frac{2-\alpha}{n} \Big)
=
B - \frac{2-\alpha}{n}
\geq 
\Omega(1/s)
,
\end{align*}
where the last inequality is due to the assumption $\lceil (1-\alpha))(s+1) \rceil - (1-\alpha))(s+1) \geq \Omega(1/s)$ and $s \leq \sqrt{n}$.

\end{proof}

\section{ Proof for Theorem \ref{subsection:method} }
\label{section:appendix:proofs_learning_bound}

\begin{theorem}
\label{theorem:learning_bound_DPSM_appendix}
(Theorem \ref{theorem:learning_bound_DPSM} restated, learning bound of DPSM)
Suppose that Assumption \ref{assumption:bi_lipschitz} holds.
For any model $f \in \calF$, the following inequality holds with probability at least $1 - \delta$:
\begin{align*}
| \bar \calL_c^\DPSM ( f, q^*(f) ) - \calL_c(f) | 
\leq 
\tilde O(1/\sqrt{n}).
\end{align*}
\end{theorem}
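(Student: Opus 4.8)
The plan is to first reduce the bilevel quantity $\bar\calL_c^\DPSM(f) = \min_{q \in \calU(f)} \widehat\calL_c^\DPSM(f,q)$ to an empirical set size evaluated at the empirical quantile. Observe that $\widehat\calL_c^\DPSM(f,q)$ coincides exactly with the empirical set-size function $\widehat\ell(f,q)$ defined in (\ref{eq:conformal_training_conftr_SA}), and that the lower-level set $\calU(f) = \arg\min_{q'} \widehat\calL^\QR(f,q')$ is, by the defining property of the average pinball loss (and its $\nu=1$ sharpness from Lemma \ref{lemma:HEB_QR_loss} under the no-tie assumption), precisely the empirical $(1-\alpha)$-quantile $\widehat Q_f$ of the $n$ training scores. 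Since $\widehat\ell(f,\cdot)$ is monotone in $q$, selecting the size-minimizing element of $\calU(f)$ yields $\bar\calL_c^\DPSM(f) = \widehat\ell(f, \widehat Q_f)$. Thus the target reduces to bounding $|\widehat\ell(f,\widehat Q_f) - \ell(f, Q_f)|$, the gap between the empirical set size at the empirical quantile and the population set size at the population quantile $Q_f$.

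Next I would split this gap with the triangle inequality into a generalization term and a quantile-estimation term:
\begin{align*}
| \widehat\ell(f,\widehat Q_f) - \ell(f,Q_f) |
\leq
\underbrace{ | \widehat\ell(f,\widehat Q_f) - \ell(f,\widehat Q_f) | }_{\text{(I) generalization}}
+
\underbrace{ | \ell(f,\widehat Q_f) - \ell(f,Q_f) | }_{\text{(II) quantile error}}
.
\end{align*}
For term (I), the key point is that $\widehat Q_f$ is data-dependent, so a pointwise Hoeffding/Chebyshev bound is invalid; instead I would control the uniform deviation $\sup_q |\widehat\ell(f,q) - \ell(f,q)|$. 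For a fixed $f$ this is an empirical-process statement over the one-parameter family $\{ x \mapsto \sum_{y\in\calY} \tilde\indicator[S_f(x,y)\leq q] : q \in \R \}$, whose members are monotone in $q$ and bounded in $[0,K]$. A standard Rademacher/VC-subgraph argument bounds the complexity of this single-parameter threshold class by $O(1/\sqrt{n})$, so a uniform deviation bound gives, with probability at least $1-\delta$, that (I) $\leq \sup_q |\widehat\ell(f,q)-\ell(f,q)| \leq \tilde O(1/\sqrt n)$; this is the refinement of the pointwise bound Lemma \ref{lemma:generalization_error_conformal_loss} needed to accommodate a data-dependent threshold.

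For term (II), I would use the $L$-Lipschitz continuity of $\ell(f,\cdot)$ in $q$ (Lemma \ref{lemma:lipschitz_conformal_loss}, which follows from the bounded derivative of the Sigmoid smoothing) to bound it by $L|\widehat Q_f - Q_f|$, and then control the empirical-versus-population quantile gap exactly as in the SA analysis: the concentration inequality (\ref{eq:concentration_empirical_population_quantiles}) shows that $Q_f$ is the empirical quantile at a level perturbed by at most $\tilde O(1/\sqrt n)$, and the upper bi-Lipschitz constant $L_2$ of Assumption \ref{assumption:bi_lipschitz} converts this normalized-rank perturbation into a score gap, giving $|\widehat Q_f - Q_f| \leq L_2\cdot\tilde O(1/\sqrt n)$. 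Summing (I) and (II) and absorbing the constants $L, L_2$ then yields $|\bar\calL_c^\DPSM(f) - \calL_c(f)| \leq \tilde O(1/\sqrt n)$.

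I expect the main obstacle to be term (I): because $\widehat Q_f$ and $\widehat\ell$ are computed from the same samples, the argument must pass through a uniform (empirical-process) bound over the threshold family rather than a single concentration inequality, and it is precisely here that the logarithmic factors hidden in $\tilde O$ arise and that the crucial improvement over the $\Omega(1/s)$ SA bound is realized. The quantile error no longer inherits the batch-level variance quantified in Lemma \ref{lemma:distance_empirical_sa}, but only the $\tilde O(1/\sqrt n)$ full-sample fluctuation, which is the structural reason the bound drops its dependence on the batch size $s$.
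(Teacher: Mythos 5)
Your proposal is correct, but it takes a genuinely different route from the paper. The paper's proof splits at the \emph{empirical} loss evaluated at the \emph{population} quantile: it writes $|\bar\calL_c^\DPSM(f) - \calL_c(f)| \leq |\widehat\calL_c^\DPSM(f,q^*_f) - \widehat\calL_c^\DPSM(f,Q_f)| + |\widehat\calL_c^\DPSM(f,Q_f) - \calL_c(f)|$, bounds the first term by the (deterministic) Lipschitz constant of the smoothed empirical set size (Lemma \ref{lemma:lipschitz_conformal_loss}) times $|q^*_f - Q_f| \leq \tilde O(1/\sqrt n)$ (Lemma \ref{lemma:distance_population_empirical_quantiles}), and then observes that the second term involves only the \emph{fixed} threshold $Q_f$ — so the summands are i.i.d.\ bounded random variables and a single pointwise Hoeffding bound (Lemma \ref{lemma:generalization_error_conformal_loss}) suffices. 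You instead split at the \emph{population} loss evaluated at the \emph{empirical} quantile, $\ell(f,\widehat Q_f)$, which forces your generalization term (I) to be evaluated at a data-dependent threshold; you correctly diagnose that this requires a uniform deviation bound $\sup_q|\widehat\ell(f,q) - \ell(f,q)|$, and your empirical-process argument does go through (the family $\{x \mapsto \sum_{y}\tilde\indicator[S_f(x,y)\leq q]\}_{q\in\R}$ is linearly ordered by pointwise domination, hence of pseudo-dimension one, giving $\tilde O(1/\sqrt n)$ uniform convergence). The trade-off: your route is the generic plug-in-estimator argument and is robust, but invokes machinery (VC-subgraph/Rademacher bounds) the paper never needs; the paper's choice of anchor point eliminates the data-dependence problem entirely, so its proof uses nothing beyond elementary concentration plus the same two structural ingredients you use — Lipschitzness of the smoothed set size in $q$ and $\tilde O(1/\sqrt n)$ quantile concentration under Assumption \ref{assumption:bi_lipschitz}. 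Your preliminary identification of the minimizing element of $\calU(f)$ with the empirical quantile (via monotonicity of $\widehat\ell(f,\cdot)$ and the no-tie condition) is sound, though the paper sidesteps even this by letting Lemma \ref{lemma:distance_population_empirical_quantiles} bound $|q^*_f - Q_f|$ for an arbitrary element $q^*_f \in \calU(f)$.
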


\begin{proof}
(Proof of Theorem \ref{theorem:learning_bound_DPSM})

% Before proving the Theorem \ref{theorem:learning_bound_DPSM}, we cast the definition of $\calL_c(f)$ as $\calL_c(f,Q_f)$ to explicitly investigate the dependence of $\calL_c(f,Q_f)$ over $Q_f$.

To prove Theorem \ref{theorem:learning_bound_DPSM}, we need the following three technical lemma and defer their proof after proving Theorem \ref{theorem:learning_bound_DPSM}.

\begin{lemma}
\label{lemma:distance_population_empirical_quantiles} 
(Distance between $Q_f$ and $q^* \in \calU(f)$)
Let $q^*_f \in \calU(f) = \arg\min_{q \in \R} ~ \widehat \calL^\QR(f, q)$ is a ($1- \alpha$)-quantile computed by minimizing QR loss. 
Suppose that Assumption \ref{assumption:bi_lipschitz} holds.
Then the following inequality holds with probability at least $1-\delta$:
\begin{align*}
\big | Q_f - q^*_f \big | 
\leq 
\tilde O(1 / \sqrt{n}).
\end{align*}
\end{lemma}

\begin{lemma}
\label{lemma:lipschitz_conformal_loss}
(Lipschitz continuity of $\widehat \calL_c^\DPSM(f,q)$)
$\widehat \calL_c^\DPSM(f,q)$ is $L$-Lipschitz continuous in input $q$ with $L = \frac{K}{ 4 \tau_\Sigmoid}$.
\end{lemma}

\begin{lemma}
\label{lemma:generalization_error_conformal_loss}
(Generalization error)
Let $f: \calX \rightarrow \R$ and $\{X_1, ..., X_n\}$ be i.i.d. samples dram from an underlying distribution.
If there exists $M > 0$ such that $| f(X) | \leq M$ for all $X \in \calX$,
then with probability at least $1-\delta$, we have
\begin{align*}
\Big| \E_{X} [f(X)] - \frac{1}{n} \sum_{i=1}^n f(X_i) \Big|
\leq 
M \sqrt{ \frac{ \log(2/\delta) }{2n} }
.
\end{align*}
\end{lemma}

Then we begin to prove Theorem \ref{theorem:learning_bound_DPSM}.
Recall $\bar \calL_c^\DPSM(f) = \widehat \calL_c^\DPSM(f, q^*_f)$,
where $q^*_f \in \calU(f)$,
and define the empirical version of the conformal alignment loss $\widehat \calL_c(f)$
$$
\widehat \calL_c(f) = \sum_{i \in \calD} \Bigg[ \sum_{y\in \calY} \tilde \indicator[S_f(X_i, y) \leq Q_f] \Bigg]
=
\widehat \calL_c^\DPSM(f, Q_f)
,
~~~
\text{ see (\ref{eq:conformal_loss_population}) and (\ref{eq:conformal_loss_DPSM})}
.
$$

We start with the following triangle inequality:
\begin{align*}
| \bar \calL_c^\DPSM ( f ) - \calL_c(f) | 
= &
| \widehat \calL_c^\DPSM ( f, q^*_f ) - \widehat \calL_c ( f) 
+ \widehat \calL_c ( f) - \calL_c(f) |  
\leq 
\underbrace{
| \widehat \calL_c^\DPSM ( f, q^*_f ) - \widehat \calL_c^\DPSM ( f, Q_f ) | 
}_{ L\text{-Lipschitz, Lemma \ref{lemma:lipschitz_conformal_loss} } }
+
\underbrace{
| \widehat \calL_c ( f ) - \calL_c(f) | 
}_{ \leq \tilde O(1/\sqrt{n}), \text{ Lemma \ref{lemma:generalization_error_conformal_loss}}}
\\
\leq &
L \underbrace{
| q^*(f) - Q_f |
}_{ \leq \tilde O(1/\sqrt{n}), \text{ Lemma \ref{lemma:distance_population_empirical_quantiles}}}
+ \tilde O(1/\sqrt{n})
\leq
\tilde O(1/\sqrt{n})
.
\end{align*}
\end{proof}

\subsection{ Proof for Lemma \ref{lemma:distance_population_empirical_quantiles} }
\label{subsection:proof_lemma_distance_population_empirical_quantiles}

\begin{proof}
(of Lemma \ref{lemma:distance_population_empirical_quantiles})

In this proof, define $q^{*}_f(\alpha) \in \calU(f) = \arg\min_{q \in \R} ~ \widehat \calL^\QR(f, q)$.
We explicitly write the probability $1-\alpha$ for our analysis.

First, we prove that $q^{*}_f(\alpha - O(1/\sqrt{n}))
\leq
Q_f(\alpha)
\leq
q^{*}_f(\alpha + O(q/\sqrt{n}))$

Define $Z_i = \indicator{ [ S_{f,i} \leq Q(\alpha) ] }$ where $1 \leq i \leq n$
Thus, $Z_{i}$ is a Bernoulli random variable.
According to the definition of $Q_f(\alpha)$, we have that $\P\{ Z_i = 1 \} = 1 - \alpha$ and $\P\{ Z_i = 0 \} = \alpha$.
Let $\widehat Z = \frac{1}{n} \sum_{i=1}^n Z_i$ and $\E[\widehat Z] = 1-\alpha.$

According to Chernoff bound, we know
\begin{align*}
\P\Bigg\{ \Bigg| \frac{1}{n} \sum_{i=1}^n Z_i - \E[\widehat Z] \Bigg| \geq \varepsilon \E[\widehat Z] \Bigg\}
\leq 
2 \exp\Bigg( - \E[\widehat Z] \varepsilon^2 / 3 \Bigg) 
=
2 \exp\Bigg( - n (1-\alpha) \varepsilon^2 / 3 \Bigg) .
\end{align*}

By setting $\delta = 2 \exp( - n (1-\alpha) \varepsilon^2 / 3 )$, i.e., $\varepsilon = \sqrt{ ( 3 \log(2/\delta) ) / ( ( 1 - \alpha ) n  ) }$, we have with probability at least $1-\delta$:
\begin{align}\label{eq:abs_bound}
\Bigg| \frac{1}{n} \sum_{i=1}^n \indicator[ V_i \leq Q_f(\alpha) ] - ( 1 - \alpha ) \Bigg| 
\leq 
\varepsilon ( 1 - \alpha )
=
\sqrt{ ( 3 ( 1 - \alpha )  \log(2/\delta) ) / n }
=
\tilde O(1 / \sqrt{n}) .
\end{align}

Recall the definition of $q^{*}_f(\alpha) \in \arg\min_{q \in \R} ~ \widehat \calL^\QR(f, q)$.
Then we know the following upper bound and lower bound for $1-\alpha$:
\begin{align*}
( 1 - \alpha )
\leq 
\frac{1}{n} \sum_{i=1}^n \indicator[ S_{f,i} \leq q^{*}_f(\alpha) ] , ~~~
( 1 - \alpha )
\geq 
\frac{1}{n} \sum_{i=1}^n \indicator[ S_{f,i} \leq q^{*}_f(\alpha + 1/n ) ] .
\end{align*}

Re-arranging (\ref{eq:abs_bound}) and using the above upper/lower bounds, with probability at least $1-\delta$, we have
\begin{align}\label{eq:upper_and_lower_bounds_quantile}
&
( 1 - \alpha ) ( 1 - \varepsilon )
\leq 
\frac{1}{n} \sum_{i=1}^n \indicator[ S_{f,i} \leq Q_f(\alpha) ]
\leq 
( 1 - \alpha ) ( 1 + \varepsilon)
\nonumber\\
\Leftrightarrow ~~~
& 
1 - ( \underbrace{ 1 - ( 1 - \alpha ) ( 1 - \varepsilon ) }_{ = \alpha' } )
\leq 
\frac{1}{n} \sum_{i=1}^n \indicator[ S_{f,i} \leq Q_f(\alpha) ]
\leq 
1 - ( \underbrace{ 1 - ( 1 - \alpha ) ( 1 + \varepsilon) }_{ = \alpha'' } )
\nonumber\\
\Rightarrow ~~~ 
& 
\frac{1}{n} \sum_{i=1}^n \indicator[ S_{f,i} \leq q^{*}_f(\alpha' + 1/n ) ]
\leq
\frac{1}{n} \sum_{i=1}^n \indicator[ S_{f,i} \leq Q_f(\alpha)  ]
\leq 
\frac{1}{n} \sum_{i=1}^n \indicator[ S_{f,i} \leq q^{*}_f(\alpha'' ) ] 
\nonumber\\
\Leftrightarrow ~~~
&
q^{*}_f(\alpha' + 1/n)
\leq 
Q_f(\alpha)
\leq 
q^{*}_f(\alpha'') .
\end{align}

Finally, by using the definition of $\varepsilon$ above, 
we analyze $\alpha'$ and $\alpha''$ as follows
\begin{align}\label{eq:alpha_prime}
\alpha' 
=
1 - (1-\alpha) (1-\varepsilon)
=
\alpha + \varepsilon (1-\alpha)
=
\alpha + \sqrt{ 3 ( 1 - \alpha ) \log(2/\delta) / n }
=
\alpha + \tilde O(1/\sqrt{n}),
\\
\label{eq:alpha_prime_prime}
\alpha''
=
1 - (1-\alpha) (1+\varepsilon)
=
\alpha - \varepsilon (1-\alpha)
=
\alpha - \sqrt{ 3 ( 1 - \alpha ) \log(2/\delta) / n }
=
\alpha - \tilde O(1/\sqrt{n}).
\end{align}

Thus, plugging (\ref{eq:alpha_prime}) and (\ref{eq:alpha_prime_prime}) into (\ref{eq:upper_and_lower_bounds_quantile}), 
we have
\begin{align}\label{eq:concentration_empirical_population_quantiles}
q^{*}_f(\alpha + \tilde O(1/\sqrt{n}))
\leq
Q_f(\alpha)
\leq
q^{*}_f(\alpha - \tilde O(1/\sqrt{n})) .
\end{align}

Next, we prove that $\big |Q_f(\alpha) - q^{*}_f(\alpha)\big | \leq \tilde O(1/\sqrt{n})$ from (\ref{eq:concentration_empirical_population_quantiles}):
\begin{align*}
|Q_f(\alpha) - q^{*}_f(\alpha)\big |
\leq
L_2 \bigg | \frac{\lceil(1-\alpha)(n+1)\rceil}{n} - \frac{\lceil(1-\alpha - \tilde O(1/\sqrt{n}))(n+1)\rceil}{n} \bigg |
\leq \tilde O(1/\sqrt{n}),
\end{align*}
where the first inequality is due to Assumption \ref{assumption:bi_lipschitz}.
\end{proof}

\subsection{Proof for Lemma \ref{lemma:lipschitz_conformal_loss}}
\label{section:proof_lemma_lipschitz_conformal_loss}

%Then we begin to prove Lemma \ref{lemma:lipschitz_conformal_loss}.

\begin{proof}
(of Lemma \ref{lemma:lipschitz_conformal_loss})

Recall that $\tilde \indicator[ S \leq q ] = \frac{1}{1+\exp (- \frac{q-S}{\tau_\Sigmoid})}$.
Define $u = \frac{q - S }{\tau_\Sigmoid}$.

We start with the gradient of $\widehat \calL_c^\DPSM(f, q)$ in $q$ as follows
\begin{align}\label{eq:gradient_of_conformal_loss_in_q}
\frac{\partial \widehat \calL_c^\DPSM(f, q)}{\partial q} 
= &
\frac{\partial \sum_{y \in \calY } \tilde \indicator \big [ S \leq q \big ]}{\partial q}
= 
\sum_{y \in \calY} \Big ( \frac{\partial \big ( \frac{1}{1+ \exp(-u)} \big ) }{\partial u} \cdot \frac{ \partial u}{\partial q}\Big )
=
\sum_{y \in \calY} \Big ( \frac{\exp(-u)}{(1+ \exp(-u))^2} \cdot \frac{1}{\tau_\Sigmoid}\Big )
\nonumber\\
= &
\sum_{y \in \calY} \frac{\exp (- \frac{q-S}{\tau_\Sigmoid} )}{ \tau_\Sigmoid \big ( 1+\exp (- \frac{q-S}{\tau_\Sigmoid}) \big )^2}
> 0,
\end{align}
where the first inequality is due to $\exp(x) >0 $. 
Thus, $\widehat \calL_c^\DPSM(f, q)$ is a increasing function in input $q$.

For each component of the summation in above (\ref{eq:gradient_of_conformal_loss_in_q}), define $h(u) = \frac{\exp(-u)}{(1+\exp(-u))^2}$, then the gradient of $h(u)$ is:
\begin{align}\label{eq:each_component_in_gradient_of_conformal_loss_in_q}
h'(u) & = \frac{-\exp(-u)(1+\exp(-u))^2 - \exp(-u) \cdot  2 (1+\exp(-u)) \cdot  (-\exp(-u)) }{(1+\exp(-u))^4}
\nonumber\\
& = \frac{\big (1+\exp(-u) \big )\cdot \exp(-u) \cdot \Big ( - \big (1+\exp(-u) \big ) + 2 (\exp(-u) \Big ) }{(1+\exp(-u))^4}
\nonumber\\
& = \frac{\big (1+\exp(-u) \big )\cdot \exp(-u) \cdot \Big ( \exp(-u) -1 \Big ) }{(1+\exp(-u))^4}
.
\end{align}

We can take a closer look at the value of $h'(u)$ in (\ref{eq:each_component_in_gradient_of_conformal_loss_in_q}):
when $\exp(-u) \in (0,1)$, or equivalently $u > 0$, we have $h'(u) > 0$; 
when $ \exp(-u) \in (1, +\infty)$, or equivalently $u < 0$, we have $h'(u) < 0$.
It means that $h(u)$ increases when $u >0$ and decreases when $u < 0$.
Further, when $u=0$, we achieve the maximal value of $h(u)$, 
i.e., $\max_{u} h(u) = h(0) = 1/4$,
or equivalently $\frac{\exp(-u)}{(1+\exp(-u))^2} \leq 1/4$.

Therefore, considering $\frac{ \partial \widehat \calL_c^\DPSM(f,q)}{\partial q} > 0$ as in (\ref{eq:gradient_of_conformal_loss_in_q}) and plugging the above inequality into (\ref{eq:gradient_of_conformal_loss_in_q}), $\Big| \frac{\partial \widehat \calL_c^\DPSM(f, q)}{\partial q} \Big|$ is bounded as follows
\begin{align*}
\Bigg| \frac{\partial \widehat \calL_c^\DPSM(f, q)}{\partial q} \Bigg|
\leq 
\Bigg| \sum_{y \in \calY} \frac{ \exp (- \frac{q-S}{\tau_\Sigmoid} ) }{ \tau_\Sigmoid \big ( 1+\exp (- \frac{q-S}{\tau_\Sigmoid}) \big )^2} \Bigg| 
\leq
\sum_{y \in \calY} \frac{1}{ 4 \tau_\Sigmoid } 
=
\frac{K}{ 4 \tau_\Sigmoid}
,
\end{align*}
if $\tau_\Sigmoid \neq 0$ and $|\calY| = K$.
Therefore, $\calL_c(f, q)$ is $L$-Lipschitz continuous for input $q$
with $L = \frac{K}{ 4 \tau_\Sigmoid}$.
\end{proof}

\subsection{ Proof for Lemma \ref{lemma:generalization_error_conformal_loss} }
\label{subsection:proof_generalization_error_conformal_loss}

\begin{proof}
(of Lemma \ref{lemma:generalization_error_conformal_loss})

% Define $A_i = \sum_{y \in \calY}\tilde \indicator{ [ S_{f} (X_i, y) \leq Q(\alpha) ] }$  where $1 \leq i \leq n$. 
% Due to $\tilde \indicator[\cdot] \in (0, 1)$, $A_i$ is bounded by $0$ and $|\calY|$.
% Thus, $A_i$ is a random variable in $[0 ,|\calY|]$.

Let $A_i = f(X_i)$ with $|A_i| \leq M$.
According to Hoeffding's inequality, we have:
\begin{align*}
\P\Bigg\{ \Bigg| \frac{1}{n} \sum_{i=1}^n A_i - \E[A] \Bigg| \geq \epsilon \Bigg\}
\leq 
2 \exp\Bigg( - \frac{2 n \epsilon^2}{ M^2 } \Bigg).
\end{align*}

By setting $\delta = 2 \exp ( - 2n\epsilon^2/M^2)$, i.e., $\epsilon = M \sqrt{\log(2/\delta)/(2n)}$, we have the following inequality:
\begin{align*}
\P\Bigg\{ 
\Bigg| 
\frac{1}{n} \sum_{i=1}^n A_i - \E[A] 
\Bigg| 
\leq 
M \sqrt{\frac{\log(2/\delta)}{2n}}
\Bigg\}
\geq 
1-\delta
.
\end{align*}

Therefore, with probability at least $1-\delta$, we have
\begin{align*}
\Big| \E_X[f(X)] - \frac{1}{n} \sum_{i=1}^n f(X_i) \Big| 
\leq 
M \sqrt{\frac{\log(2/\delta)}{2n}}
\leq 
\tilde O(1/\sqrt{n})
.
\end{align*}

\end{proof}

\section{ Proofs for Results in Section \ref{subsection:optimization} }
\label{appendix:section:proof_convergence}

\begin{lemma}
\label{lemma:LEB_QR_loss}
(Lemma \ref{lemma:HEB_QR_loss} restated, H\"olderian error bound condition for QR loss)
Suppose there is no tie in $\{S_i\}_{i=1}^n$.
Then, fixing $f$, the QR loss $\widehat \calL^\QR(f,q)$ satisfies H\"olderian error bound w.r.t. $q$ for $\nu=1$.
\end{lemma}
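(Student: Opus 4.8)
The plan is to exploit the piecewise-linear convex structure of the pinball objective and thereby reduce the Hölderian error bound with $\nu=1$ to establishing a strictly positive growth rate of $\widehat\calL^\QR(f,\cdot)$ just outside its minimizer set. Fix $f$ and write $g(q) \triangleq \widehat\calL^\QR(f,q) = \frac1n\sum_{i=1}^n \rho_\alpha(q,S_i)$ with $S_i = S_f(X_i,Y_i)$. First I would observe that each summand can be written as $\rho_\alpha(q,S_i) = \max\{(1-\alpha)(S_i-q),\,\alpha(q-S_i)\}$, a maximum of two affine functions of $q$, so $g$ is convex and piecewise linear on the closed convex domain $\dom(g)=\R$, as Definition~\ref{definition:HEB} requires.

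Next I would compute the one-sided derivatives. For any $q$ that is not one of the scores (which are distinct under the no-tie hypothesis), $g'(q) = \frac{N_<(q)}{n} - (1-\alpha)$, where $N_<(q) = |\{i : S_i < q\}|$. Since $N_<$ is nondecreasing, this re-derives convexity and, by the sign change of $g'$, characterizes $X^* = \arg\min_q g(q) = \calU(f)$. Sorting the scores as $S_{(1)}<\cdots<S_{(n)}$ and setting $k^*=\lceil(1-\alpha)n\rceil$, two cases arise: if $(1-\alpha)n$ is not an integer then $g'<0$ on $(S_{(k^*-1)},S_{(k^*)})$ and $g'>0$ on $(S_{(k^*)},S_{(k^*+1)})$, so $X^*=\{S_{(k^*)}\}$ is a singleton; if $(1-\alpha)n=k^*$ is an integer then $g'=0$ on the gap between consecutive order statistics and $X^*=[S_{(k^*)},S_{(k^*+1)}]$.

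The crucial step is to lower-bound the magnitude of the slope immediately outside $X^*$. Because the scores are distinct, $N_<$ increases by exactly one at each order statistic, so both the right derivative just past the right endpoint of $X^*$ and the magnitude of the left derivative just before its left endpoint equal the (strictly positive) distance between $N_<(q)/n$ and $(1-\alpha)$; call the smaller of the two $\beta>0$. Convexity then forces $g'(t)\ge\beta$ for every $t$ to the right of $X^*$ and $g'(t)\le-\beta$ for every $t$ to its left, and integrating yields the global linear growth estimate $g(q)-\min_{q'}g(q') \ge \beta\,\dist(q,X^*)$ for all $q\in\R$. Rearranging gives $\dist(q,X^*)\le c\,(g(q)-\min_{q'}g(q'))$ with $c=1/\beta$, which is exactly the Hölderian error bound for exponent $\nu=1$.

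I expect the main obstacle to be the bookkeeping in the second step: cleanly identifying $X^*$ and the constant $\beta$ across the integer and non-integer cases, and making explicit where the no-tie hypothesis enters. Its role is precisely that distinctness of the scores guarantees the jump of $N_<$ (hence the slope of $g$ just outside $X^*$) is bounded away from zero, so that $\beta>0$ and the error-bound constant $c=1/\beta$ is finite; with ties one would have to re-examine this separation to keep $\beta$ positive.
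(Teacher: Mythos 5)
Your proof is correct, but it takes a genuinely different route from the paper's. You argue at the level of one-sided derivatives: writing $g(q)=\widehat\calL^\QR(f,q)$ and $N_<(q)=|\{i:S_i<q\}|$, you use piecewise linearity to get $g'(q)=N_<(q)/n-(1-\alpha)$ away from the scores, identify the optimal set $X^*=\calU(f)$ explicitly via the sign change of $g'$ (a singleton $\{S_{(\lceil(1-\alpha)n\rceil)}\}$ in the non-integer case, an interval $[S_{(k^*)},S_{(k^*+1)}]$ in the integer case), bound the smallest slope magnitude $\beta$ adjacent to $X^*$ away from zero, and integrate the monotone derivative to get the global linear growth $g(q)-\min_{q'}g(q')\ge\beta\,\dist(q,X^*)$. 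The paper instead fixes a suboptimal $q_1$ and its nearest optimizer $q_2$, expands $n(g(q_1)-g(q_2))$ by case analysis on the indicators $\indicator[S_i\le q_1]$, $\indicator[q_1<S_i<q_2]$, $\indicator[q_2\le S_i]$, cancels the balanced $\alpha$/$(1-\alpha)$ terms to leave $\sum_i\indicator[q_1<S_i<q_2](S_i-q_1)$, and lower-bounds that residual sum by a weighted-average argument with a constant $c'\in(0,1)$. Your derivative-based route buys something concrete: the constant $c=1/\beta$ is explicit ($\beta=1/n$ in the integer case, $\beta=\min\{\lceil(1-\alpha)n\rceil-(1-\alpha)n,\ (1-\alpha)n-\lfloor(1-\alpha)n\rfloor\}/n$ otherwise) and manifestly uniform over all $q\in\R$, as Definition \ref{definition:HEB} demands; in the paper's proof the constant $c=n/(\tilde n c')$ depends on the chosen $q_1$ through $\tilde n$ and $c'$, so uniformity over the domain requires an extra (implicit) step of taking the worst case over the finitely many linear pieces. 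One small quibble: your closing remark overstates the role of the no-tie hypothesis. For any piecewise-linear convex $g$, the slope immediately outside $X^*$ is automatically nonzero (otherwise that flat piece would itself lie in $X^*$), so ties would not destroy $\beta>0$; the hypothesis mainly simplifies the bookkeeping (jumps of $N_</n$ of exactly $1/n$), in your argument as in the paper's.
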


\begin{proof}
(of Lemma \ref{lemma:LEB_QR_loss})

Below we assume $f$ is fixed and let $g(q) \triangleq \widehat \calL^\QR(f,q)$,
so we omit the dependence of $g$ on $f$ for simplicity of notations.
Similarly, we denote the optimal solution set by $\calU = \arg\min_{q} g(q)$ in this proof.

In the following, we consider two cases to prove Lemma \ref{lemma:LEB_QR_loss}.

{\bf Case 1.}
Suppose we have two quantile variables $q_1, q_2$ such that $q_1 \notin \calU$, $q_1 < q_2 \in \calU$ and $q_2 = \arg\min_{q' \in \calU} \| q' - q_1 \|$ is the quantile in the optimal set closest to $q_1$.
Then we have
\begin{align}\label{eq:leb_proof_intermediate1}
n \big( g(q_1) - g(q_2) \big)
= &
\sum_{i=1}^n ( \rho_\alpha( q_1, S_i ) - \rho_\alpha(q_2, S_i) )
\nonumber\\
= &
\sum_{i=1}^n \Bigg(
\indicator[ S_i \leq q_1 ] \bigg( \alpha ( q_1 - S_i ) - \alpha ( q_2 - S_i ) \bigg)
+ \indicator[ q_1 < S_i < q_2 ] \bigg( ( 1 - \alpha ) ( S_i - q_1 ) - \alpha ( q_2 - S_i ) \bigg)
\nonumber\\
& \qquad \qquad
+ \indicator[ q_2 \leq S_i ] \bigg( (1-\alpha) (S_i - q_1) - (1-\alpha) (S_i - q_2) \bigg)
\Bigg)
\nonumber\\
= &
\sum_{i=1}^n \Bigg(
\indicator[ S_i \leq q_1 ] \bigg( \alpha ( q_1 - q_2 ) ) \bigg)
+ \indicator[ q_1 < S_i < q_2 ] \bigg( S_i - q_1 + \alpha (q_1 - q_2) \bigg)
\nonumber\\
& \qquad \qquad
+ \indicator[ q_2 \leq S_i ] \bigg( (1-\alpha) (q_2 - q_1) \bigg)
\Bigg)
\\
= &
\sum_{i=1}^n \Bigg( 
\indicator[ S_i \leq q_2 ] \alpha ( q_1 - q_2 ) 
+ \indicator [ q_2 \leq S_i ] (1-\alpha) (q_2 - q_1)
+ \indicator [ q_1 < S_i < q_2 ] ( S_i - q_1 )
\Bigg)
\nonumber\\
= &
\sum_{i=1}^n \indicator [ S_i \leq q_2 ] \alpha (q_1 - q_2)
+ \sum_{i=1}^n \indicator [ q_2 \leq S_i ] (1-\alpha) (q_2-q_1)
+ \sum_{i=1}^n \indicator [ q_1 < S_i < q_2 ] (S_i - q_1)
\nonumber\\
= &
(1-\alpha) \alpha (q_1 - q_2)
- \alpha (1-\alpha) (q_1 - q_2)
+ \sum_{i=1}^n \indicator [ q_1 < S_i < q_2 ] ( S_i - q_1 )
\nonumber\\
\label{eq:leb_proof_intermediate2}
= &
\sum_{i=1}^n \indicator [ q_1 < S_i < q_2 ] ( S_i - q_1 )
.
\end{align}

On the other hand, let $\tilde n = \sum_{i=1}^n \indicator[q_1 < S_i < q_2]$.
Denote $(1-c') q_1 + c' q_2$ as the weighted average between $q_1$ and $q_2$ with a weight parameter $c' \in (0,1)$.
Due to that the weighted average between $q_1$ and $q_2$ can be always smaller than the average score in the set $\{S_i : q_1 < S_i < q_2\}$ for a sufficiently small $c' \in (0,1)$,
we have
\begin{align*}
&
q_1 + c' ( q_2 - q_1 )
=
(1-c') q_1 + c' q_2 
\leq
\frac{1}{\tilde n} \sum_{i=1}^n \indicator [ q_1 < S_i < q_2 ] S_i
\\
\Leftrightarrow 
\quad &
\tilde n ( q_1 + c' ( q_2 - q_1 ) )
\leq 
\sum_{i=1}^n \indicator [ q_1 < S_i < q_2 ] ( S_i - q_1 + q_1 )
\\
\Leftrightarrow 
\quad &
c' \tilde n ( q_2 - q_1 )
\leq 
\sum_{i=1}^n \indicator [ q_1 < S_i < q_2 ] ( S_i - q_1 )
\stackrel{ (\ref{eq:leb_proof_intermediate2}) }{ = }
n \big( g(q_1) - g(q_2) \big)
\\
\Leftrightarrow 
\quad &
\frac{c'\tilde n}{n} (q_2 - q_1)
\leq 
g(q_1) - g(q_2)
.
\end{align*}

Recall that $q_2 = \arg\min_{q' \in \calU} \| q' - q_1 \|$ and $(q_2 - q_1) = \dist(q_1, \calU)$.
Plug it into the above equality, we have
\begin{align*}
\frac{c'\tilde n}{n} \dist(q_1, \calU)
\leq 
g(q_1) - g(q_2)
.
\end{align*}

Denoting $c = n / ( \tilde n c' ) > 0$, then we have
\begin{align}\label{eq:leb_holds_case1}
\dist(q_1, \calU)
\leq 
c( g(q_1) - \min_{q'} g(q) )
,
\end{align}
which satisfies the HEB condition shown as Definition \ref{definition:HEB} with the exponent $\nu = 1$.

{\bf Case 2.}
Suppose that $q_1 \in \calU$, $q_1 < q_2 \notin \calU$ and $q_1 = \arg\min_{q' \in \calU} \| q' - q_2\|$.
We can start from (\ref{eq:leb_proof_intermediate1}) as follows
\begin{align*}
n \big( g(q_1) - g(q_2) \big)
= &
\sum_{i=1}^n \Bigg(
\indicator[ S_i \leq q_1 ] \bigg( \alpha ( q_1 - q_2 ) ) \bigg)
+ \indicator[ q_1 < S_i < q_2 ] \bigg( S_i - q_1 + \alpha (q_1 - q_2) \bigg)
\nonumber\\
& \qquad \qquad
+ \indicator[ q_2 \leq S_i ] \bigg( (1-\alpha) (q_2 - q_1) \bigg)
\Bigg)
\\
= &
\sum_{i=1}^n \Bigg( 
\indicator[S_i \leq q_1] \alpha (q_1 - q_2)
+ \indicator[ q_1 \leq S_i ] (1-\alpha) (q_2-q_1)
+ \indicator[q_1<S_i<q_2] (S_i-q_2)
\Bigg)
\\
= &
\sum_{i=1}^n \indicator[q_1 < S_i < q_2] (S_i - q_2)
.
\end{align*}

With the same notation of $\tilde n = \sum_{i=1}^n \indicator[q_1 < S_i < q_2]$, we know that there must exist a constant $c' \in (0,1)$ such that 
\begin{align*}
q_2 + c' (q_1-q_2)
=
c' q_1 + (1-c') q_2
\geq 
\frac{1}{\tilde n} \sum_{i=1}^n \indicator[q_1 < S_i < q_2] S_i
\end{align*}

Therefore, we have 
\begin{align*}
g(q_1) - g(q_2)
\leq
\frac{\tilde n}{n} c' (q_1-q_2)
=
- \frac{\tilde n}{n} c' \dist(q_2, \calU)
.
\end{align*}

Denoting $c=n/(\tilde n c) > 0$, 
then we have
\begin{align}\label{eq:leb_holds_case2}
\dist(q_2, \calU)
\leq 
c ( g(q_2) - \min_{q' \in \calU} g(q') )
.
\end{align}

By combining (\ref{eq:leb_holds_case1}) and (\ref{eq:leb_holds_case2}), 
we show that the QR loss satisfies the H\"olderian error bound condition with a constant $c > 0$ and an exponent $\nu=1$.
\end{proof}

\section{Additional Experimental Setup Details}
\label{subsec:train_test_strategies}

\noindent \textbf{Dataset and Split.}
We consider the benchmark datasets CIFAR-100 \cite{krizhevsky2009learning}, Caltech-101 \cite{FeiFei2004LearningGV},
% Food-101\cite{bossard14}, 
and iNaturalist \cite{van2018inaturalist}. 
We split the original testing datasets into used calibration and testing datasets. 
Table \ref{tab:Data_stat} summarizes key statistics of the used datasets which we elaborate on in the following.

\noindent \textbf{Hyperparameters for training.}
We set datasets, base models, batch size, training epochs, training parameters (learning rate, learning schedule, momentum, gamma, and weight decay), and $\lambda$ as hyperparameter choices. 
We search for hyperparameters on batch size $\in \{64, 128\}$, epochs $\in \{30,40, 60\}$, learning rate ($\eta$) $\in \{0.001, 0.005, 0.01, 0.05, 0.1\}$, learning rate schedule $\in \{[3], [25], [25, 40]\}$, Momentum $=0.9$, weight decay $\in \{ 0.1, 0.97\}$, and $\lambda = \{0.01, 0.05, 0.1, 0.5, 1.0, 5.0\}$ to select the best combination of hyperparameters of each methods.
We also search the learning rate for lower function ($\gamma$) $\in \{0.0001, 0.001, 0.005, 0.01, 0.05, 0.1, 0.2, 0.5\}$ for DPSM.
The hyerparameters employed to get the results presented in the main paper are summarized in Table \ref{tab:finetune_hyper_params}.

\begin{table*}[!ht]
\centering
\caption{Description of the data sets are given in the table. $^{*}$The number of classes in the iNaturalist data set depends on the taxonomy level (e.g., species, genus, family). We employ"Fungi" species which has 341 different categories.}
\label{tab:Data_stat}
\begin{tabular}{|p{2.8cm}|p{2.3cm}|p{2.3cm}|p{2.3cm}|p{2.3cm}|p{2.3cm}|}
\hline
Data            & Number of Classes    & Number of Training Data      & Number of Validation Data     & Number of Calibration Data      & Number of Test Data   \\ \hline
CIFAR-100        & 100                  & 45000                        & 5000                          & 3000                            & 7000                  \\ \hline
Caltech-101     & 101                  & 4310                         & 1256                          & 1111                            & 2000                  \\ \hline
iNaturalist     & 341*                 & 15345                        & 1705                          & 1410                            & 2000                  \\ \hline
\end{tabular}
\end{table*}

\begin{table}[!ht]
\centering
\caption{The below table shows the details we used to train our models. We reported the hyperparameters which gives the best predictive efficiency. We employed SGD optimizer for all training unless specified.}
\label{tab:finetune_hyper_params}
\resizebox{\textwidth}{!}{
\begin{tabular}{|c|c|c|c|c|c|c|c|c|c|}
\hline
Data                           & Architecture & Batch size & Epochs & $\eta$    & lr schedule & Momentum & weight decay & $\gamma$ & $\lambda$ \\ \hline
\multirow{2}{*}{CIFAR-100}      & DenseNet     & 64         & 40     & 0.1   & 25          & 0.9      & 0.1   & 0.01       & 0.05    \\ \cline{2-10} 
                               & ResNet       & 128        & 40     & 0.1   & 25          & 0.9      & 0.1   & 0.01       & 0.01       \\ \hline
\multirow{2}{*}{Caltech-101}   & DenseNet     & 128        & 60     & 0.05  & 25, 40      & 0.9      & 0.1   & 0.1       & 1.0       \\ \cline{2-10} 
                               & ResNet       & 128        & 60     & 0.05  & 25, 40      & 0.9      & 0.1   & 0.05          & 0.1       \\ \hline 
\multirow{2}{*}{iNaturalist}   & DenseNet     & 128        & 60     & 0.001 & 3           & 0.9      & 0.97  & 0.001          & 1.0      \\ \cline{2-10} 
                               & ResNet       & 128        & 60     & 0.001 & 3           & 0.9      & 0.97  & 0.001          & 0.5      \\ \hline
\end{tabular}
}
\end{table}

\noindent \textbf{HPS$\shortrightarrow$Training/Calibration/Testing.} In our study, we primarily apply cross-entropy loss for the classification model. For the ConfTr and DPSM methods, we minimize both the classification loss and the HPS non-conformity score-based predictive inefficiency loss during training. Post-training, we estimate the HPS non-conformal scores using equation \ref{eq:HPS} for all three methods during calibration and testing. We report the marginal coverage and the prediction set size with the correpsonding mean and standard deviation, with the results presented in Table \ref{tab:cvg_set_hps_train_hps_test}.

\textbf{HPS $\shortrightarrow$ Training, APS $\shortrightarrow$ Calibration/Testing.} 
In our study, we primarily apply cross-entropy loss for the classification model. For the ConfTr and DPSM methods, we minimize both the classification loss and the differentiable HPS non-conformity score-based inefficiency loss during training. After training, we estimate APS non-conformal scores using equation \ref{eq:APS}. We calculate the marginal coverage and the prediction set size, with the results presented in Table \ref{tab:cvg_set_hps_train_aps_test}.
% Our key observations are as follows:

\textbf{HPS $\shortrightarrow$ Training, RAPS $\shortrightarrow$ Calibration/Testing.} 
In our study, we primarily apply cross-entropy loss for the classification model. For the ConfTr and DPSM methods, we minimize both the classification loss and the differentiable HPS non-conformity score-based inefficiency loss during training. After training, we estimate RAPS non-conformal scores using equation \ref{eq:RAPS}. 
Then, we calculate the marginal coverage and the prediction set size, with the results presented in Table \ref{tab:cvg_set_hps_train_raps_test}.

% \textbf{APS $\shortrightarrow$ Traing, HPS $\shortrightarrow$ Calibration/Testing.} 
% In our study, we primarily minimize the cross-entropy loss for the CE method. For the ConfTr and DPSM methods, we minimize both the cross-entropy loss and the differentiable APS non-conformity score-based inefficiency loss during training. For the UA method, we minimize both the cross-entropy loss and the differentiable APS-based uncertainty-aware loss. After training, we estimate the exact HPS non-conformal scores using equation \ref{eq:APS}. We calculate the marginal coverage and the prediction set size, with the results presented in Table \ref{tab:cvg_set_aps_train_aps_test}.
% Our key observations are as follows:

% \cleardoublepage
\section{Additional Experiments}
\label{appendix:subsec:additional_exps}

\subsection{Additional Experiments for Marginal Coverage }
\label{appendix:subsec:additional_exps_marginal}

\begin{table*}[!ht]
\centering
\caption{
\textbf{HPS $\shortrightarrow$ Training/Calibration/Testing}: 
The APSS on three different datasets with two different deep models trained and calibrated with HPS when $\alpha = 0.1$. 
$\downarrow$ indicates the percentage improvement in predictive efficiency compared to the best existing method, whereas $\uparrow$ denotes a percentage decrease in predictive efficiency. 
All results are the average over 10 different runs, with the mean and standard deviation reported.
DPSM significantly outperforms almost the best baselines with $20.32\%$ prediction set size reduction across all datasets. 
}
\label{tab:cvg_set_hps_train_hps_test}
\resizebox{\textwidth}{!}{
\begin{NiceTabular}{@{}c|cccc|cccc@{}}
\toprule
\multirow{2}{*}{Model} & \multicolumn{4}{c|}{Marginal Coverage} & \multicolumn{4}{c }{Prediction Set Size} \\ 
\cmidrule(lr){2-5} \cmidrule(lr){6-9}
& CE & CUT & ConfTr & DPSM & CE & CUT & ConfTr & DPSM  \\ 
\midrule
\Block{1-*}{CalTech-101}
\\
\midrule
DenseNet 
& 0.90 $\pm$ 0.006 & 0.90 $\pm$ 0.005 & 0.90 $\pm$ 0.008 & 0.90 $\pm$ 0.003
& 3.50 $\pm$ 0.10  & 1.62 $\pm$ 0.030 & 4.10 $\pm$ 0.19  & \textbf{0.90 $\pm$ 0.003} ($\downarrow$ 44.44\%)  
\\ 
ResNet   
& 0.90 $\pm$ 0.004 & 0.90 $\pm$ 0.007 & 0.90 $\pm$ 0.006 & 0.90 $\pm$ 0.005
& 1.57 $\pm$ 0.018 & 1.64 $\pm$ 0.049 & 1.52 $\pm$ 0.040 & \textbf{0.91 $\pm$ 0.005} ($\downarrow$ 44.51\%) 
\\ 
\midrule
\Block{1-*}{CIFAR-100}
\\
\midrule
DenseNet 
& 0.90 $\pm$ 0.007 & 0.90 $\pm$ 0.009 & 0.90 $\pm$ 0.007 & 0.90 $\pm$ 0.006
& 2.59 $\pm$ 0.053 & 2.27 $\pm$ 0.09  & 2.28 $\pm$ 0.07  & \textbf{2.17 $\pm$ 0.086 }  ($\downarrow$ 4.82\%) 
% (2.20 $\pm$ 0.07 ($\downarrow$ 3.08\%))
\\ 
ResNet   
& 0.90 $\pm$ 0.006 & 0.90 $\pm$ 0.005 & 0.90 $\pm$ 0.007 & 0.90 $\pm$ 0.007
& 3.39 $\pm$ 0.10  & 3.01 $\pm$ 0.11  & 3.77 $\pm$ 0.14  & \textbf{2.94 $\pm$ 0.08}  ($\downarrow$ 2.32\%)
\\ 
\midrule
\Block{1-*}{iNaturalist}
\\
\midrule
DenseNet 
& 0.90 $\pm$ 0.009 & 0.90 $\pm$ 0.011 & 0.90 $\pm$ 0.011 & 0.90 $\pm$ 0.008
& 94.58 $\pm$ 3.45 & 77.13 $\pm$ 3.72 & 79.93 $\pm$ 3.70 & \textbf{61.22 $\pm$ 2.49} ($\downarrow$ 20.63\%) 
\\ 
ResNet   
& 0.90 $\pm$ 0.019 & 0.90 $\pm$ 0.007 & 0.90 $\pm$ 0.012 & 0.90 $\pm$ 0.008
& 99.48 $\pm$ 8.95 & 73.09 $\pm$ 2.00 & 76.73 $\pm$ 3.87 & \textbf{70.04 $\pm$ 1.99} ($\downarrow$ 4.17\%) 
\\ 
\bottomrule
\end{NiceTabular}
}
\end{table*}

\begin{table*}[!t]
\centering
\caption{\textbf{HPS $\shortrightarrow$ Training, APS $\shortrightarrow$ Calibration/Testing}: 
The APSS on three different datasets with two different deep models trained with HPS and calibrated with APS when $\alpha = 0.1$. 
$\downarrow$ indicates the percentage improvement in predictive efficiency compared to the best existing method, whereas $\uparrow$ denotes a percentage decrease in predictive efficiency. 
All results are the average over 10 different runs, with the mean and standard deviation reported.
DPSM significantly outperforms almost the best baselines with $20.61\%$ prediction set size reduction across all datasets. 
}
\label{tab:cvg_set_hps_train_aps_test}
\resizebox{\textwidth}{!}{
\begin{NiceTabular}{@{}c|cccc|cccc@{}}
\toprule
\multirow{2}{*}{Model} & \multicolumn{4}{c|}{Marginal Coverage} & \multicolumn{4}{c }{Prediction Set Size} \\ 
\cmidrule(lr){2-5} \cmidrule(lr){6-9}
& CE & CUT & ConfTr & DPSM & CE & CUT & ConfTr & DPSM  \\ 
\midrule
\Block{1-*}{CalTech-101}
\\
\midrule
DenseNet 
& 0.90 $\pm$ 0.006 & 0.90 $\pm$ 0.007 & 0.90 $\pm$ 0.008 & 0.90 $\pm$ 0.005
& 8.44  $\pm$ 0.15 & 3.87 $\pm$ 0.11  & 8.64 $\pm$ 0.21  & \textbf{1.58 $\pm$ 0.022} ($\downarrow$ 59.17\%) 
\\ 
ResNet   
& 0.90 $\pm$ 0.006 & 0.90 $\pm$ 0.004 & 0.90 $\pm$ 0.007 & 0.90 $\pm$ 0.005
& 4.50 $\pm$ 0.059 & 4.59 $\pm$ 0.072 & 3.61 $\pm$ 0.08 & \textbf{1.74 $\pm$ 0.031} ($\downarrow$ 51.80\%) 
\\ 
\midrule
\Block{1-*}{CIFAR-100}
\\
\midrule
DenseNet 
& 0.90 $\pm$ 0.007 & 0.90 $\pm$ 0.009 & 0.90 $\pm$ 0.008 & 0.90 $\pm$ 0.006
& 3.38 $\pm$ 0.12 & \textbf{2.41 $\pm$ 0.11} & 3.08 $\pm$ 0.11   & 2.64 $\pm$ 0.86 ($\uparrow$ 8.71\%)
% (2.52 $\pm$ 0.07  $\uparrow$ 4.56\%)
\\ 
ResNet   
& 0.90 $\pm$ 0.006 & 0.90 $\pm$ 0.11 & 0.90 $\pm$ 0.007 & 0.90 $\pm$ 0.007 
& 3.98 $\pm$0.13 & 3.81 $\pm$ 0.08 & 4.90 $\pm$ 0.18  & \textbf{3.53 $\pm$ 0.11} ($\downarrow$ 7.35\%)
\\ 
\midrule
\Block{1-*}{iNaturalist}
\\
\midrule
DenseNet 
& 0.90 $\pm$ 0.009 & 0.90 $\pm$ 0.010 & 0.90 $\pm$ 0.011 & 0.90 $\pm$ 0.010
& 101.97 $\pm$ 3.16 & 88.93 $\pm$ 3.06 & 90.79 $\pm$ 3.17 & \textbf{75.98 $\pm$ 2.99} ($\downarrow$ 14.56\%)
\\ 
ResNet   
& 0.90 $\pm$ 0.013 & 0.90 $\pm$ 0.006 & 0.90 $\pm$ 0.012 & 0.90 $\pm$ 0.009
& 95.81 $\pm$ 3.80 & \textbf{79.00 $\pm$ 2.21} & 88.70 $\pm$ 3.88   & 79.43 $\pm$ 2.39 ($\uparrow$ -0.54\%) 
\\ 
\bottomrule
\end{NiceTabular}
}
\end{table*}

\begin{table*}[!t]
\centering
\caption{\textbf{HPS $\shortrightarrow$ Training, RAPS $\shortrightarrow$ Calibration/Testing}: 
The APSS on three different datasets with two different deep models trained with HPS and calibrated with RAPS when $\alpha = 0.1$, where $\lambda_{\RAPS} = 0.01$ and $k_\reg = 5$. 
$\downarrow$ indicates the percentage improvement in predictive efficiency compared to the best existing method, whereas $\uparrow$ denotes a percentage decrease in predictive efficiency. 
All results are the average over 10 different runs, with the mean and standard deviation reported.
DPSM significantly outperforms almost the best baselines with $19.04\%$ prediction set size reduction across all datasets. 
}
\label{tab:cvg_set_hps_train_raps_test}
\resizebox{\textwidth}{!}{
\begin{NiceTabular}{@{}c|cccc|cccc@{}}
\toprule
\multirow{2}{*}{Model} & \multicolumn{4}{c|}{Marginal Coverage} & \multicolumn{4}{c }{Prediction Set Size} \\ 
\cmidrule(lr){2-5} \cmidrule(lr){6-9}
& CE & CUT & ConfTr & DPSM & CE & CUT & ConfTr & DPSM  \\ 
\midrule
\Block{1-*}{CalTech-101}
\\
\midrule
DenseNet 
& 0.90 $\pm$ 0.007 & 0.90 $\pm$ 0.008 & 0.90 $\pm$ 0.009 & 0.90 $\pm$ 0.006
& 6.58  $\pm$ 0.12 & 3.53 $\pm$ 0.11  & 6.88 $\pm$ 0.17  & \textbf{1.50 $\pm$ 0.020} ($\downarrow$ 57.51\%) 
\\ 
ResNet   
& 0.90 $\pm$ 0.005 & 0.90 $\pm$ 0.004 & 0.90 $\pm$ 0.007 & 0.90 $\pm$ 0.004
& 3.89 $\pm$ 0.047 & 3.98 $\pm$ 0.59 & 3.19 $\pm$ 0.69 & \textbf{1.67 $\pm$ 0.025} ($\downarrow$ 47.65\%) 
\\ 
\midrule
\Block{1-*}{CIFAR-100}
\\
\midrule
DenseNet 
& 0.90 $\pm$ 0.007 & 0.90 $\pm$ 0.007 & 0.90 $\pm$ 0.006 & 0.90 $\pm$ 0.006
& 2.73 $\pm$ 0.043 & \textbf{2.14 $\pm$ 0.55} & 2.69 $\pm$ 0.053   & 2.34 $\pm$ 0.035 ($\uparrow$ 8.55\%)
% (2.25 $\pm$ 0.039 $\uparrow$ 5.14\%)
\\ 
ResNet   
& 0.90 $\pm$ 0.006 & 0.90 $\pm$ 0.11 & 0.90 $\pm$ 0.007 & 0.90 $\pm$ 0.007 
& 3.25 $\pm$ 0.14 & \textbf{2.93 $\pm$ 0.60} & 4.02 $\pm$ 0.11  & \textbf{2.93 $\pm$ 0.05} 
\\ 
\midrule
\Block{1-*}{iNaturalist}
\\
\midrule
DenseNet 
& 0.90 $\pm$ 0.013 & 0.90 $\pm$ 0.008 & 0.90 $\pm$ 0.015 & 0.90 $\pm$ 0.008
& 97.27 $\pm$ 4.23 & 82.88 $\pm$ 2.47 & 86.77 $\pm$ 6.17 & \textbf{68.17 $\pm$ 2.18} ($\downarrow$ 17.74\%)
\\ 
ResNet   
& 0.90 $\pm$ 0.012 & 0.90 $\pm$ 0.011 & 0.90 $\pm$ 0.015 & 0.90 $\pm$ 0.009
& 97.43 $\pm$ 4.17 & \textbf{76.75 $\pm$ 4.47} & 81.18 $\pm$ 4.93   & 76.81 $\pm$ 2.68 ($\uparrow$ 0.08\%) 
\\ 
\bottomrule
\end{NiceTabular}
}
\end{table*}

\begin{figure*}[!t]
    \centering
    \begin{minipage}[t]{0.32\linewidth}
    \centering
    \textbf{(a)} Caltech-101
    \includegraphics[width = \linewidth]{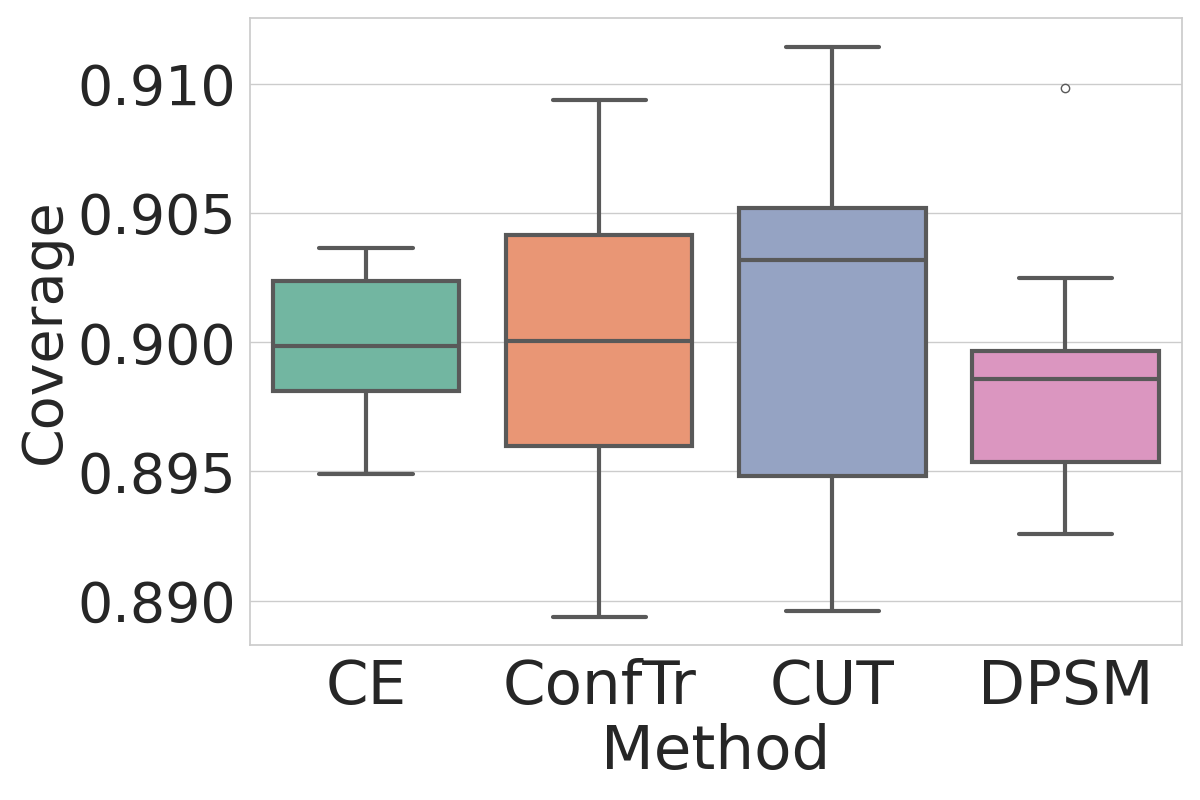}
    \\
    \includegraphics[width = \linewidth]{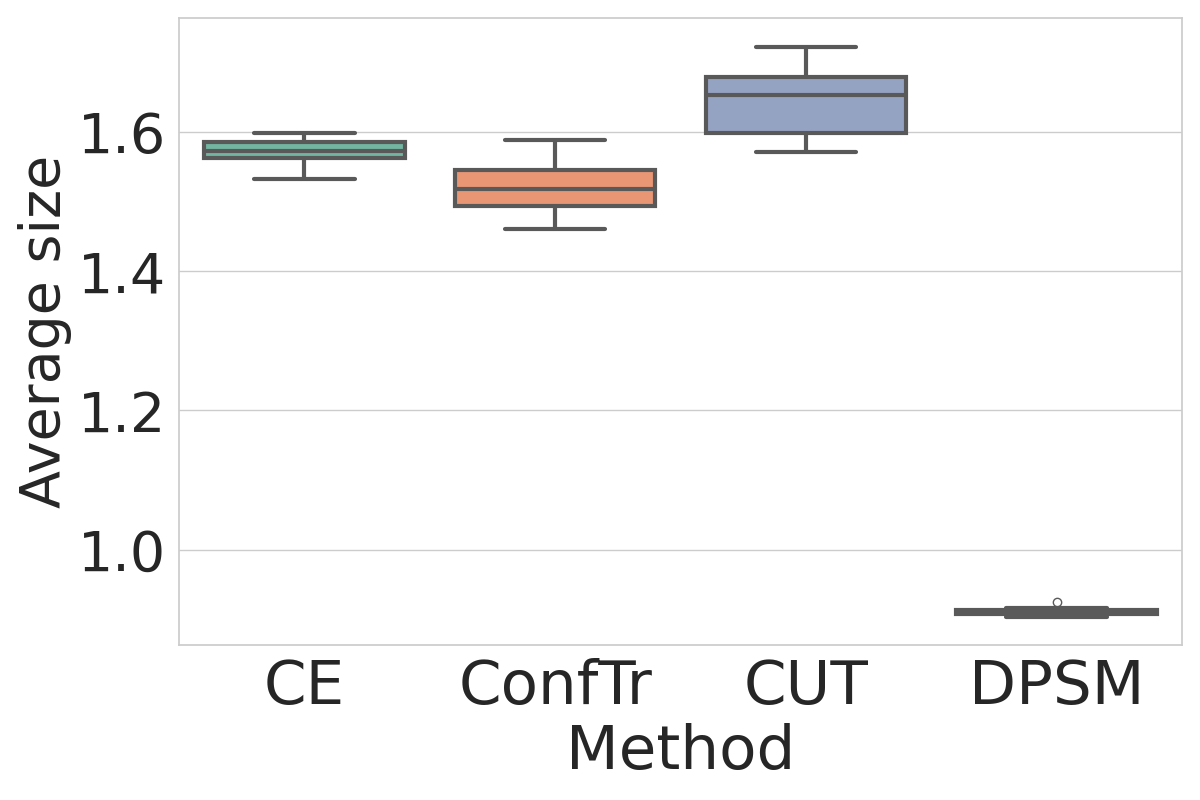}
    \end{minipage} 
    \begin{minipage}[t]{0.32\linewidth}
    \centering
    \textbf{(b)} CIFAR-100
    \includegraphics[width=\linewidth]{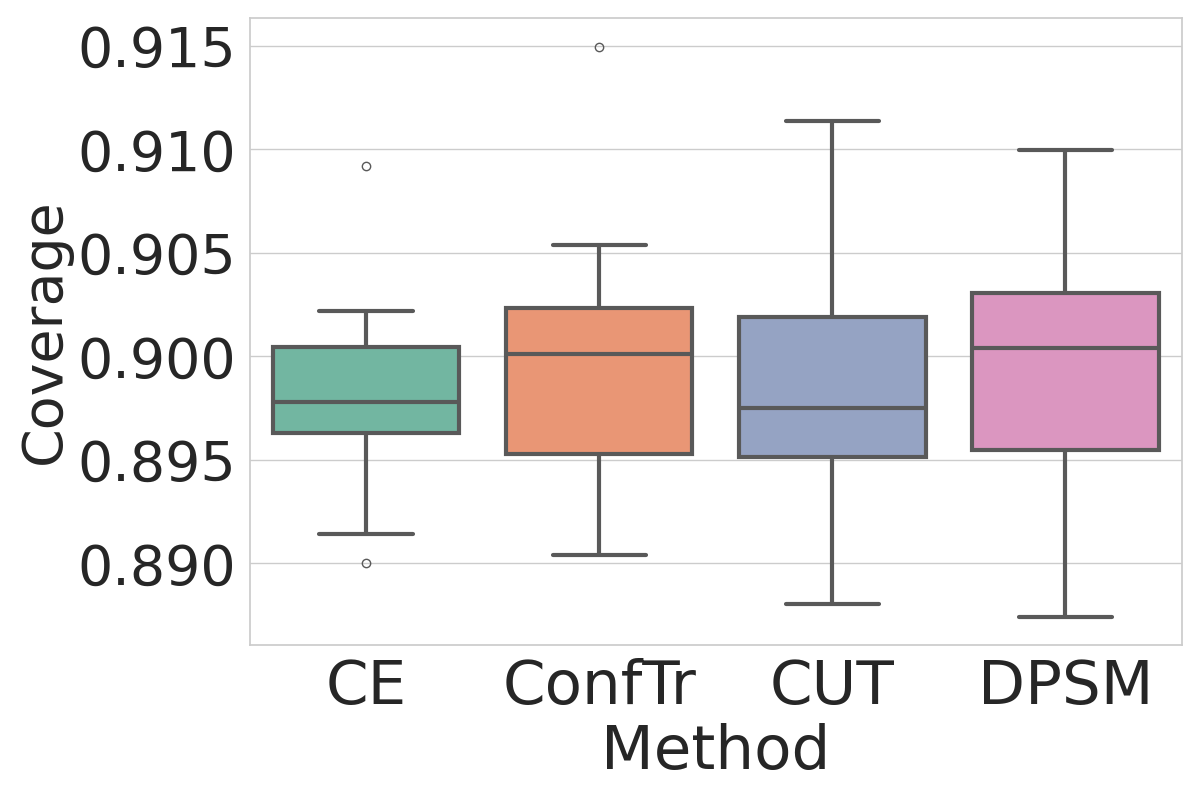}
    \\
    \includegraphics[width=\linewidth]{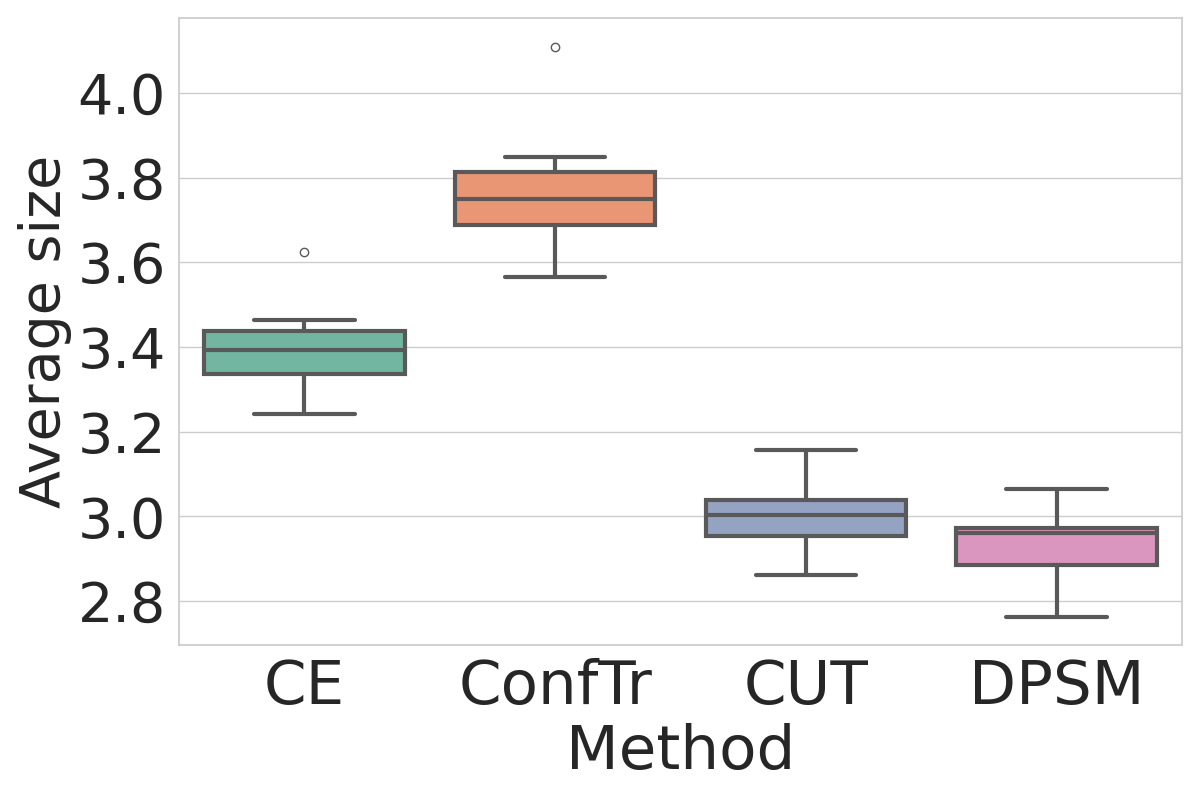}
    \end{minipage} 
    \begin{minipage}[t]{0.32\linewidth}
    \centering
    \textbf{(c)} iNaturalist
    \includegraphics[width=\linewidth]{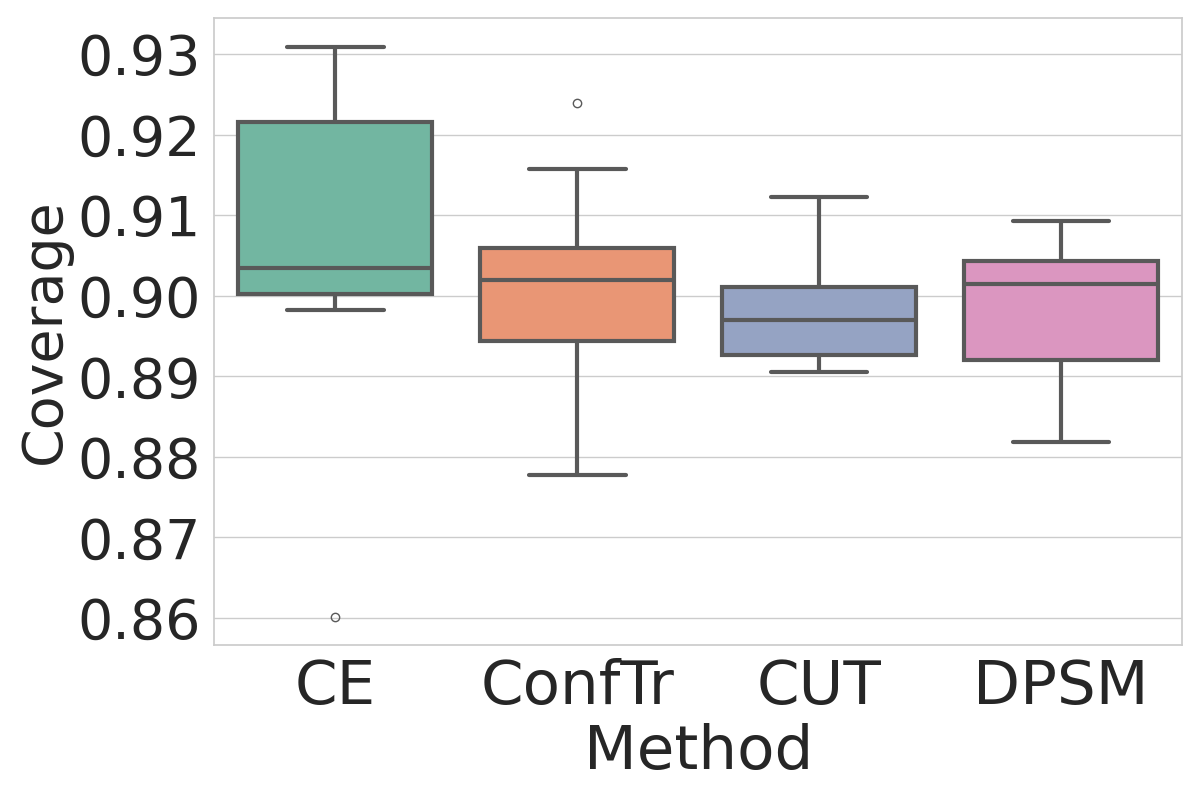}
    \\
    \includegraphics[width=\linewidth]{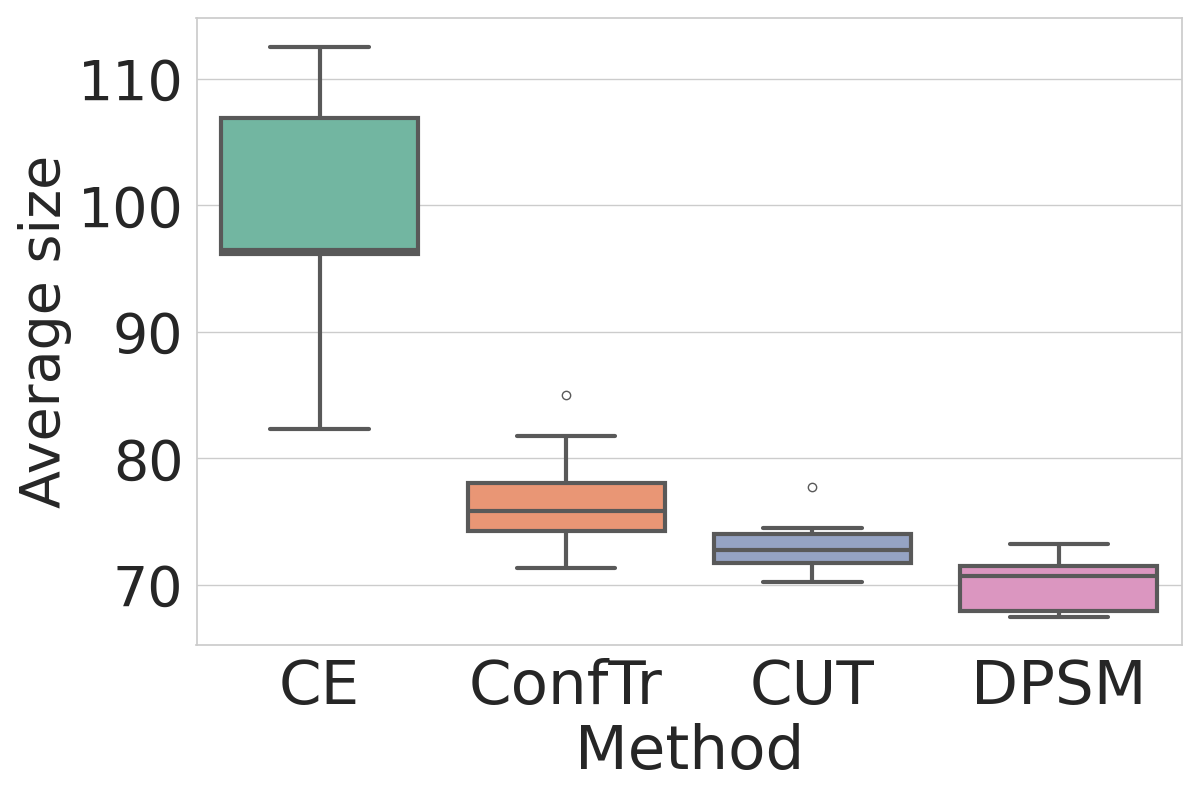}
    \end{minipage} 
    \vspace{-0.15in}
    \caption{
    \textbf{Box plots of coverage (Top row) and APSS (Bottom row)} of all methods using ResNet and HPS score.
    DPSM achieves significantly smaller prediction set size while maintaining the valid coverage.
    }
    \label{fig:results_overall_resnet}
    % \vspace{-2.0ex}
\end{figure*}

\noindent \textbf{DPSM generates smaller prediction sets.}
Table \ref{tab:cvg_set_hps_train_hps_test} presents the set sizes and coverage rates for different methods using HPS score across training, calibration, and testing phases. 
DPSM outperforms all existing baselines with $20.32\%$ reduction in terms of prediction set size across all datasets.
Table \ref{tab:cvg_set_hps_train_aps_test} presents the set sizes and coverage rates for different methods using HPS for training and APS for calibrating. 
DPSM outperforms nearly all existing baselines with $20.61\%$ reduction in terms of prediction set size across all datasets, except $\uparrow$ 8.71\% increase on CIFAR-100 with DenseNet and $\uparrow$ 0.54\% increase on iNaturalist with ResNet in terms of prediction set size.
Table \ref{tab:cvg_set_hps_train_raps_test} presents the set sizes and coverage rates for different methods using HPS for training and RAPS for calibrating. 
DPSM outperforms existing baselines on several datasets, achieving a $19.04\%$ reduction in prediction set size across all datasets, except for CIFAR-100. On iNaturalist with ResNet, it shows a marginal $0.08\%$ increase in prediction set size.
Combined with above three tables, DPSM improve the predictive efficiency with $19.99\%$ reduction in term of prediction set size across all settings and all datasets.
We also visualize the coverage rate and APSS with confidence intervals of all methods using DenseNet and HPS score in Figure \ref{fig:results_overall_resnet}.
It clearly confirms that DPSM achieves significantly smaller prediction set size while maintaining the valid coverage.

\begin{figure}[!t]
    \centering
    \begin{minipage}[t]{0.24\linewidth}
    \centering
    \textbf{(a)} Upper loss
    \includegraphics[width=\linewidth]{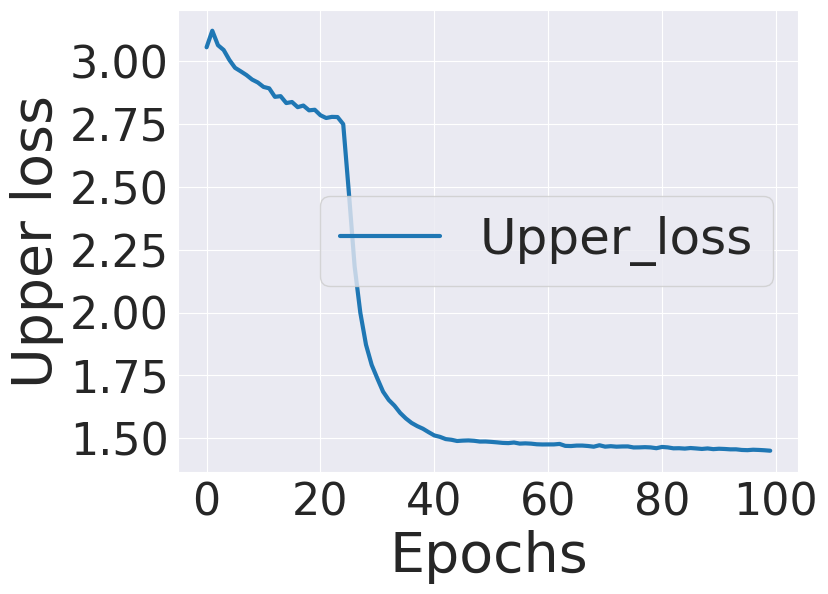}
    \end{minipage} 
    \begin{minipage}[t]{0.24\linewidth}
    \centering
    \textbf{(b)} Lower loss
    \includegraphics[width=\linewidth]{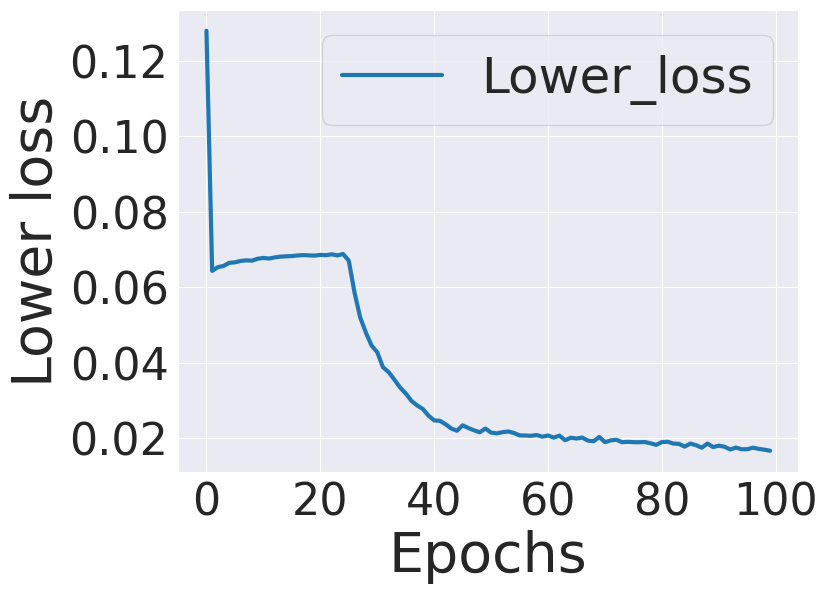}
    \end{minipage} 
    \begin{minipage}[t]{0.24\linewidth}
    \centering
    \textbf{(c)} Conformal loss optimization gap
    \includegraphics[width = \linewidth]{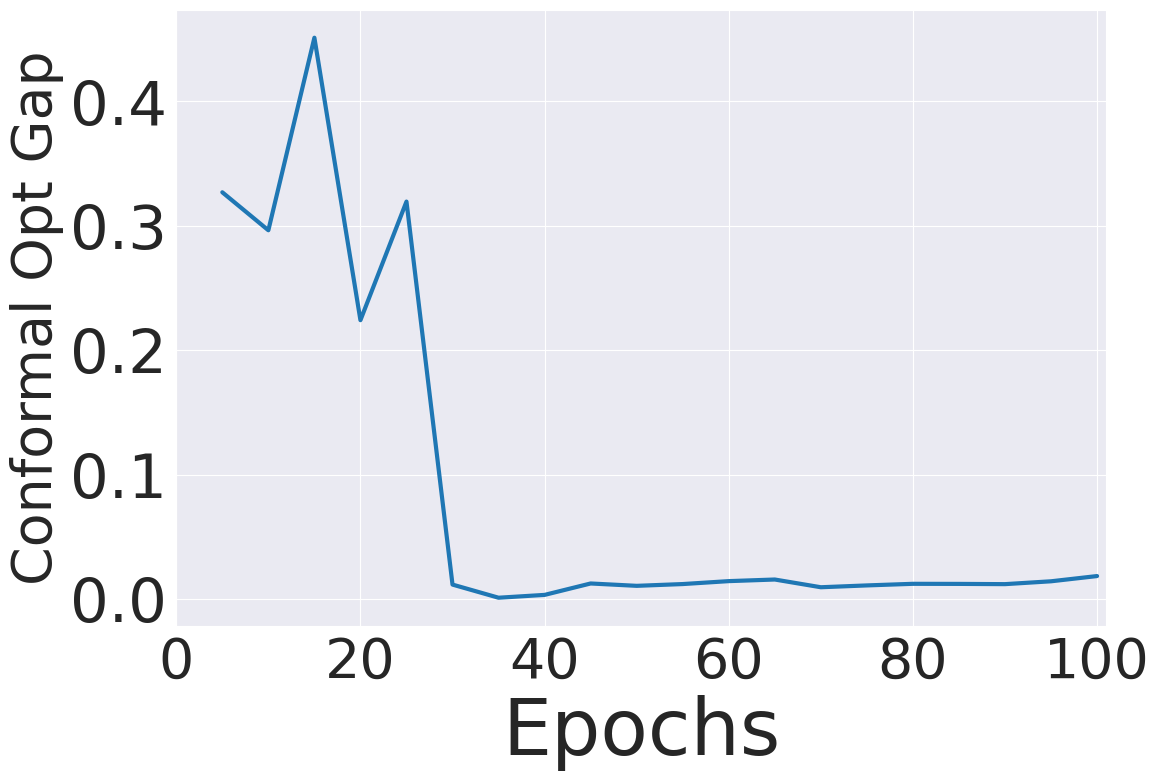}
    \end{minipage} 
    \begin{minipage}[t]{0.24\linewidth}
    \centering
    \textbf{(d)} QR loss optimization gap
    \includegraphics[width = \linewidth]{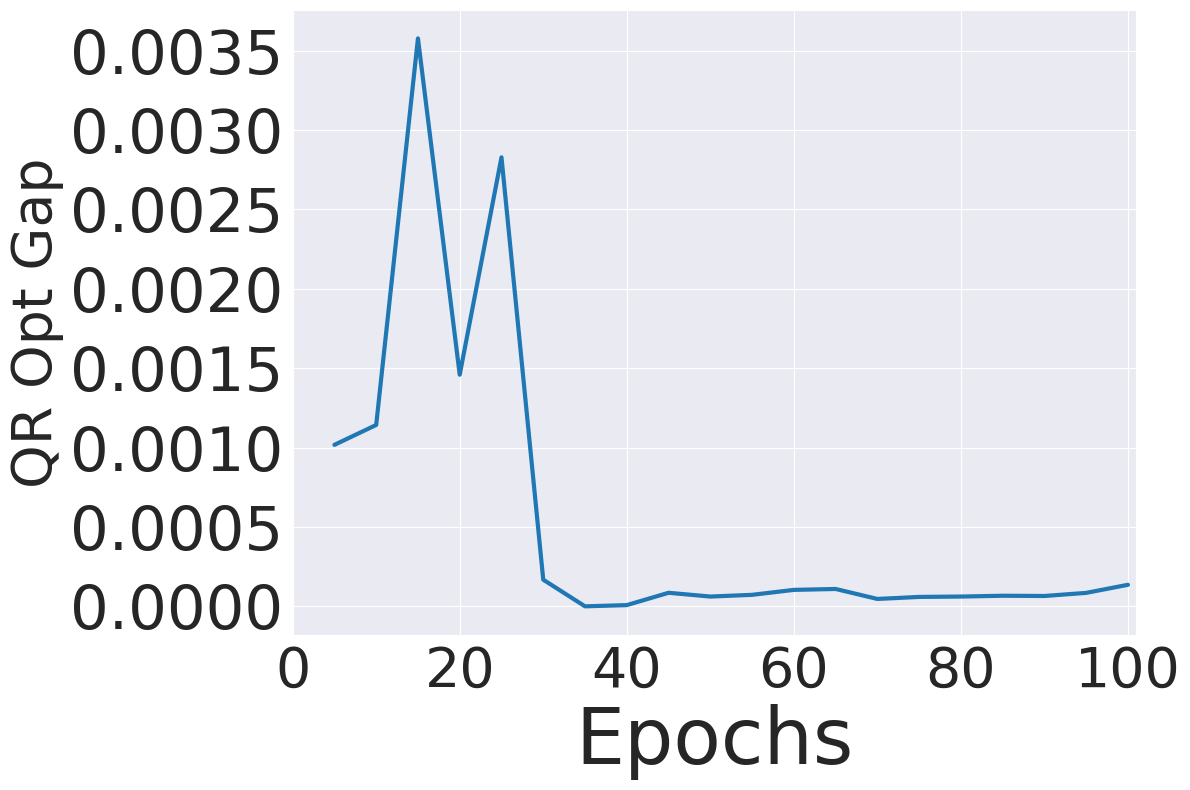}
    \end{minipage} 
    \vspace{-0.15in}
    \caption{
    \textbf{Justification experiments for the convergence of DPSM }on CIFAR-100 using ResNet and HPS score. 
   \textbf{(a)} Upper level loss (i.e., a combination of classification loss and conformal alignment loss);
    \textbf{(b)} Lower level loss (i.e., QR loss);
    \textbf{(c)} Optimization gap of conformal loss, defined as the difference between conformal losses using learned batch-level quantiles and dataset-level quantiles on the training set;
    \textbf{(d)} Optimization gap of the lower-level QR loss, defined similarly as the loss difference between learned batch-level quantiles and dataset-level quantiles.
    }
    \label{fig:results_convergence_appendix}
    % \vspace{-3.0ex}
\end{figure}

\noindent \textbf{DPSM converges to stable error for bilevel optimization.} 
To further explore how DPSM effectively generates smaller prediction sets, we analyze the convergence of DPSM by plotting the loss of the upper level function (i.e., a combination of classification loss and conformal alignment loss) and the lower level function (i.e., QR loss) over training $100$ epochs with ResNet model.
Figure \ref{fig:results_convergence_appendix} (a) and (b) show the upper-level loss and lower-level loss over epochs, respectively. 
We also report the results of $40$-epoch training regime in Figure \ref{fig:results_main} and \ref{fig:results_appendix} for reference.
As shown, the upper-level loss of DPSM exhibits an initial increase during the first $2$ epochs, reaching the peak, then steadily decreases before stabilizing around epoch $35$.
In contrast, the lower-level loss decreases sharply within the first $2$ epochs, followed by a gradual reduction until convergence near the end of training. 
These results empirically demonstrate that DPSM effectively converges in terms of both upper and lower level training errors, validating its bilevel optimization approach.
To investigate how the learned quantiles influence the optimization error, we compute both conformal and QR losses using the learned quantiles and the optimal (dataset-level) quantiles. 
The corresponding optimization errors—defined as the loss differences between learned and optimal quantiles—are visualized in Figure \ref{fig:results_convergence_appendix} (c) and (d). 
Both errors converge to nearly $0$, indicating that the learned quantiles effectively approximate the optimal quantiles over training.

\noindent \textbf{DPSM estimates empirical quantiles with small error.} 
To compare the precision of empirical quantiles estimation in ConfTr and DPSM,
we plot the estimation error between $\widehat Q^n_f$ (quantiles evaluated on the whole training dataset) and $\widehat q_f$ (quantiles evaluated in ConfTr or learned in DPSM on mini-batches) with ResNet. 
Figure \ref{fig:results_bound_appendix} (a) plots this estimation error over training epochs. 
For the first $25$ epochs, the estimation errors for DPSM are significantly larger compared to ConfTr. 
However, as the training progresses, the estimation errors for DPSM decrease rapidly, converging close to $0$ after epoch $32$. 
This result verifies the theoretical result for smaller estimation error in learning bound analysis from Theorem \ref{theorem:learning_bound_DPSM}.
Furthermore, the rapid reduction in estimation error also reflects the convergence of the lower loss (i.e., QR loss), highlighting the effectiveness of DPSM in accurately estimating quantiles.

\begin{figure}[!ht]
    \centering
    \begin{minipage}[t]{0.24\linewidth}
    \centering
    \textbf{(a)} Upper loss
    \end{minipage} 
    \begin{minipage}[t]{0.24\linewidth}
    \centering
    \textbf{(b)} Lower loss
    \end{minipage} 
    \hfill
    \begin{minipage}[t]{0.24\linewidth}
    \centering
    \textbf{(c)} Est. error of quantiles
    \end{minipage} 
    \begin{minipage}[t]{0.24\linewidth}
    \centering
    \textbf{(d)} Average soft set size
    \end{minipage} 
    \begin{minipage}[t]{0.24\linewidth}  
    \centering 
        \includegraphics[width=\linewidth]{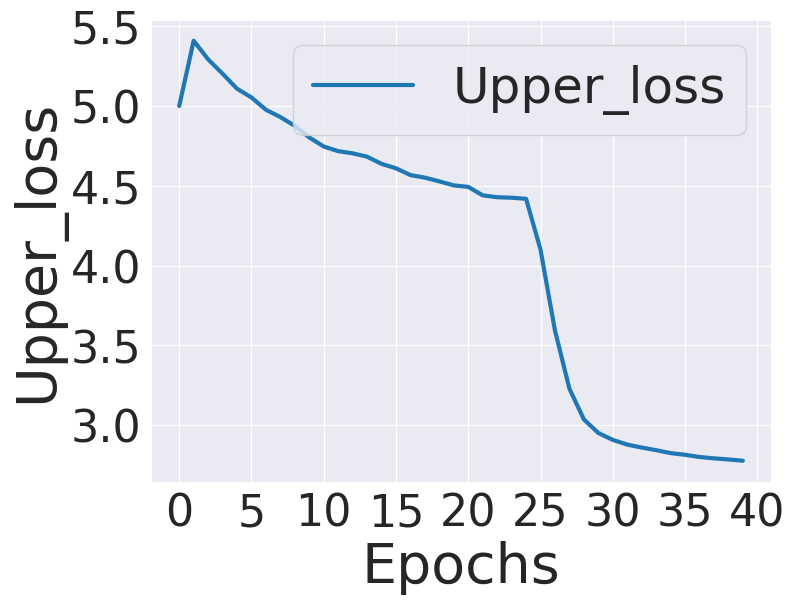}
    \end{minipage}
    % \vspace{0.1in}
    \begin{minipage}[t]{0.24\linewidth}
        \centering 
        \includegraphics[width=\linewidth]{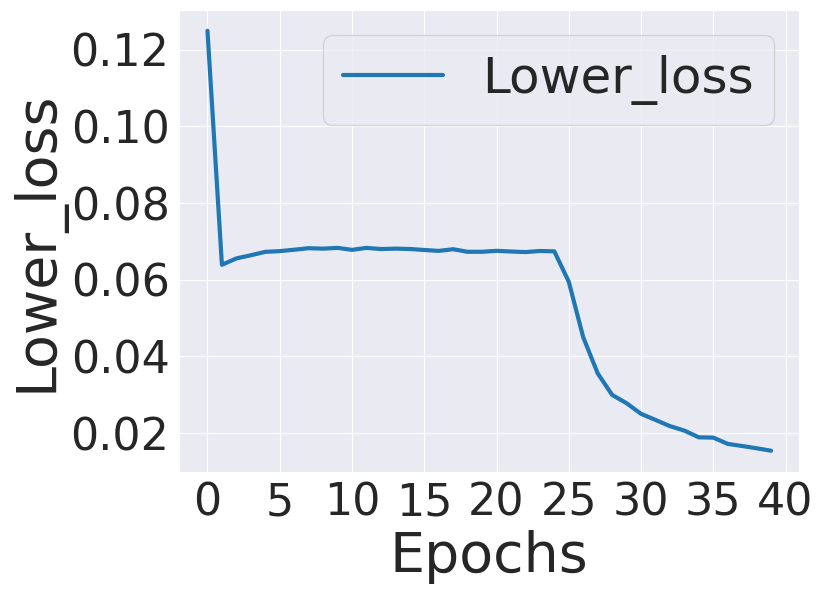}
    \end{minipage}
    \begin{minipage}[t]{0.24\linewidth}
     \centering   
     \includegraphics[width = \linewidth]{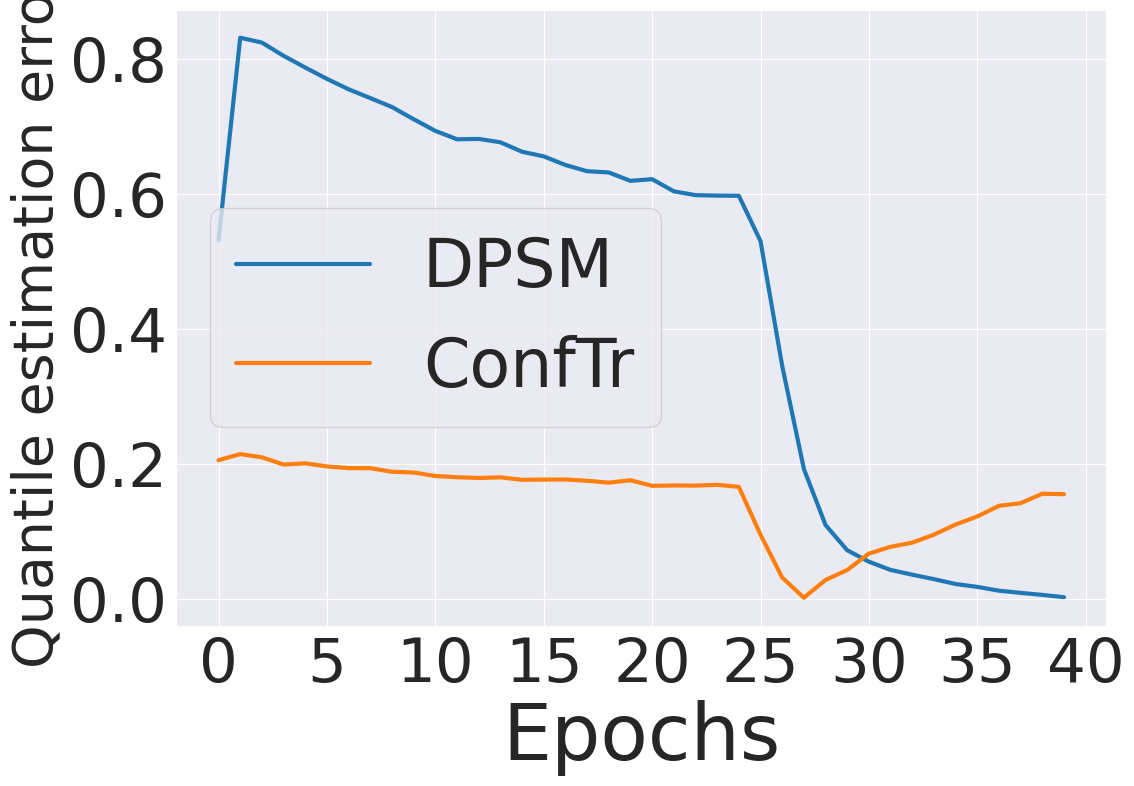}
     \end{minipage}
    \begin{minipage}[t]{0.24\linewidth}
    \centering
    \includegraphics[width=\linewidth]{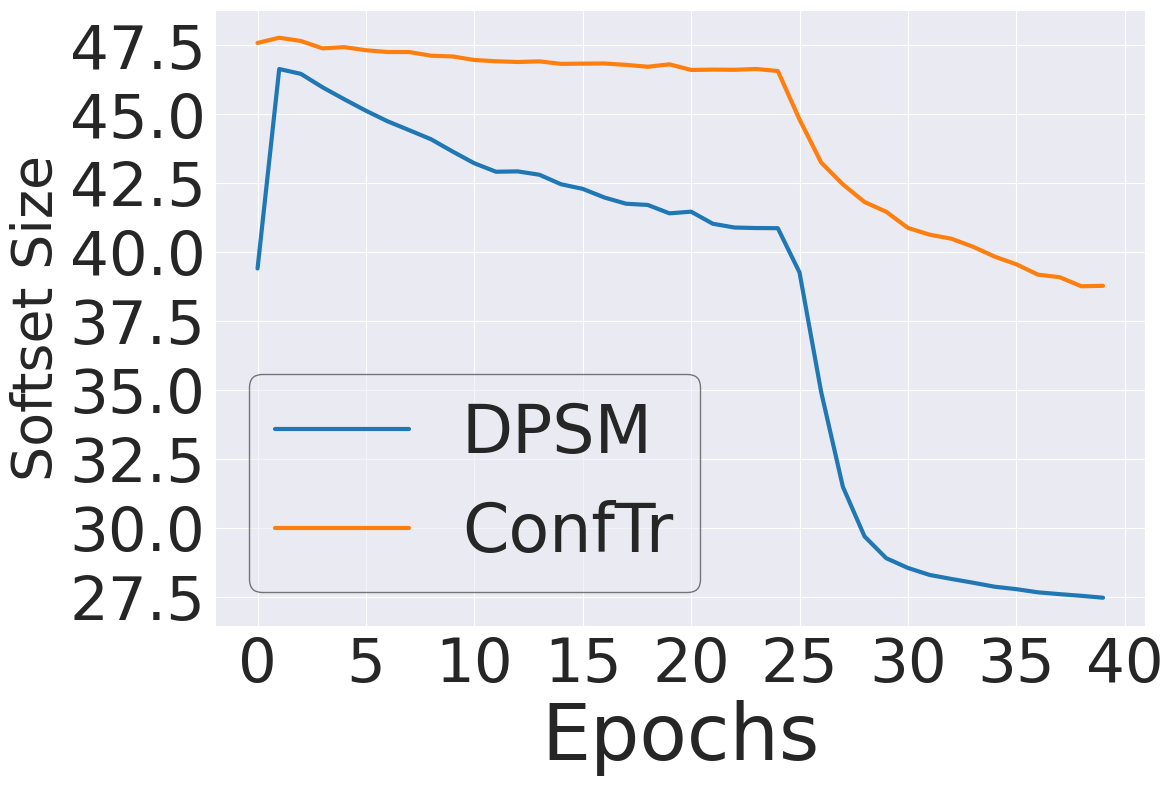}
    \end{minipage}%% 
    % \vspace{-0.2in}
    \caption{\textbf{Justification experiments for effectiveness of DPSM} trained with 40 epochs on CIFAR-100 using DenseNet and HPS score. 
   \textbf{(a)} Upper level loss (i.e., a combination of
    classification loss and conformal alignment loss) in DPSM;
    \textbf{(b)} Lower level loss (i.e., QR loss) in DPSM;
    \textbf{(c)} Estimation error between the $\widehat Q^n_f$ (quantiles evaluated on the whole training data) and  $\widehat q_f$ (quantiles evaluated in ConfTr or learned in DPSM on mini-batches);
    and \textbf{(d)} Average soft set size of DPSM and ConfTr (using Sigmoid function).}
    \label{fig:results_main}
    % \vspace{-4.0ex}
\end{figure}

\begin{figure}[!ht]
    \centering
    \begin{minipage}[t]{0.24\linewidth}
    \centering
    \textbf{(a)} Upper loss
    \end{minipage} 
    \begin{minipage}[t]{0.24\linewidth}
    \centering
    \textbf{(b)} Lower loss
    \end{minipage} 
    \begin{minipage}[t]{0.24\linewidth}
    \centering
    \textbf{(c)} Estimation error of quantiles
    \end{minipage} 
    \begin{minipage}[t]{0.24\linewidth}
    \centering
    \textbf{(d)} Average soft set size
    \end{minipage} 
     \hfill
    \begin{minipage}[t]{0.24\linewidth}  
    \centering 
        \includegraphics[width=\linewidth]{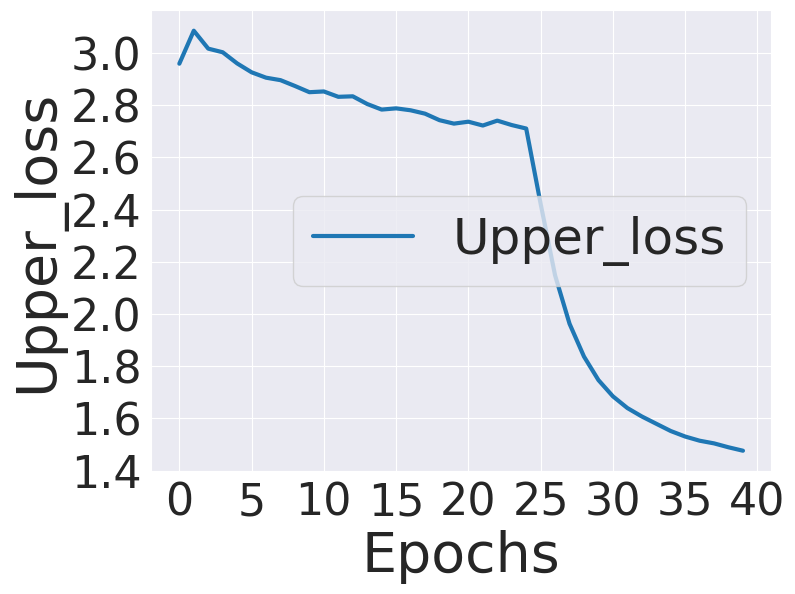}
    \end{minipage}
    % \vspace{0.1in}
    \begin{minipage}[t]{0.24\linewidth}
        \centering 
        \includegraphics[width=\linewidth]{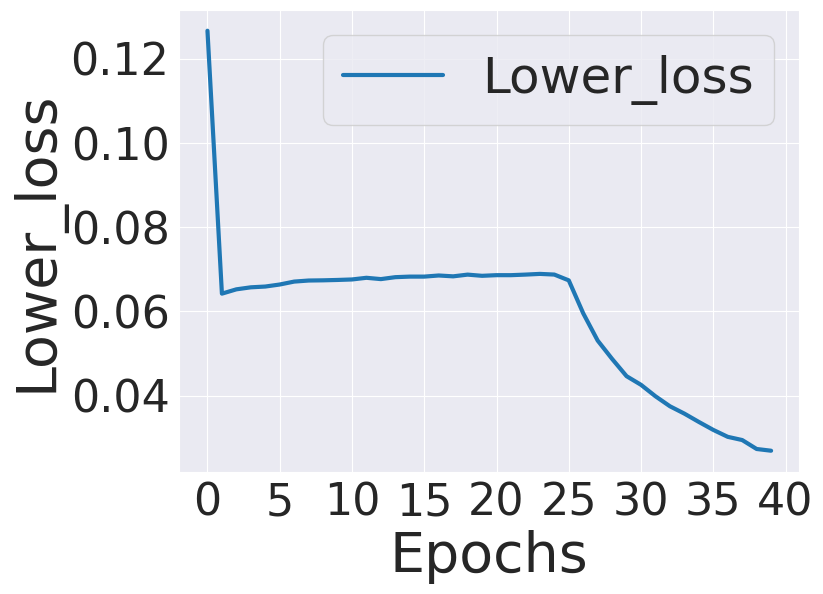}
    \end{minipage}
    \begin{minipage}[t]{0.24\linewidth}
     \centering   
     \includegraphics[width = \linewidth]{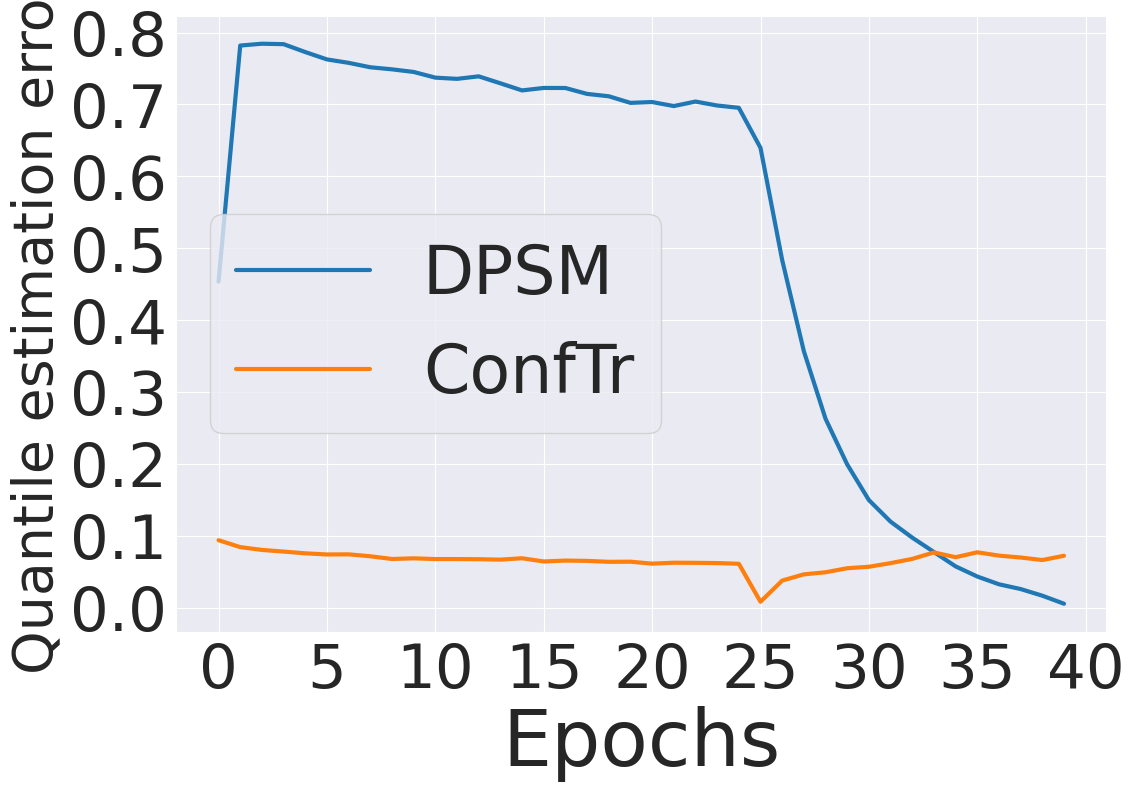}
     \end{minipage}
    \begin{minipage}[t]{0.24\linewidth}
    \centering
    \includegraphics[width=\linewidth]{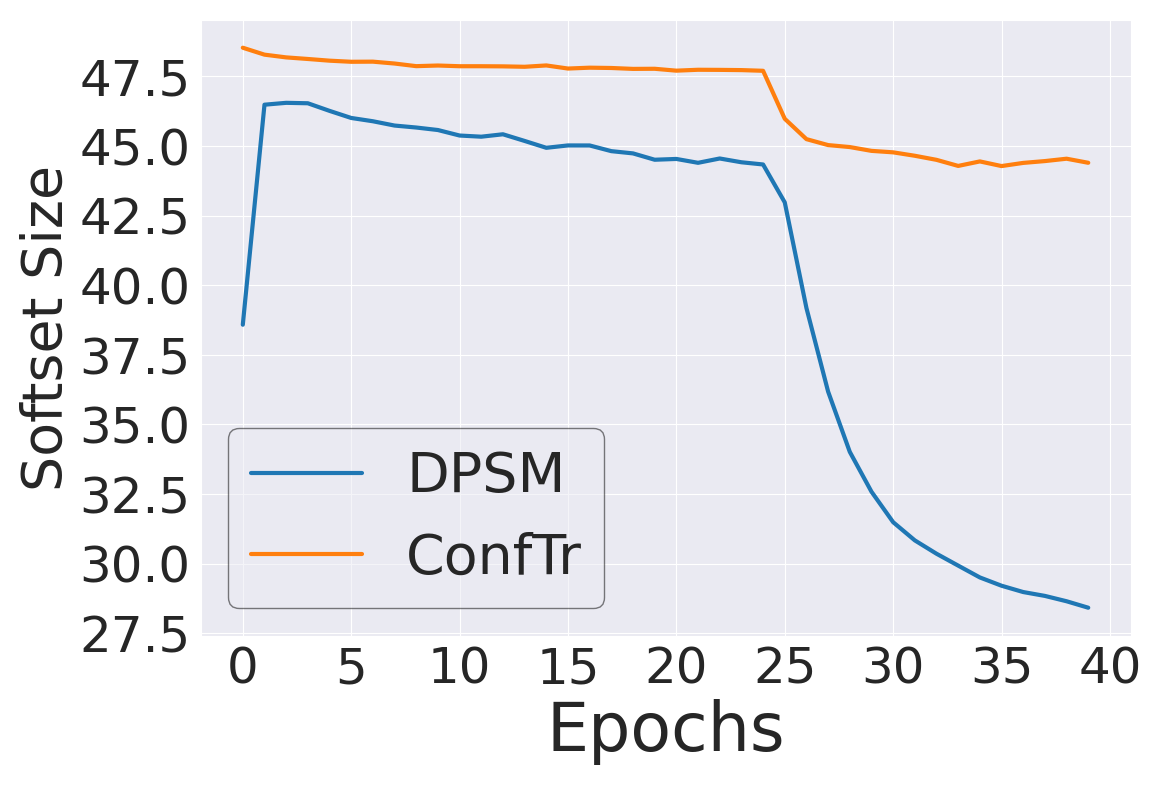}
    \end{minipage}%% 
    % \vspace{-0.2in}
    \caption{\textbf{Justification experiments for effectiveness of DPSM} trained with 40 epochs on CIFAR-100 using ResNet and HPS score. 
   \textbf{(a)} Upper level loss (i.e., a combination of
    classification loss and conformal alignment loss) in DPSM;
    \textbf{(b)} Lower level loss (i.e., QR loss) in DPSM;
    \textbf{(c)}: Estimation error between the $\widehat Q^n_f$ (quantiles evaluated on the whole training data) and  $\widehat q_f$ (quantiles evaluated in ConfTr or learned in DPSM on mini-batches);
    and \textbf{(d)}: Average soft set size of DPSM and ConfTr.}
    \label{fig:results_appendix}
\end{figure}

\begin{figure*}[!t]
    \centering
    \begin{minipage}[t]{0.33\linewidth}
    \centering
    \textbf{(a)} Estimation error of quantiles
    \includegraphics[width = \linewidth]{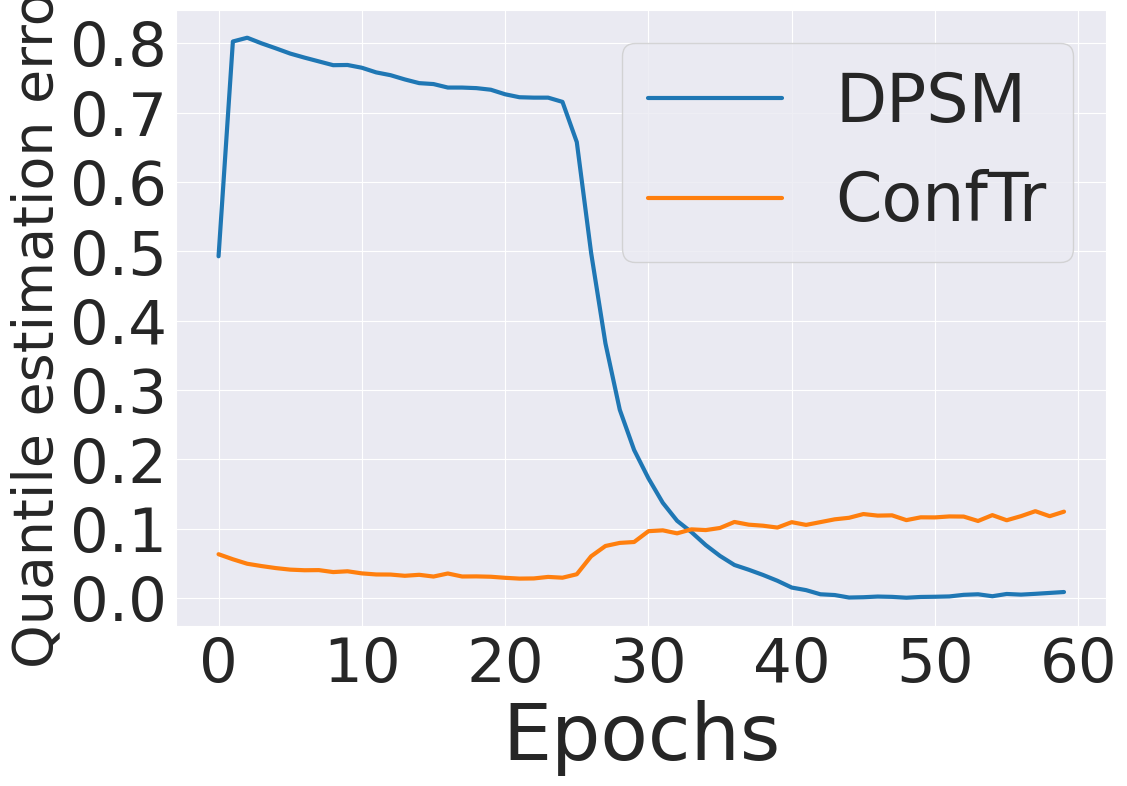}
    \end{minipage} 
    \begin{minipage}[t]{0.33\linewidth}
    \centering
    \textbf{(b)} Average soft set size
    \includegraphics[width=\linewidth]{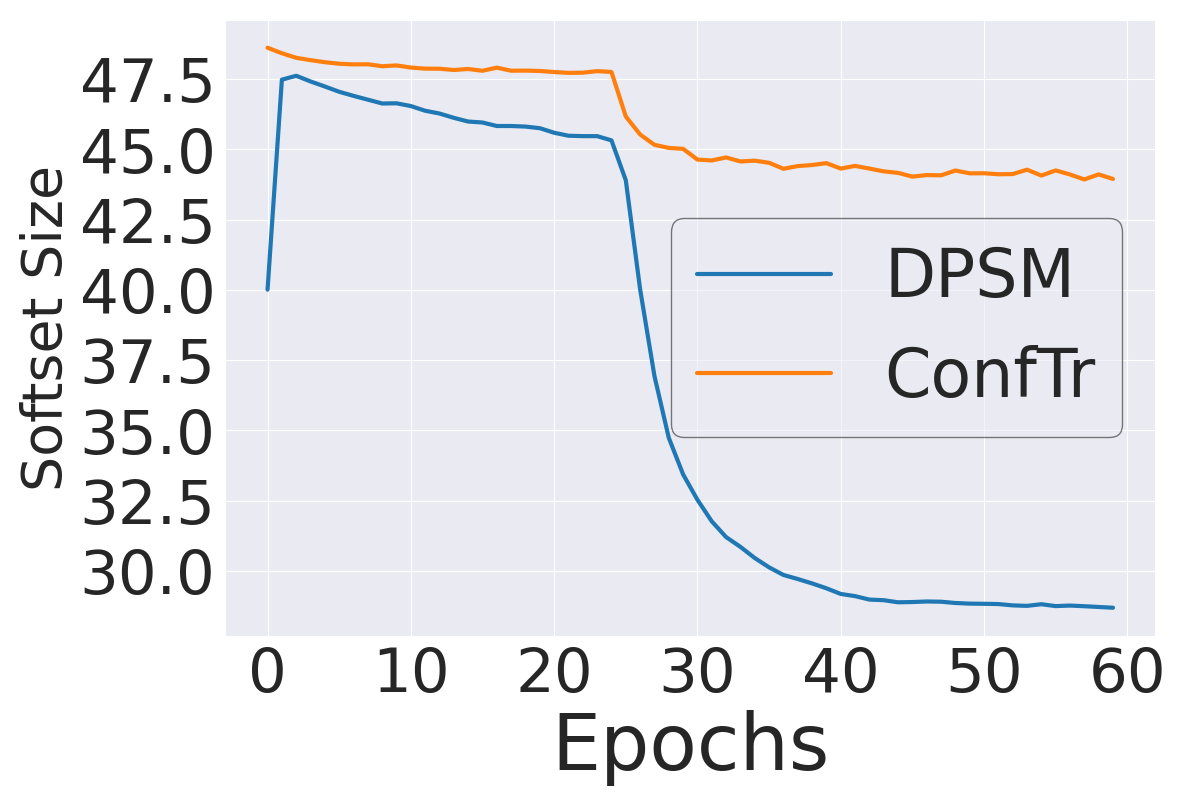}
    \end{minipage} 
    \begin{minipage}[t]{0.33\linewidth}
    \centering
    \textbf{(c)} Learning bound approximation
    \includegraphics[width=\linewidth]{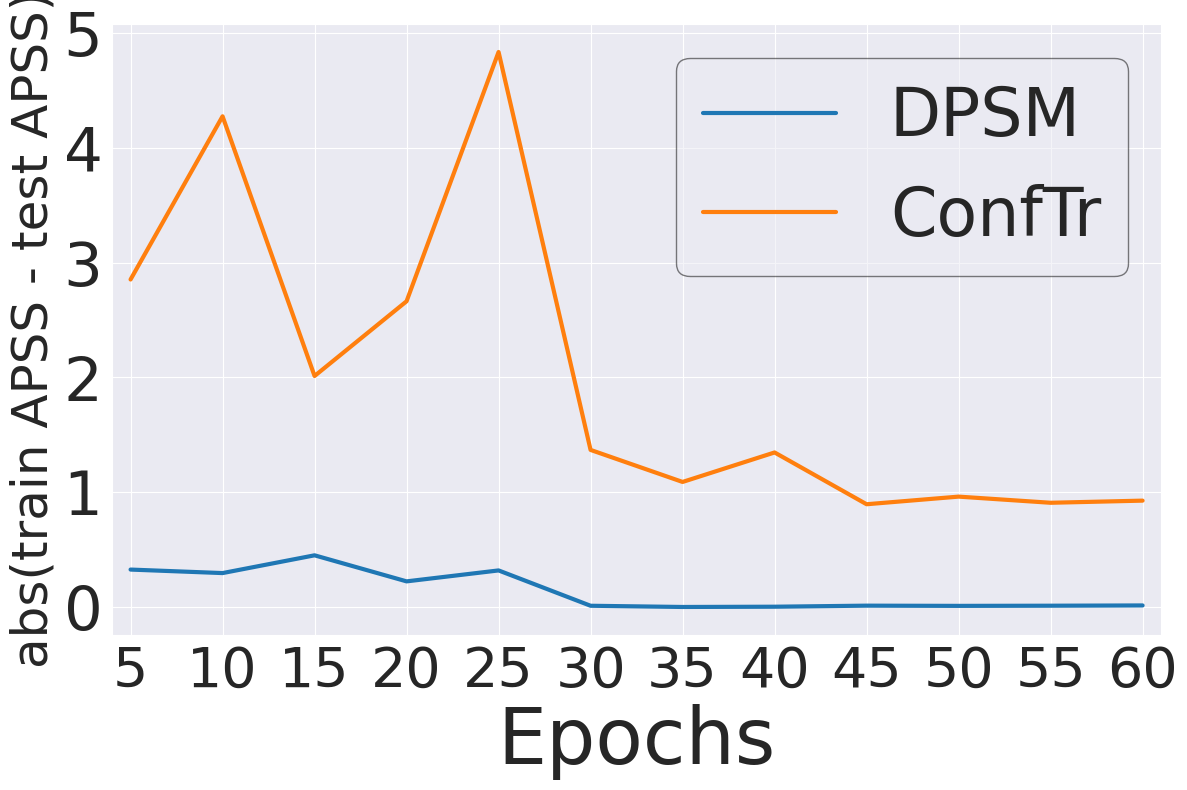}
    \end{minipage} 
    \vspace{-0.2in}
    \caption{
    \textbf{Justification experiments for the learning bound of DPSM }on CIFAR-100 using ResNet and HPS score. 
    \textbf{(a)} Estimation error between the $\widehat Q^n_f$ (dataset-level quantiles on training data) and $\widehat q_f$ (batch-level quantiles evaluated in ConfTr or learned in DPSM);
    \textbf{(b)} Average soft set size of DPSM and ConfTr (using Sigmoid function);
    \textbf{(c)} Approximated learning error comparison between DPSM and ConfTr, measured by their gaps between the training and testing APSS.
    }
    \label{fig:results_bound_appendix}
    \vspace{-3.0ex}
\end{figure*}

% \noindent \textbf{Learning bound of DPSM is much tighter than ConfTr.}
% We compare the conformal losses of ConfTr and DPSM during training in terms of the average soft set size, as shown in Subfigure (d) of Figure \ref{fig:results_appendix}.
% The conformal loss of DPSM is consistently smaller than the loss of ConfTr during training. Combining the empirical results of the smaller estimation error of quantiles (see Subfigure (c) of Figure \ref{fig:results_main}), we can conclude that the learning bound of DPSM is much tighter than the learning bound of ConfTr, providing the empirical verification of Theorem \ref{theorem:learning_bounds_SA} and \ref{theorem:learning_bound_DPSM}.

\noindent \textbf{Learning bound of DPSM is much tighter than ConfTr.}
To approximately compare the learning bounds of DPSM and ConfTr,
we compare the conformal alignment losses of ConfTr and DPSM during training in terms of the average soft set size, as shown in Figure \ref{fig:results_bound_appendix} (b).
The soft set size of DPSM is consistently smaller than that of ConfTr during training. Combining the empirical results of the smaller estimation error of quantiles from Figure \ref{fig:results_bound_appendix} (a), we can conclude that the learning bound of DPSM is much tighter than the learning bound of ConfTr, providing the empirical verification of Theorem \ref{theorem:learning_bound_DPSM} and \ref{theorem:learning_bounds_SA}.
Although learning bound cannot be empirically computed, we approximate it using a common strategy in ML literature \cite{yuan2019stagewise,yang2021exact}, which estimates generalization error by the absolute gap between training and test errors. 
For CP, we use APSS evaluated on train and test sets to approximate the learning errors. 
Specifically, for DPSM, at each iteration, we:
(i) compute APSS on the training set using the learned quantiles as thresholds. 
It includes optimization error since the learned quantiles are not optimal (true dataset-level quantiles);
(ii) compute the APSS on the testing set using the dataset-level quantiles as thresholds.
The gap between these two APSS values is employed as an approximation of the learning bound.
We apply the same strategy to the SA-based ConfTr, 
where the training APSS is computed using the quantiles evaluated on mini-batches from the training data, 
and the test APSS is computed using the dataset-level quantiles from the test data.
This comparison is shown in Figure~\ref{fig:results_bound_appendix} (c), which demonstrates that the approximated learning error is improved by DPSM.

\begin{figure}[ht]
    \centering
    \begin{minipage}[t]{0.48\linewidth}
    \centering
    \textbf{(a)} Bi-Lipschitz continuity of conformity score 
    \end{minipage} 
    \begin{minipage}[t]{0.48\linewidth}
    \centering
    \textbf{(b)} Strongly concavity of conformal loss
    \end{minipage} 
    \hfill
    \begin{minipage}[t]{0.48\linewidth}  
    \centering
    \includegraphics[width=.7\linewidth]{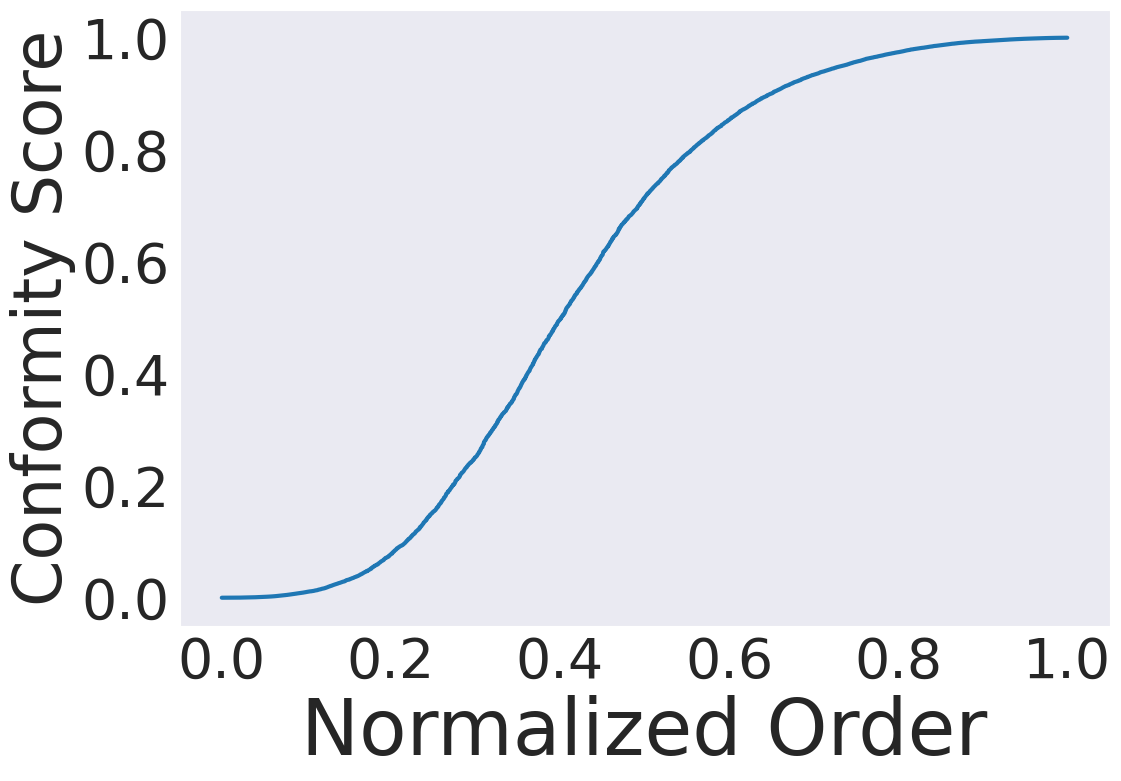}
    \end{minipage}
    \begin{minipage}[t]{0.49\linewidth}  
    \centering
    \includegraphics[width=.7\linewidth]{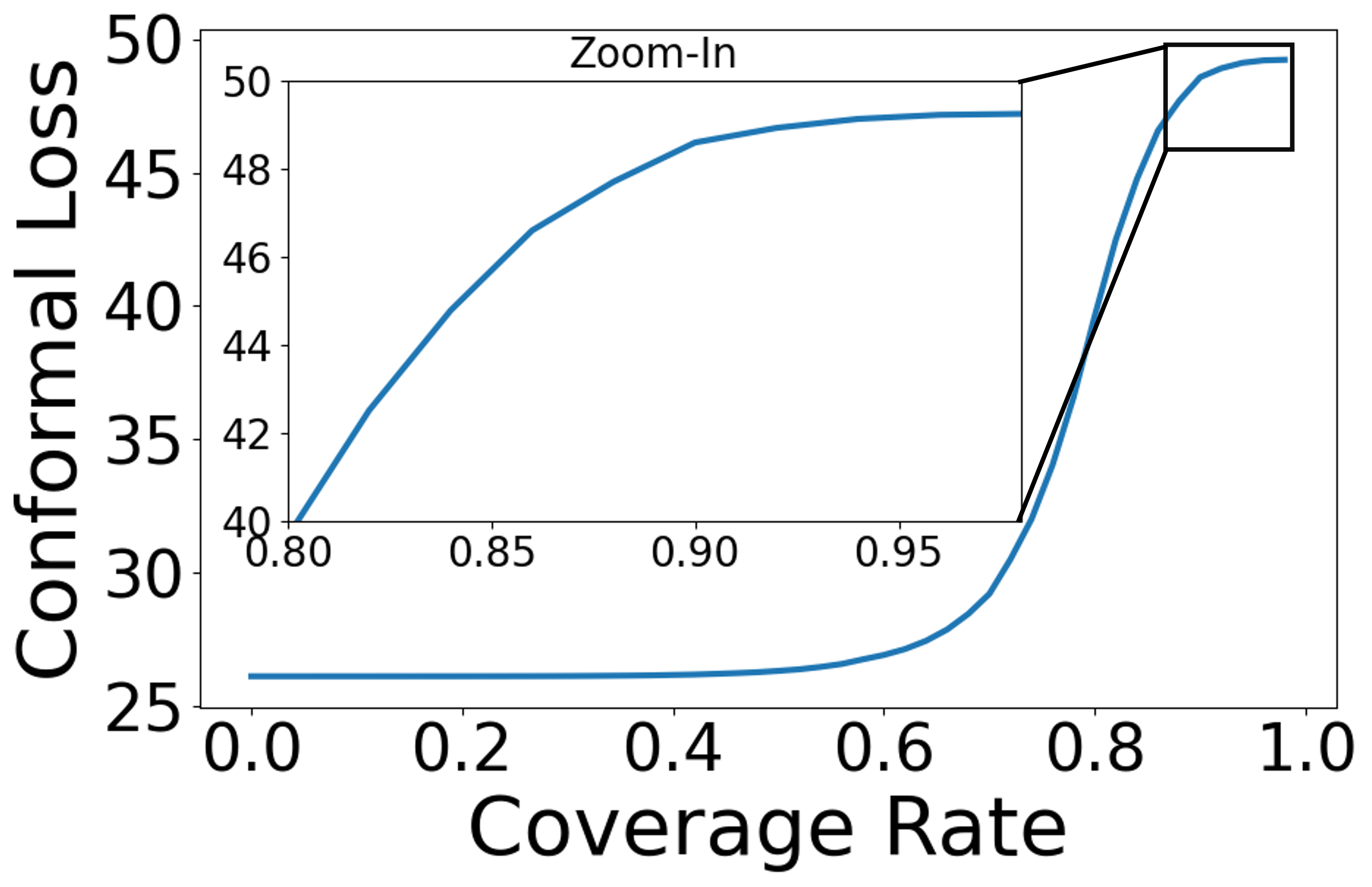}
    \end{minipage}
    % \vspace{-0.2in}
    \caption{\textbf{Verification studies} on CIFAR-100 with ResNet model using HPS scoring function on calibration dataset.
    \textbf{(a)}: HPS scores over corresponding normalized order produced by ConfTr. 
    The x-axis is the normalized order, the y-axis is the corresponding conformity score;
    \textbf{(b)}: The soft set size measure of ConfTr. The input coverage rate is from $[0.02, 0.98]$ with $0.02$ range with its zoom-in version where $1-\alpha$ is close to target coverage $0.9$;
    When coverage rate is close to $0.9$, the curve for the soft set size exhibits a concave shape.
    }
    \label{fig:sodt_set_size_comparisons_appendix}
\end{figure}

\noindent \textbf{Assumption \ref{assumption:bi_lipschitz} is empirically valid.} 
Figure \ref{fig:sodt_set_size_comparisons_appendix} (a) illustrates the conformity scores plotted against their corresponding normalized order. 
The x-axis represents the normalized order, while the y-axis represents conformity scores. 
From this figure, it is clear that the curve does not remain near the x-axis or y-axis, indicating that the gradient of conformity scores with respect to normalized index is both upper and lower bounded. 
This observation empirically supports the validity of Assumption \ref{assumption:bi_lipschitz}.

\noindent \textbf{Assumption \ref{assumption:straongly_concave} is empirically valid.}
Figure \ref{fig:sodt_set_size_comparisons_appendix} (b) visualizes the soft set size of ConfTr, with input as coverage rate $\in [0.02, 0.98]$ with range $0.02$.
When coverage rate approaches $0.9$, the curves of all methods exhibit a concave shape (zoom-in version also shown), providing empirical verification for Assumption \ref{assumption:straongly_concave}.

\subsection{Additional Experiments for Conditional Coverage }
\label{appendix:subsec:additional_exps_conditional}

\begin{table*}[!t]
\centering
\caption{
\textbf{The WSC ($\uparrow$ better), SSCV ($\downarrow$ better) and CovGap ($\downarrow$ better)} of all methods on CIFAR-100 with HPS score: 
The best results are in \textbf{bold}.
These results show that DPSM achieves the best performance for class-conditional coverage (the smallest CovGap). 
For size-stratified coverage (SSCV), DPSM has a worse measure compared with CE and CUT, but is better than ConfTr.
For WSC, CUT and DPSM have comparable performance.
}
\label{tab:cvg_set_hps_conditional}
\resizebox{\textwidth}{!}{
\begin{NiceTabular}{@{}c|cccc|cccc@{}}
\toprule
\multirow{2}{*}{Measures} & \multicolumn{4}{c|}{DenseNet} & \multicolumn{4}{c }{ResNet} \\ 
\cmidrule(lr){2-5} \cmidrule(lr){6-9}
& CE & CUT & ConfTr & DPSM & CE & CUT & ConfTr & DPSM  \\ 
\midrule
WSC 
& 0.88 $\pm$ 0.016 & \textbf{0.90 $\pm$ 0.022} & 0.88 $\pm$ 0.018 & 0.89 $\pm$ 0.011
& 0.88 $\pm$ 0.012  & 0.88 $\pm$ 0.020 & 0.88 $\pm$ 0.020  & \textbf{0.89 $\pm$ 0.019}
\\ 
SSCV  
& 0.12 $\pm$ 0.024 & \textbf{0.09 $\pm$ 0.019} & 0.21 $\pm$ 0.061 & 0.17 $\pm$ 0.034
& \textbf{0.09 $\pm$ 0.018} & 0.11 $\pm$ 0.017 & 0.14 $\pm$ 0.022 & 0.12 $\pm$ 0.019
\\
CovGap  
& 4.54 $\pm$ 0.49 & 5.20 $\pm$ 0.29 & 4.56 $\pm$ 0.28 & \textbf{4.43 $\pm$ 0.41}
& 4.71 $\pm$ 0.38 & 4.71 $\pm$ 0.32 & 4.70 $\pm$ 0.34 & \textbf{4.69 $\pm$ 0.26}
\\ 
\bottomrule
\end{NiceTabular}
}
\end{table*}

\begin{figure*}[!t]
    \centering
    \begin{minipage}[t]{0.32\linewidth}
    \centering
    \textbf{(a)} Class-wise coverage and size (DPSM vs CE)
    \includegraphics[width = \linewidth]{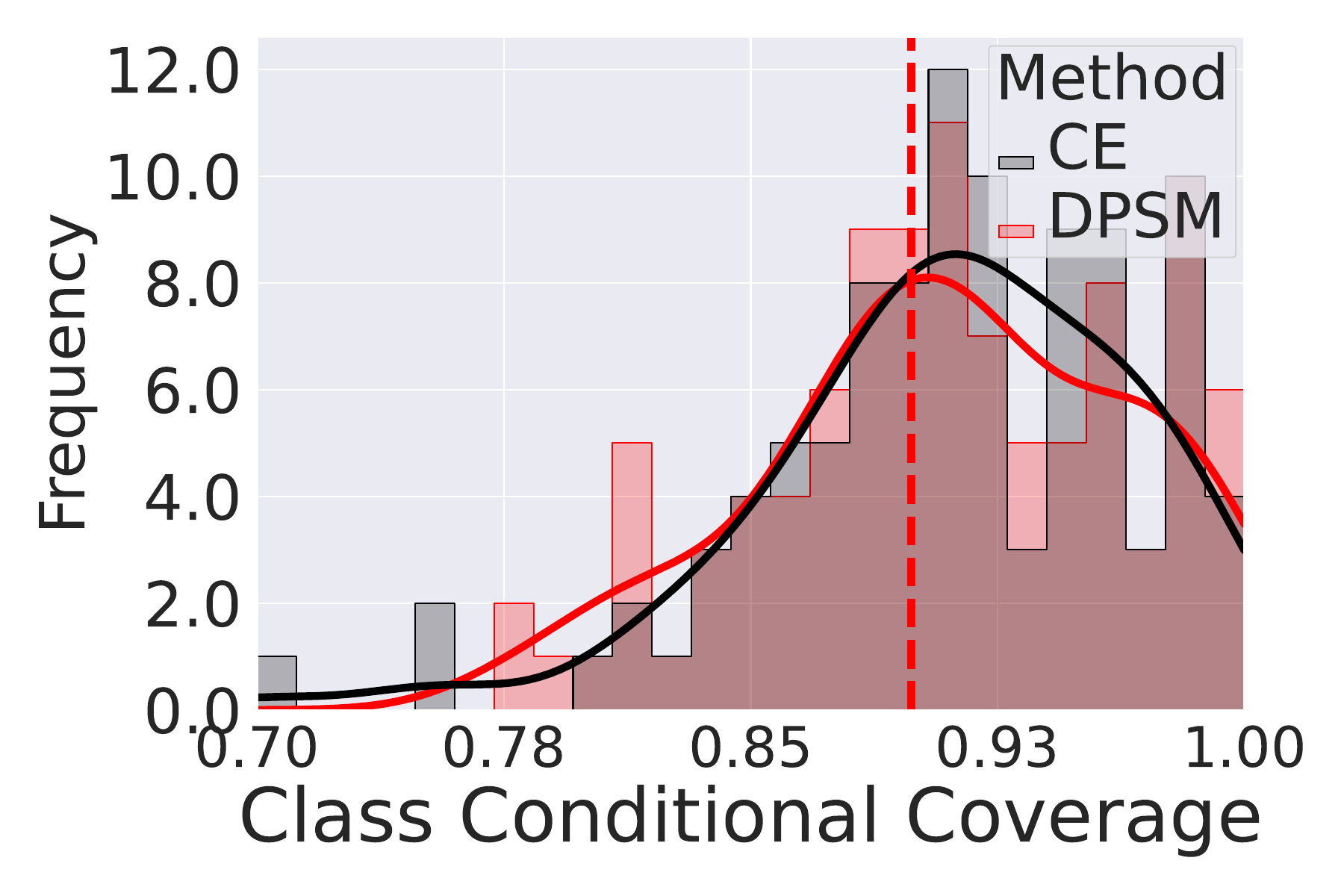}
    \\
    \includegraphics[width = \linewidth]{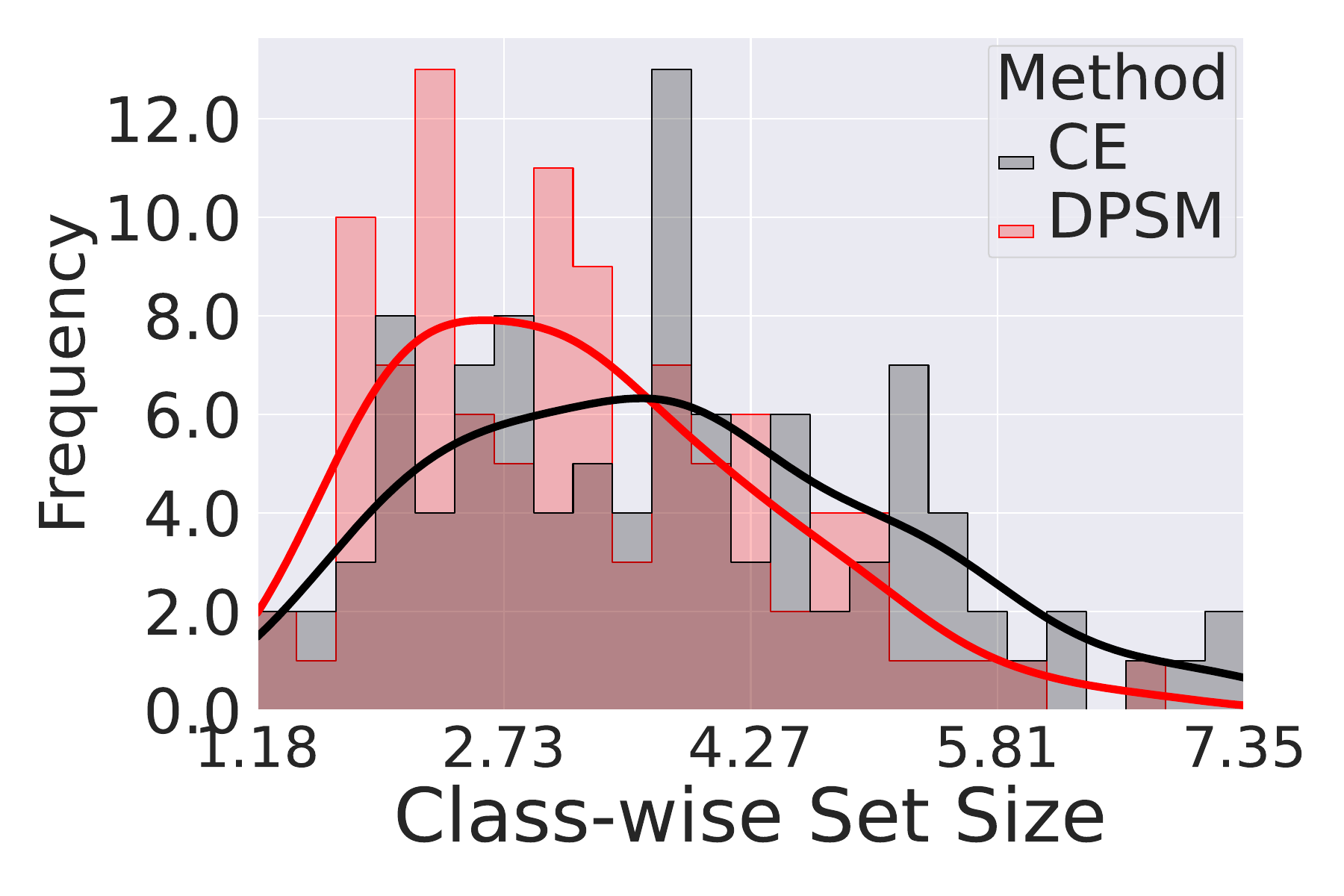}
    \end{minipage} 
    \begin{minipage}[t]{0.32\linewidth}
    \centering
    \textbf{(b)} Class-wise coverage and size (DPSM vs CUT)
    \includegraphics[width=\linewidth]{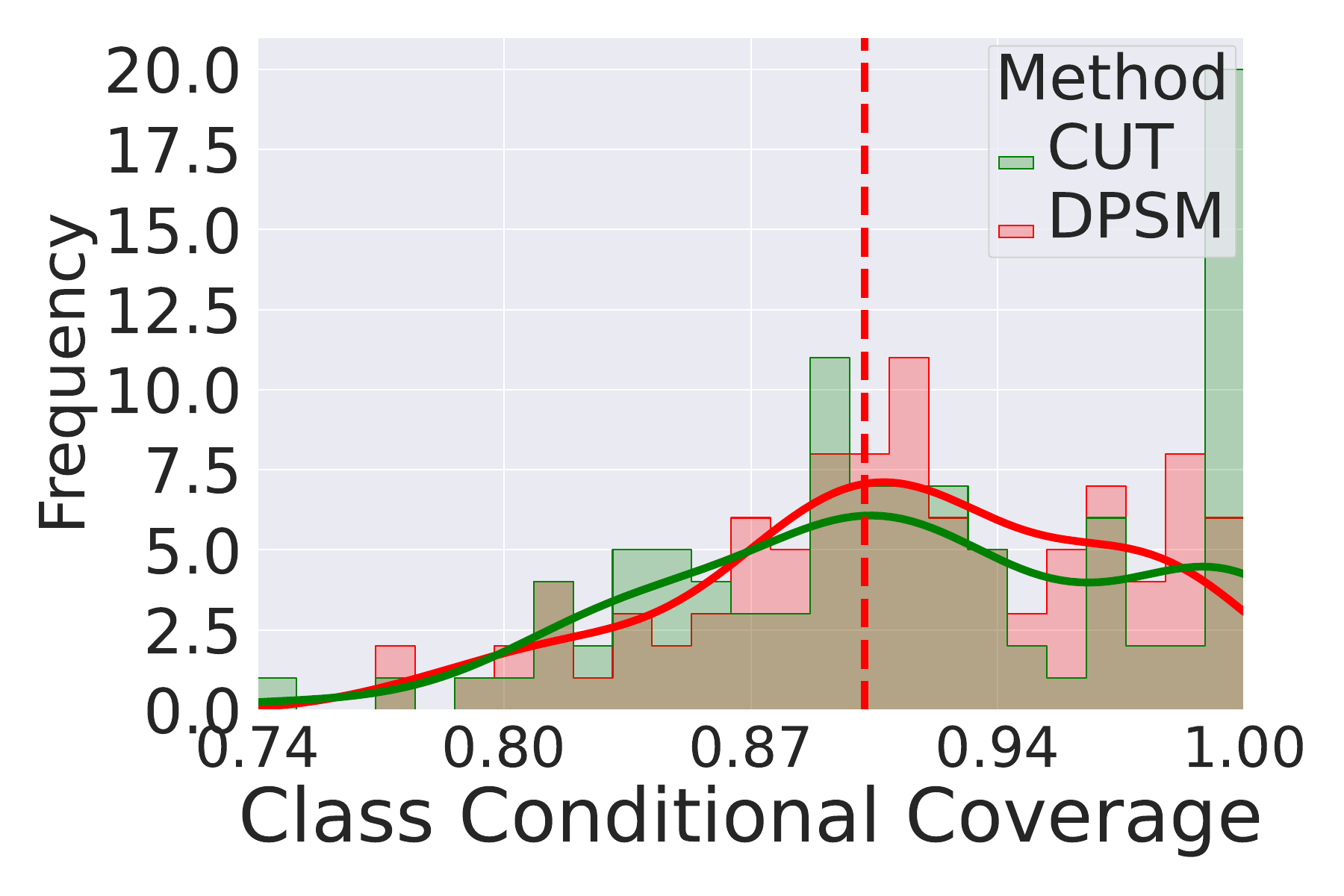}
    \\
    \includegraphics[width=\linewidth]{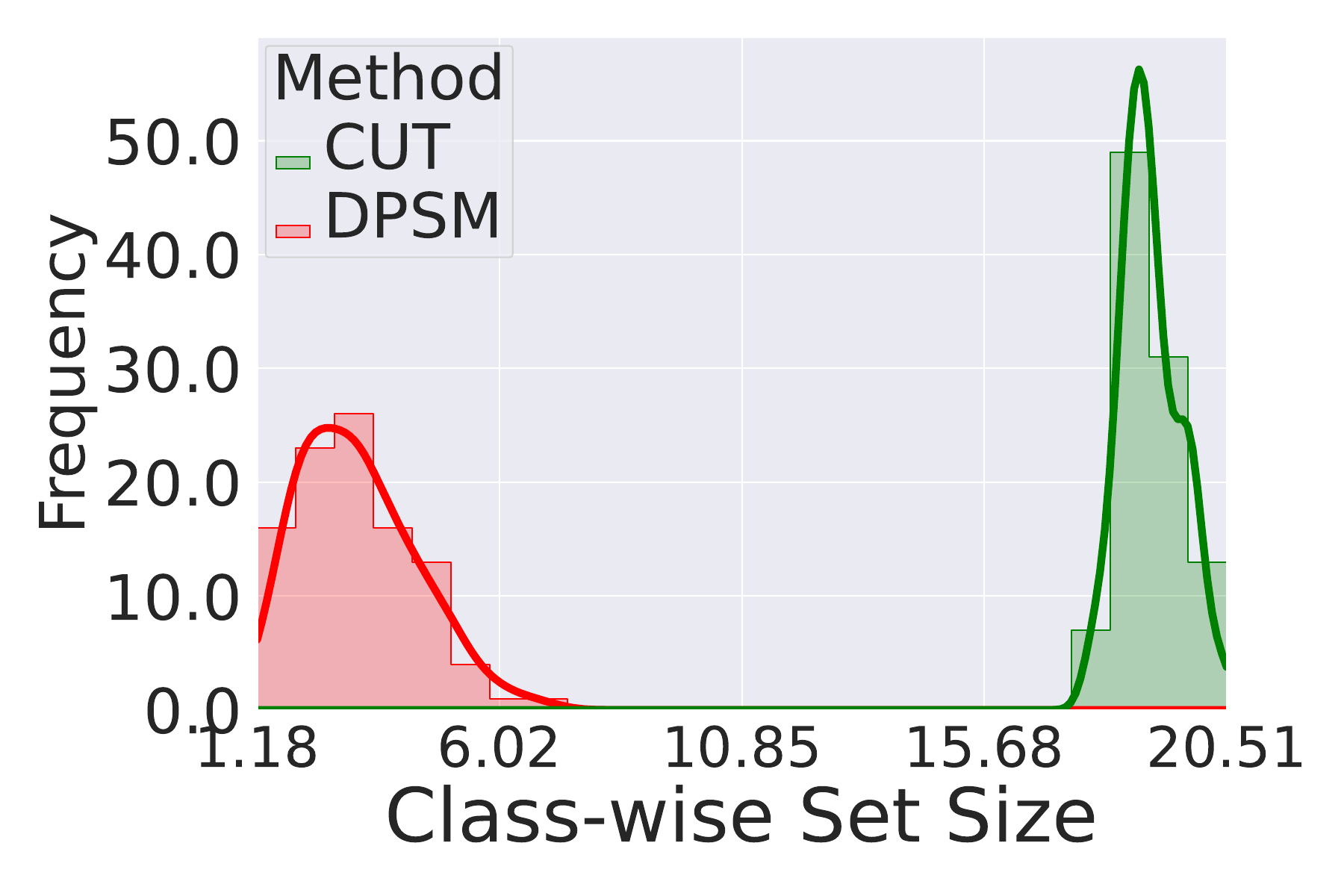}
    \end{minipage} 
    \begin{minipage}[t]{0.32\linewidth}
    \centering
    \textbf{(c)} Class-wise coverage and size (DPSM vs Conftr)
    \includegraphics[width=\linewidth]{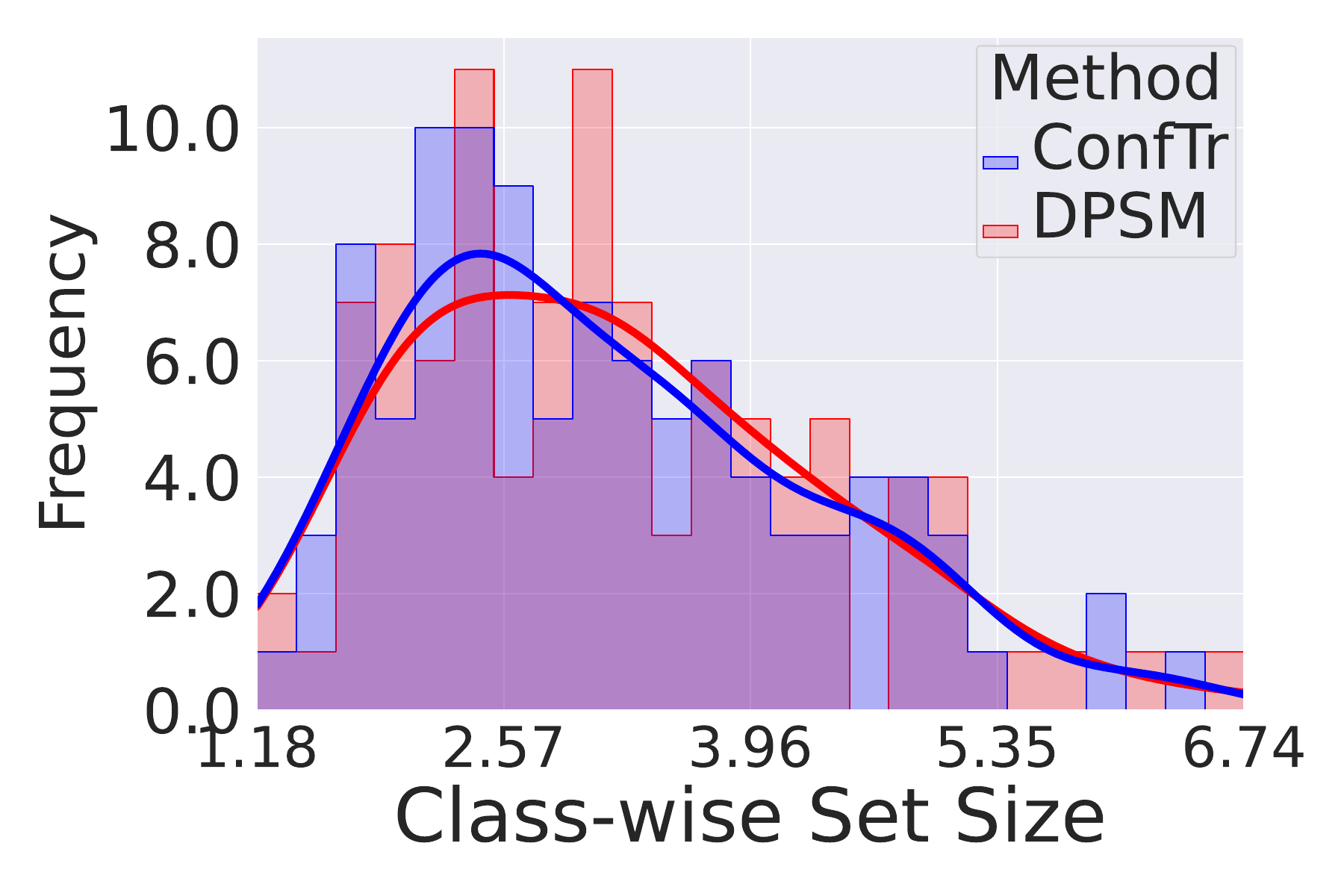}
    \\
    \includegraphics[width=\linewidth]{rebuttal_figure/Class_histogram_SizeHist_conftr.pdf}
    \end{minipage} 
    \vspace{-0.15in}
    \caption{
    \textbf{Class conditional coverage and class-wise prediction set size} of all methods on CIFAR-100 with DenseNet and HPS. 
     To better compare the class conditional coverage, we compare the class-conditional coverage between DPSM and 3 baselines in (a), (b) and (c) separately. 
     DPSM shows a bit more concentration in terms of class-wise coverage to the nominal coverage ($90\%$) and smaller prediction set size.
    }
    \label{fig:results_class}
    \vspace{-2.0ex}
\end{figure*}

\begin{figure*}[!t]
    \centering
    \begin{minipage}[t]{0.32\linewidth}
    \centering
    \textbf{(a)} Class-wise coverage and size (DPSM vs CE)
    \includegraphics[width = \linewidth]{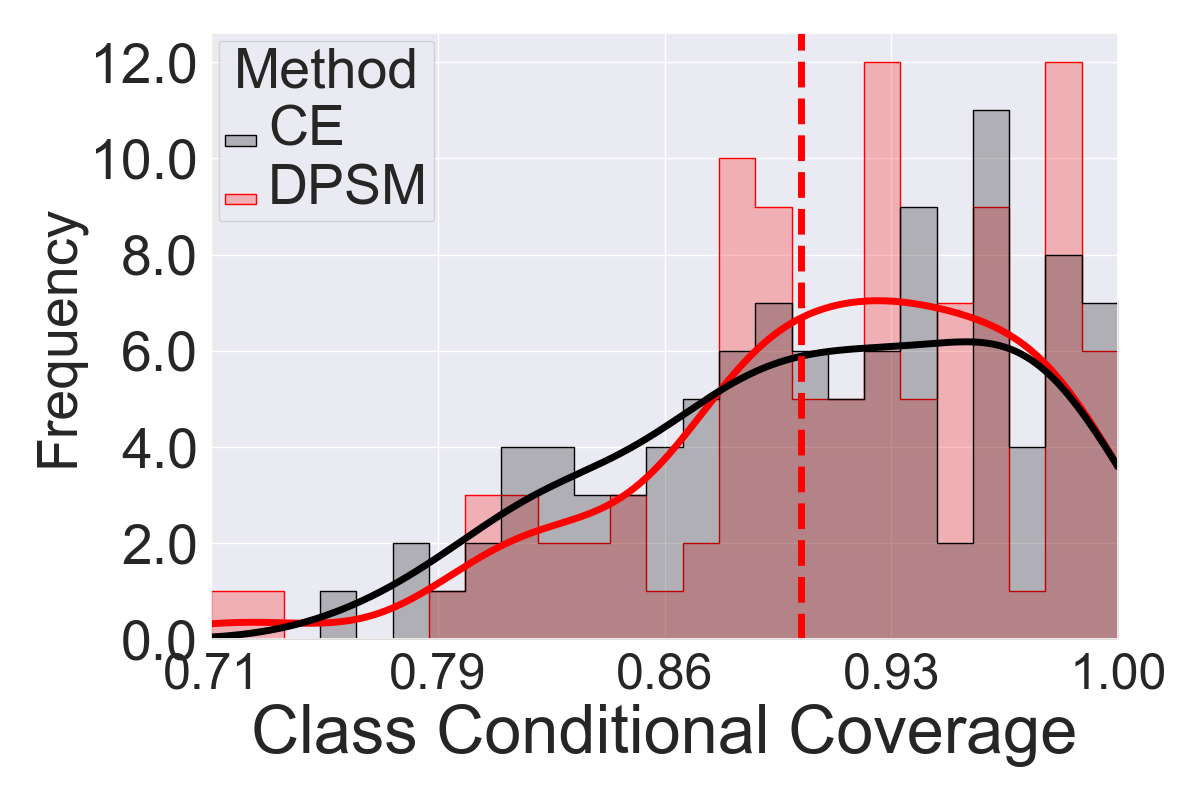}
    \\
    \includegraphics[width = \linewidth]{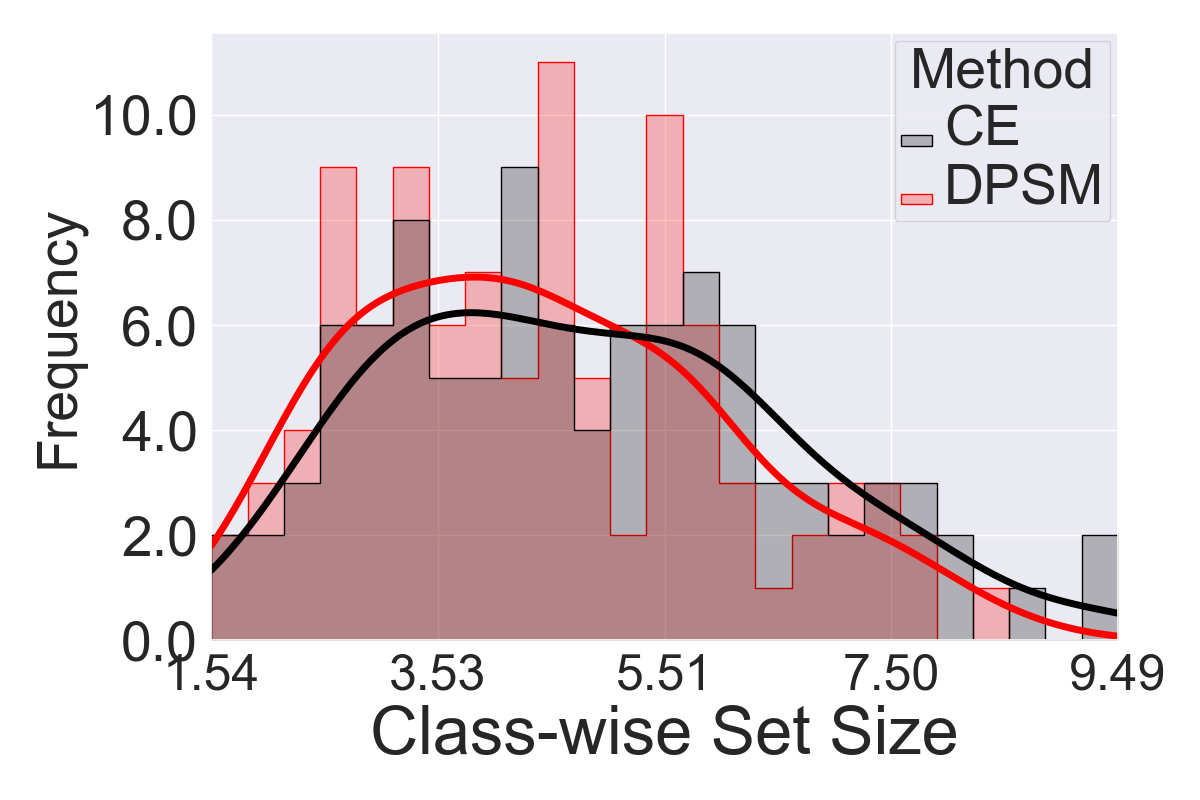}
    \end{minipage} 
    \begin{minipage}[t]{0.32\linewidth}
    \centering
    \textbf{(b)} Class-wise coverage and size (DPSM vs CUT)
    \includegraphics[width=\linewidth]{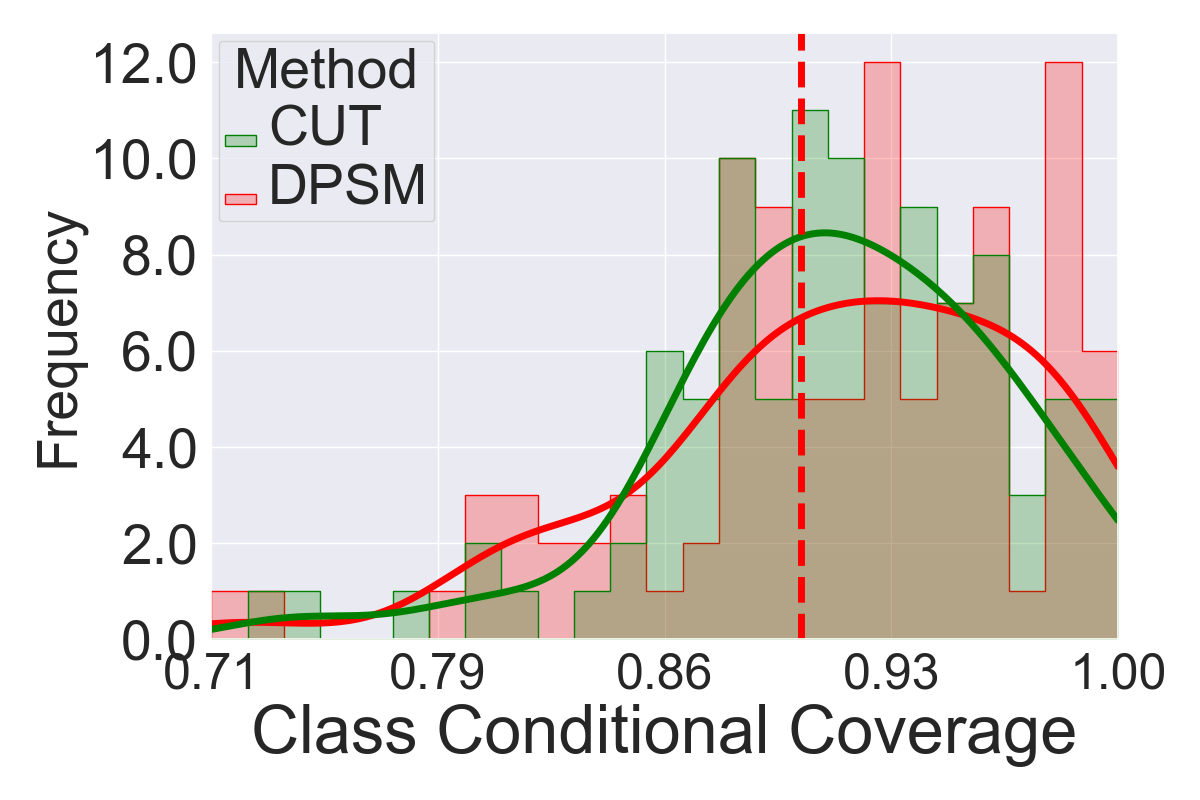}
    \\
    \includegraphics[width=\linewidth]{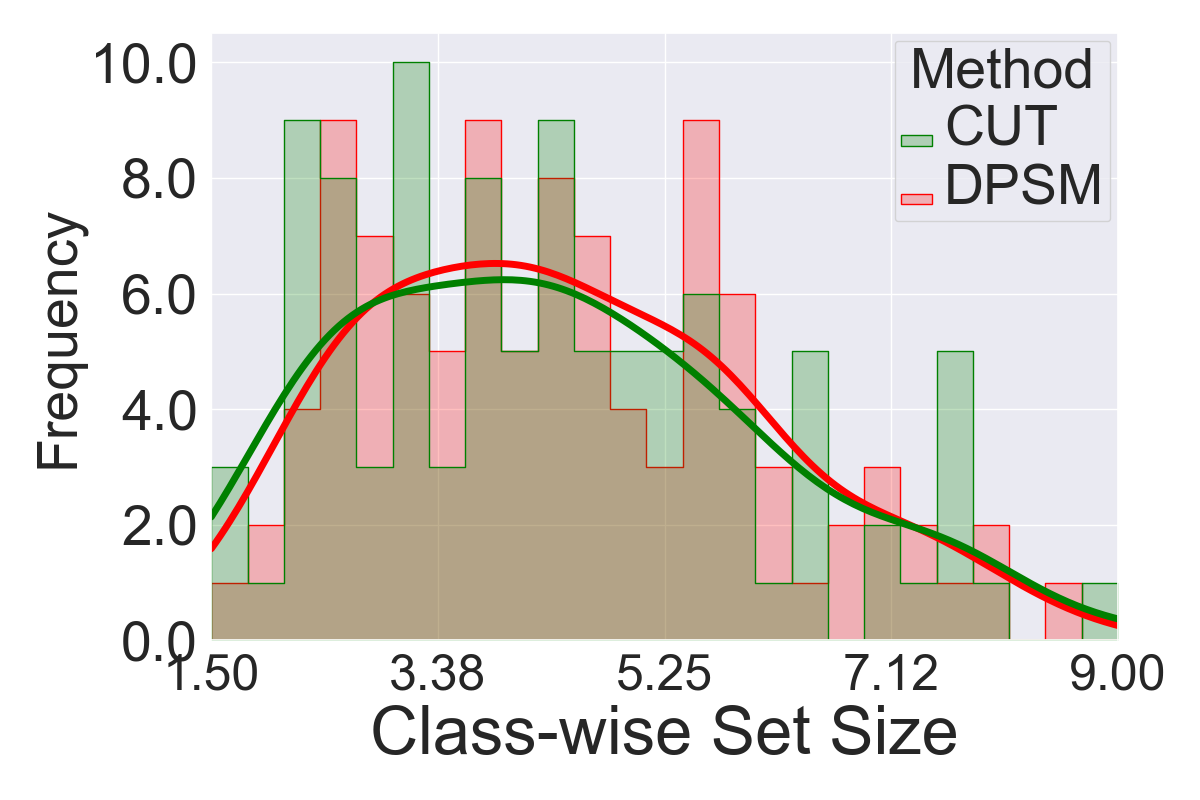}
    \end{minipage} 
    \begin{minipage}[t]{0.32\linewidth}
    \centering
    \textbf{(c)} Class-wise coverage and size (DPSM vs Conftr)
    \includegraphics[width=\linewidth]{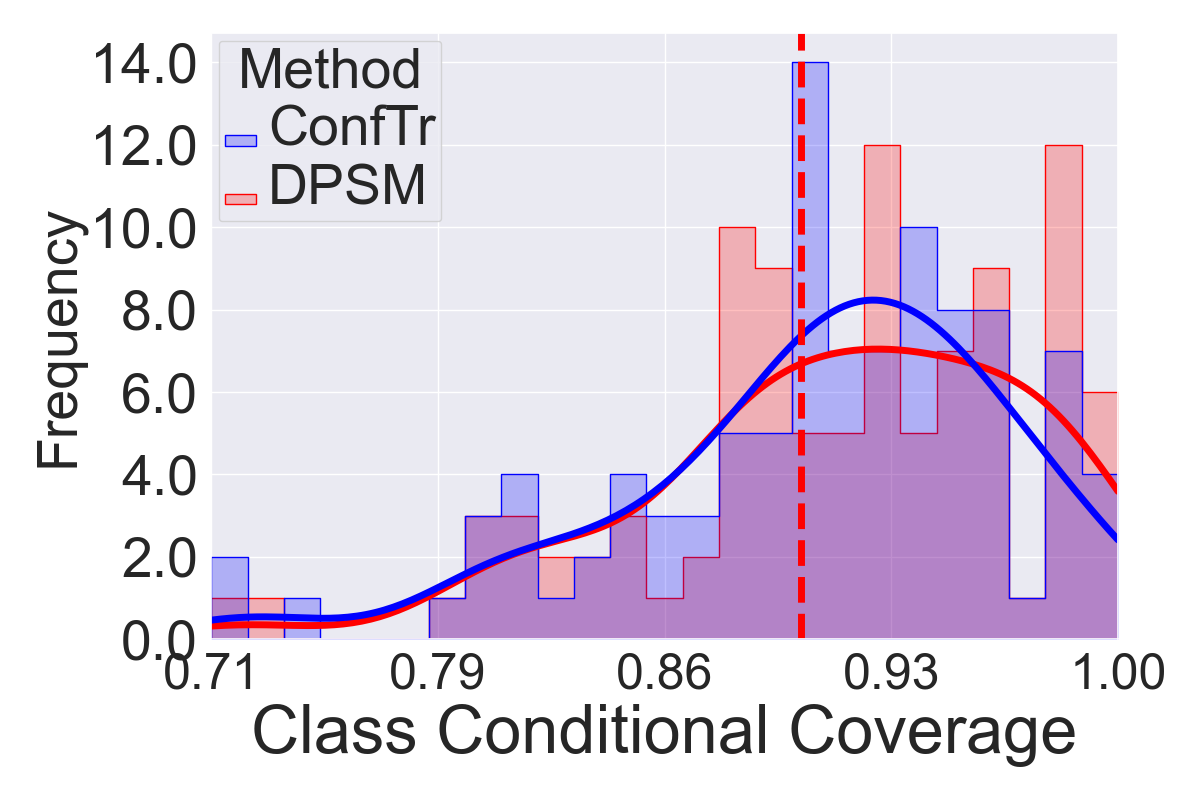}
    \\
    \includegraphics[width=\linewidth]{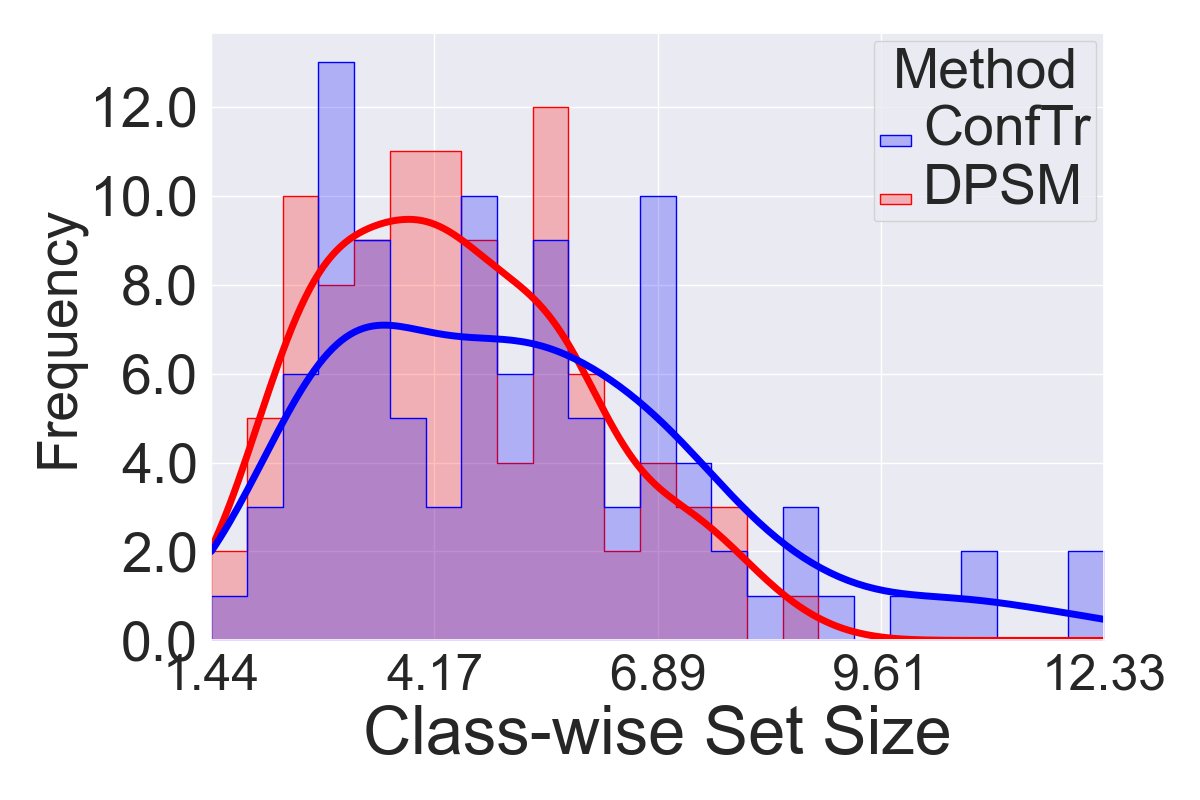}
    \end{minipage} 
    \vspace{-0.15in}
    \caption{
    \textbf{Class conditional coverage and class-wise prediction set size} of all methods on CIFAR-100 with ResNet and HPS. 
     To better compare the class conditional coverage, we compare the class-conditional coverage between DPSM and 3 baselines in (a), (b) and (c) separately. 
     DPSM shows a bit more concentration in terms of class-wise coverage to the nominal coverage ($90\%$) and smaller prediction set size.
    }
    \label{fig:results_class_resnet}
    \vspace{-2.0ex}
\end{figure*}

\textbf{DPSM achieves comparable conditional coverage performance and strong class-conditional coverage performance compared to baselines}.
To evaluate the impact of prediction set size reduction on the conditional coverage performance of DPSM, we report three metrics on CIFAR-100 using the HPS score in Table \ref{tab:cvg_set_hps_conditional}:
(i) WSC (Worst-Slab Coverage, $\uparrow$ better), introduced in \cite{romano2020classification};
(ii) SSCV (Size-Stratified Coverage, $\downarrow$ better), from \cite{angelopoulos2021uncertainty}; 
and (iii) CovGap (Average Class Coverage Gap, $\downarrow$ better), proposed in \cite{ding2024class} to measure class-conditional coverage.
The results show that DPSM achieves the best class-conditional coverage, reflected by the lowest CovGap.
For SSCV, DPSM performs slightly worse than CE and CUT, but better than ConfTr.
In terms of WSC, DPSM and CUT achieve comparable performance.
We further visualize the distribution of class-conditional coverage and class-wise average prediction set size in Figure \ref{fig:results_class} and \ref{fig:results_class_resnet}, providing fine-grained insights into class-conditional performance.
DPSM demonstrates slightly more concentrated class-wise coverage and generally smaller class-wise prediction set sizes.

\end{document}